\newcommand{\bE}{\mathbb{E}}
\newcommand{\R}{\mathbb{R}}
\renewcommand{\vec}[1]{\mathbf{#1}}
\newcommand{\TODO}[1]{{\color{red}[TODO: #1]}}
\newcommand{\vx}{\vec{x}}
\newcommand{\vy}{\vec{y}}
\newcommand{\E}{\bE}      
\newcommand{\TV}{{\bf TV}}
\newtheorem{theorem}{Theorem}
\newtheorem{lemma}{Lemma}
\theoremstyle{definition}
\newtheorem{remark}{Remark}
\begin{document}

\title{Multidimensional Scaling: Approximation and Complexity}
\author{Erik Demaine $^{1}$}
\address{$^1$Computer Science and Artificial Intelligence Laboratory, MIT, Cambridge, MA, USA}
\author{Adam Hesterberg $^{2}$}
\address{$^2$John A. Paulson School of Engineering and Applied Sciences, Harvard University, Cambridge, MA, USA}
\author{Frederic Koehler $^3$}
\address{$^3$Department of Mathematics, MIT, Cambridge, MA, USA}
\author{Jayson Lynch $^4$}
\address{$^4$Cheriton School of Computer Science, University of Waterloo, Waterloo, ON, Canada}
\author{John Urschel $^{3,\ast}$}
\address{$^{\ast}$Corresponding author}

\maketitle

\begin{abstract}
 Metric Multidimensional scaling (MDS) is a classical method for generating
    meaningful (non-linear) low-dimensional embeddings of high-dimensional data.
    MDS has a long history in the statistics, machine learning, and graph drawing
    communities. In particular, the Kamada-Kawai force-directed graph drawing method is equivalent to MDS and is one of the most popular ways in practice to embed graphs into low dimensions. Despite its ubiquity, our theoretical understanding of MDS remains limited as its objective function is highly non-convex. In this paper, we prove that minimizing the Kamada-Kawai objective is NP-hard and give a provable approximation algorithm for optimizing it, which in particular is a PTAS on low-diameter graphs. We supplement this result with experiments suggesting possible connections between our greedy approximation algorithm and gradient-based methods.
\end{abstract}

\section{Introduction}

Given the distances between data points living in a high dimensional space, how can we meaningfully visualize their relationships? This is a fundamental task in exploratory data analysis for which a variety of different approaches have been proposed. Many of these approaches seek to visualize high-dimensional data by embedding it into lower dimensional, e.g. two or three-dimensional, space. 

Metric multidimensional scaling (MDS or mMDS) \cite{kruskal1964multidimensional,kruskal1978multidimensional} is a classical approach to this problem which attempts to find a low-dimensional embedding that accurately represents the \emph{distances between points}. Originally motivated by applications in psychometrics, MDS has now been recognized as a fundamental tool for data analysis across a broad range of disciplines. See the texts \cite{kruskal1978multidimensional,borg2005modern} for more details, including a discussion of applications to data from scientific, economic, political, and other domains. Compared to other classical visualization tools like PCA\footnote{In the literature, PCA is sometimes referred to as \emph{classical multidimensional scaling}, in contrast to \emph{metric multidimensional scaling}, which we study in this work.}, metric multidimensional scaling has the advantage that it 1) is not restricted to linear projections of the data, i.e. it is nonlinear, and 2) is applicable to data from an arbitrary \emph{metric space}, rather than just Euclidean space. Because of this versatility, MDS has also become one of the most popular algorithms in the field of \emph{graph drawing}, where the goal is to visualize relationships between nodes (e.g. people in a social network). In this context, MDS was independently proposed by Kamada and Kawai \cite{kamada1989algorithm} as a \emph{force-directed graph drawing} method.

In this paper, we consider the algorithmic problem of computing the optimal embedding under the MDS/Kamada-Kawai objective. The Kamada-Kawai objective is to minimize the following energy/stress functional $E : \mathbb{R}^{rn} \to \mathbb{R}$ 
\begin{equation}\label{eqn:kk}
E(\vx_1,\ldots,\vx_n) = \sum_{i < j} \left(\frac{\|\vx_i - \vx_j\|}{d(i,j)} - 1\right)^2, 
\end{equation}
which corresponds to the physical situation where $\vx_1,\ldots,\vx_n \in \mathbb{R}^r$ are particles and for each $i \ne j$, particles $\vx_i$ and $\vx_j$ are connected by an idealized spring with equilibrium length $d(i,j)$ following Hooke's law with spring constant $k_{ij} = \frac{1}{d(i,j)^2}$. In applications to visualization, the choice of dimension is often small, i.e. $r = 1,2,3$. We also note that in \eqref{eqn:kk} the terms in the sum are sometimes re-weighted with vertex or edge weights, which we discuss in more detail later. 

In practice, the MDS/Kamada-Kawai objective \eqref{eqn:kk} is optimized via a heuristic procedure like gradient descent \cite{kruskal1964nonmetric,zheng2018graph} or stress majorization \cite{de1977applications,gansner2004graph}. Because the objective is non-convex, these algorithms may not reach the global minimum, but instead may terminate at approximate critical points of the objective function. Heuristics such as restarting an algorithm from different initializations and using modified step size schedules have been proposed to improve the quality of results. In practice, these heuristic methods do seem to work well for the Kamada-Kawai objective and are implemented in popular packages like \textsc{graphviz} \cite{ellson2001graphviz} and the \textsc{smacof} package in R.

\subsection{Our Results}
In this work, we revisit this problem from an approximation algorithms perspective. First, we resolve the computational complexity of minimizing \eqref{eqn:kk} by proving that finding the global minimum is NP-hard, even for graph metrics (where the metric is the shortest path distance on a graph).
Consider the decision version of stress minimization over graph metrics, which we formally define below: \\

 \noindent{\bf STRESS MINIMIZATION} \\
{\bf Input:} Graph $G = ([n],E)$, $r \in \mathbb{N}$, $L \ge 0$. \\
{\bf Output:} TRUE if there exists $\vx = (\vx_1,\ldots,\vx_n) \in \mathbb{R}^{nr}$ such that $E(\vx) \le L$;\\ $_{ }\qquad \qquad$ FALSE otherwise. \\

\begin{theorem}\label{thm:hardness-intro}
There exists a polynomial $p(n)$ such that the following gap version of {\bf STRESS MINIMIZATION} in dimension $r = 1$ is NP-hard: given an input graph $G$ with $n$ vertices and $L > 0$, return TRUE if there exists $\vx$ such that $E(\vx) \le L$ and return FALSE if for every $\vx$, $E(\vx) \ge L + 1/p(n)$. Furthermore, the problem is hard even restricted to input graphs with diameter bounded by an absolute constant.
\end{theorem}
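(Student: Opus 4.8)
The plan is to reduce from a strongly NP-hard, bounded-degree variant of \textsc{Max-Cut} — for the balance reasons explained below it is really a \textsc{Max-Bisection} instance — since strong NP-hardness is exactly what lets us convert an integrality gap of $1$ in the combinatorial problem into an energy gap of order $1/\poly(n)$. Given such an instance on a graph $H$, one would build an (unweighted) graph $G$ of bounded diameter together with a constant-size family of ``blueprint'' $1$D configurations, designed so that: (i) a collection of high-multiplicity \emph{anchor/hub} cliques is pinned, up to $o(1)$ error, to a constant number of prescribed real locations, and they are adjacent to essentially everything so that $\operatorname{diam}(G)=O(1)$; (ii) each vertex of $H$ is represented by a \emph{variable vertex} that, in any near-optimal embedding, must sit within $\eps$ of one of exactly two anchor locations $p_0<p_1$ with $|p_1-p_0|$ a tuned constant. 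Edges among the variable vertices are placed according to $H$; since any two variable vertices are then at graph-distance exactly $2$ (via a hub), a pair $(u,v)$ contributes to $E$ an amount that, up to $o(1/\poly)$, depends only on whether $u,v$ land at the same or opposite anchors, with an extra penalty when they are $H$-adjacent and monochromatic. The weights we need are realized purely by \emph{blow-ups} (replacing a vertex by polynomially many identical copies), which keeps $G$ unweighted and of polynomial size.

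Summing these pair contributions shows that the energy of a blueprint configuration equals a fixed constant plus a $\Theta(n^2)$ ``balance'' term (from the $\binom{n}{2}-|E(H)|$ non-adjacent variable pairs) plus a $\Theta(n)$ ``cut'' term; the balance term forces near-perfect bisection, and among bisections the cut term is minimized exactly by a maximum bisection of $H$. Thus, on a YES instance the blueprint for an optimal bisection, with the $O(1)$ inter-cluster gap parameters then optimized, achieves $E(\vx)\le L$ for the threshold $L$ we output; on a NO instance every bisection misses at least one more edge, so every blueprint configuration has energy at least $L+c$ for an absolute constant $c>0$ coming from the one extra mismatched pair. The remaining work is a \emph{rigidity lemma}: any configuration $\vx$ with $E(\vx)\le L+1/p(n)$ must be $\eps$-close, coordinate by coordinate, to some blueprint configuration, so its energy is within $O(\eps^2)$ of the blueprint value. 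Choosing the blow-up multiplicities (hence $n$, hence the forced $\eps$) so that $O(\eps^2)<c/2$ then gives $E(\vx)\ge L+c/2\ge L+1/p(n)$ for a suitable polynomial $p$, and the reduction is polynomial-time because all multiplicities are $\poly(n)$ (here strong NP-hardness of the source is used).

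The main obstacle is the rigidity lemma, i.e. ruling out ``cheating'' continuous embeddings that exploit the non-convexity of $E$ to undercut every integral bisection. I would prove it in two layers: first a counting/robustness bound showing that a vertex displaced by $\Omega(1)$ from all anchor locations violates $\Omega(1)$ of the stiff springs attached to the high-multiplicity hubs, each costing $\Omega(1)$, so below the threshold the coarse cluster pattern is forced; second, a local strong-convexity estimate showing that, once the cluster pattern (the assignment of variable vertices to $p_0/p_1$ and of anchors to their targets) is fixed, $E$ restricted to the remaining constantly-many ``gap'' degrees of freedom has a positive-definite Hessian and a unique minimizer, so its value is pinned down up to $O(\eps^2)$ by the combinatorial pattern alone. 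A secondary point requiring care is engineering the clause/anchor gadget so that the induced combinatorial objective is genuinely (an affine function of) \textsc{Max-Bisection} of $H$ with the balance regularizer not swamping the unit-sized cut signal — this is handled by the tuning of $|p_1-p_0|$ and of the hub multiplicities, and by padding $H$ with dummy vertices if needed to control parity/balance.
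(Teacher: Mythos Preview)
Your proposal follows the same high-level template as the paper's proof: build a bounded-diameter graph out of high-multiplicity anchor cliques that pin positions in $\R$, force a set of ``decision'' vertices to choose between two near-optimal locations, and encode an NP-hard combinatorial problem in those binary choices; then prove a rigidity statement saying that any near-optimal embedding must be close to one of the discrete blueprints. The paper instantiates this with \textsc{Balanced Max All-Equal EU3SAT-E$k$} rather than \textsc{Max-Bisection}, and uses a three-tier construction (huge anchor clique $V_0$, medium literal cliques, small clause cliques, with $N_v \gg N_t \gg N_c$) where both literal--negation pairs and literal--containing-clause pairs are placed at distance~$2$. Its rigidity argument is also different in flavor: rather than a Hessian/strong-convexity step, it writes down the first-order conditions explicitly (yielding a closed-form for $\vx_i$ in terms of the ordering and the $|S_i|$'s), deduces the uniform bound $|\vx_i|\le 1+1/N_t^5$, and then finishes with a counting argument on the set $T'$ of distance-$2$ pairs that straddle the origin.

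Two cautions about your route. First, there is a small inconsistency in your construction: you write ``edges among the variable vertices are placed according to $H$'' but then assert ``any two variable vertices are at graph-distance exactly~$2$''. These cannot both hold; presumably you mean the complement encoding (direct edges for $H$-\emph{non}adjacent pairs, so that $H$-adjacent pairs are at graph-distance~$2$), which is indeed what makes the cut term appear with the right sign. This is easy to fix but should be stated correctly. Second, and more substantively, the paper's Remark~\ref{rmk:differences} flags that the rigidity step is delicate precisely because one cannot use distance-zero nodes, and that a richer gadget (literals \emph{and} clauses) was chosen to make rigidity go through; they explicitly say a reduction using only literals (morally closer to your two-position \textsc{Bisection} encoding) is ``not at all obvious'' to make work. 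Your plan may still succeed, but you should expect the rigidity lemma to require more than a generic displaced-vertex-pays-$\Omega(1)$ argument: the paper needs the full force of the first-order conditions plus the hierarchy of clique sizes, and your local-convexity step will have to cope with the fact that the balance term and the cut term live at different scales ($\Theta(n^2)$ vs.\ $\Theta(n)$), so an $O(1)$ perturbation of a single variable vertex competes directly with the unit cut gap you are trying to detect.
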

\noindent
As a gap problem, the output is allowed to be arbitrary if neither case holds; the hardness of the gap formulation shows that there cannot exist a \emph{Fully-Polynomial Randomized Approximation Scheme} (FPRAS) for this problem if $\text{P} \ne \text{NP}$, i.e. the runtime cannot be polynomial in the desired approximation guarantee. Our reduction shows this problem is hard even when the input graph has \emph{low diameter} (even bounded by an absolute constant): this is a natural setting to consider since many real world graphs (for example, social networks \cite{dodds2003experimental}) and random graph models \cite{watts1998collective} indeed have low diameter due to the ``small-world phenomena''. Other key aspects of this hardness proof are: 1) we show the problem is hard even when the input $d$ is a graph metric, and 2) we show it is hard even in its canonical unweighted formulation \eqref{eqn:kk}. 

Given that computing the minimizer is NP-hard, a natural question is whether there exist polynomial time approximation algorithms for minimizing \eqref{eqn:kk}. We show that if the input graph has bounded diameter $D = O(1)$, then there indeed exists a \emph{Polynomial-Time Approximation Scheme} (PTAS) to minimize \eqref{eqn:kk}, i.e. for fixed $\epsilon > 0$ and fixed $D$ there exists an algorithm to approximate the global minimum of a $n$ vertex diameter $D$ graph up to multiplicative error $(1 + \epsilon)$ in time $f(\epsilon,D) \cdot poly(n)$. More generally, we show:
\begin{theorem}[Informal version of Theorem~\ref{thm:kk-alg}]\label{thm:kk-alg-intro}
Let $R > \epsilon > 0$ be arbitrary. Algorithm~\textsc{KKScheme} 
runs in time $n^2 (R/\epsilon)^{O(r R^4/\epsilon^2)}$ and outputs $\vx_1,\ldots,\vx_n \in \mathbb{R}^r$ with $\|\vx_i\| \le R$ such that
\[ \mathbb{E}\left[E(\vx_1,\ldots,\vx_n)\right] \le E(\vx^*_1,\ldots,\vx^*_n) + \epsilon n^2 \]
for any $\vx^*_1,\ldots,\vx^*_n$ with $\|\vx^*_i\| \le R$ for all $i$, where $\mathbb{E}$ is the expectation over the randomness of the algorithm.
\end{theorem}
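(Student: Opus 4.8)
The plan is to restrict the coordinates to a finite net and then to solve the resulting bounded-alphabet minimization by exhaustive sampling, in the spirit of approximation schemes for everywhere-dense problems. Let $S$ be an $\eta$-net of the ball $\{\vx\in\R^r:\|\vx\|\le R\}$, so that $|S|\le(CR/\eta)^{r}$ for an absolute constant $C$. For any feasible configuration $\vx^*$ (meaning $\|\vx^*_i\|\le R$ for all $i$), round each $\vx^*_i$ to its nearest net point $\vy_i\in S$. Since $d(i,j)\ge 1$ and every displacement $\vx_i-\vx_j$ in question has norm at most $2R$, the map $\vz\mapsto(\|\vz\|/d(i,j)-1)^2$ is $O(R)$-Lipschitz on the relevant domain, so each of the $\binom n2$ terms of $E$ changes by $O(R\eta)$ under the rounding, whence $E(\vy)\le E(\vx^*)+O(R\eta n^2)$. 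Choosing $\eta=\Theta(\epsilon/R)$ makes this error at most $\tfrac\epsilon2 n^2$, at the price of $m:=|S|=(R/\epsilon)^{O(r)}$. It thus suffices to produce $\hat\ell:[n]\to S$ with $\E[E(\hat\ell)]\le\min_{\ell:[n]\to S}E(\ell)+\tfrac\epsilon2 n^2$.

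\textbf{Exhaustive sampling.} Draw a uniformly random ``anchor set'' $T\subseteq[n]$ of size $s=\widetilde\Theta(R^{4}/\epsilon^{2})$. For each of the $m^{|T|}$ assignments $\ell_T:T\to S$ of net points to the anchors, complete $\ell_T$ to all of $[n]$ by placing every non-anchor $v$ at the net point that best fits its anchor distances,
\[
\hat\ell(v)\ :=\ \argmin_{a\in S}\ \frac{n}{|T|}\sum_{i\in T}\Big(\frac{\|a-\ell_T(i)\|}{d(v,i)}-1\Big)^{2},
\]
evaluate $E(\hat\ell)$ in $O(n^2)$ time, and output the lowest-energy labeling found over all guesses $\ell_T$ and over $O(1)$ independent re-draws of $T$; report $\vx_v:=\hat\ell(v)$. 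The running time is $m^{|T|}\cdot O(n^2)=n^{2}(R/\epsilon)^{O(rR^{4}/\epsilon^{2})}$, absorbing the logarithmic factor and the $O(r)$ coming from $\log m$ into the constant in the exponent.

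\textbf{Analysis, and the main difficulty.} Fix an optimal discrete labeling $\ell^*$; by the first paragraph $E(\ell^*)\le\min_{\|\vx_i\|\le R}E(\vx)+\tfrac\epsilon2 n^2$. Among the enumerated guesses is $\ell_T=\ell^*|_T$; condition on it. Since $T$ is a uniform sample and each summand lies in $[0,O(R^2)]$ (using $d(i,j)\ge 1$ and $\|a-\ell^*(i)\|\le 2R$), a Hoeffding bound together with a union bound over net points — and a first-moment argument over vertices, which is what lets $s$ avoid a $\log n$ factor — shows that, with constant probability, the scaled anchor sum above is within $\pm\epsilon n$ of $\sum_{i\ne v}(\|a-\ell^*(i)\|/d(v,i)-1)^{2}$ for all but an $\epsilon$-fraction of the $v$'s and all $a\in S$; this is precisely what pins $s$ at $\widetilde\Theta(R^4/\epsilon^2)$. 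On this event the greedy rule places each such $v$ so that its stress \emph{measured against the other vertices positioned at $\ell^*$} exceeds that of $\ell^*(v)$ by at most $O(\epsilon n)$, and summing gives $\sum_v\big(\text{stress of }v\text{ in }\hat\ell,\text{ others at }\ell^*\big)\le 2E(\ell^*)+O(\epsilon n^2)$. The crux is to upgrade this to a bound on $E(\hat\ell)$, where the other vertices are themselves at $\hat\ell$ rather than $\ell^*$: a naive ``sum the per-vertex regrets'' argument fails because each vertex is \emph{placed} against $\ell^*$ but \emph{scored} against $\hat\ell$, and for a generic dense minimization problem this discrepancy can be $\Omega(n^2)$ (a collapsed labeling can be a per-vertex best response yet far from optimal). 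For the Kamada--Kawai energy \eqref{eqn:kk} one can exploit that no near-degenerate configuration is cheap (coincident particles always contribute $1$, so any collapse costs $\Theta(n^2)$) and, more quantitatively, that the anchor distances essentially determine each $\vx_v$ up to a global isometry; a rigidity/stability estimate of this flavour bounds how far $\hat\ell$ can drift from $\ell^*$ in terms of $E(\ell^*)$ and the sampling error, and then Lipschitz-continuity of $E$ in the positions yields $E(\hat\ell)\le E(\ell^*)+O(\epsilon n^2)$. Rescaling $\epsilon$ and combining with the discretization bound gives the stated guarantee; making the rigidity step quantitative and uniform over vertices and over all $m^{|T|}$ guesses is the main technical work, and it is presumably also the point of contact with the gradient-based heuristics mentioned in the abstract.
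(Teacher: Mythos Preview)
Your discretization step matches the paper exactly: round to an $\eta$-net with $\eta=\Theta(\epsilon/R)$, incurring $O(R\eta n^2)$ error via Lipschitzness. The divergence is entirely in how the resulting finite-alphabet dense $2$-CSP is solved. The paper does \emph{not} analyze a bespoke sampling scheme; it simply invokes the Mathieu--Schudy \textsc{GreedyCSP} algorithm as a black box (their Theorem~\ref{thm:dense-csp}). Crucially, that algorithm is different from yours: it randomly permutes the vertices, brute-forces the first $t_0=O(1/\epsilon_2^2)$, and then places each subsequent vertex greedily against \emph{all previously placed vertices}, not just against a fixed anchor set. This sequential ``greedy against the growing prefix'' structure is exactly what makes the analysis go through without any problem-specific rigidity; the placed-vs-scored discrepancy you correctly flag is absorbed by a martingale/random-ordering argument in \cite{mathieu2008yet,schudy}, and the guarantee holds for \emph{any} bounded $2$-CSP, with no reference to the Kamada--Kawai structure.

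Your algorithm---best-respond to the anchors only---is the naive exhaustive-sampling scheme, and the gap you identify is genuine and not merely technical. For general dense $2$-CSP minimization it can fail outright: if $f_{ij}(a,b)=ab$ on $\{\pm 1\}$, the optimum is balanced, but independent best responses to a balanced anchor sample may all break ties the same way and output the all-ones assignment, the \emph{maximizer}. So some extra ingredient is required, and the classical fixes (Arora--Karger--Karpinski, Frieze--Kannan) are not the simple rounding you describe. Your proposed KK-specific rigidity argument---that anchor distances pin down $\vx_v$ and that collapsed configurations are expensive---is plausible-sounding but unproven; you yourself call it ``the main technical work,'' and it is not clear it can be made quantitative (different vertices could be recovered up to \emph{different} near-isometries, and the error budget is only $O(\epsilon n)$ per vertex). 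The paper sidesteps all of this: once discretized, the problem is a generic dense CSP with alphabet size $(R/\epsilon)^{O(r)}$ and payoffs in $[0,O(R^2)]$, so Theorem~\ref{thm:dense-csp} with $\epsilon_2=O(\epsilon/R^2)$ gives the $\epsilon n^2$ additive error and the stated runtime in two lines.
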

\noindent where \textsc{KKScheme} is a simple greedy algorithm described in Section~\ref{sec:approx} below. The fact that this result is a PTAS for bounded diameter graphs follows from combining it with the two structural results regarding optimal Kamada-Kawai embeddings, which are of independent interest. The first (Lemma~\ref{lm:energy}) shows that the optimal objective value for low diameter graphs must be of order $\Omega(n^2)$ and the second (Lemma~\ref{lm:diam}) shows that the optimal KK embedding is ``contractive'' in the sense that the diameter of the output is never much larger than the diameter of the input.
\begin{lemma}[Informal version of Lemma~\ref{lm:energy}]
For any target dimension $r \ge 1$, all graphs of diameter $D = O(n^{1/r})$ satisfy
$E(\vx) = \Omega(n^2/D^r)$ for all $\vx$.
\end{lemma}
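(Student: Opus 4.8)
The plan is to establish the following explicit version: there is a constant $C = C(r)$ such that every graph of diameter $D$ with $D^r \le n/C$ satisfies $E(\vx) \ge n^2/(C D^r)$ for all $\vx = (\vx_1,\dots,\vx_n) \in \R^{rn}$. The starting point is that an individual summand of $E$ is bounded away from $0$ unless $\|\vx_i-\vx_j\|$ is ``medium'': since $1 \le d(i,j) \le D$ for a graph metric, the term $(\|\vx_i-\vx_j\|/d(i,j)-1)^2$ is at least $1/4$ whenever $\|\vx_i-\vx_j\| \le 1/2$ (call the pair ``close'') or $\|\vx_i-\vx_j\| \ge 3D/2$ (call it ``far''), because the ratio $\|\vx_i-\vx_j\|/d(i,j)$ is then $\le 1/2$ or $\ge 3/2$ respectively. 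Hence it suffices to exhibit $\Omega(n^2/D^r)$ close-or-far pairs. If at least $n^2/(CD^r)$ pairs are already close or far, then $E(\vx)\ge \tfrac14\cdot n^2/(CD^r)$ and we are done.

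Otherwise, all but at most $n^2/(CD^r)$ of the $\binom n2$ pairs satisfy $\|\vx_i-\vx_j\| < 3D/2$, which for $C$ a large enough constant (and $n$ not tiny) leaves at least $n^2/4$ such pairs. Form the graph $H$ on $[n]$ with $i\sim j$ iff $\|\vx_i-\vx_j\| < 3D/2$: it has $\Omega(n^2)$ edges, hence a vertex $v$ with $\deg_H(v) = \Omega(n)$, and the set $S = N_H(v)\cup\{v\}$ is a set of $\Omega(n)$ points all lying in the ball of radius $3D/2$ about $\vx_v$ in $\R^r$. Cover this ball by $N = (1+12D)^r = O(D^r)$ balls of radius $1/4$; pigeonholing the points of $S$ among these cells and using convexity of $m\mapsto\binom m2$ yields at least $N\binom{|S|/N}{2} = \Omega(|S|^2/N) = \Omega(n^2/D^r)$ pairs lying in a common radius-$1/4$ cell. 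Each such pair has $\|\vx_i-\vx_j\|\le 1/2$, hence is close, hence contributes $\ge 1/4$ to $E(\vx)$, and we conclude $E(\vx)=\Omega(n^2/D^r)$.

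The geometric heart of the argument is the volumetric packing bound in $\R^r$ — that a radius-$\Theta(D)$ ball has covering number $O(D^r)$ by radius-$\Theta(1)$ balls — which is exactly what makes the low-diameter regime genuinely obstructing: $n$ points crammed into a ball of radius $O(D)$ cannot all be pairwise $\Omega(1)$-separated once $n \gg D^r$. The hypothesis $D = O(n^{1/r})$ is used in two places: (i) so that, after discarding the at most $n^2/(CD^r)$ non-medium pairs, enough edges of $H$ remain for a high-degree vertex to exist — this is what pins down the admissible constant hidden in $D=O(n^{1/r})$; and (ii) so that $|S|/N = \Omega(n/D^r) \ge 2$, making the $\binom{\cdot}{2}$ counts non-vacuous. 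I expect the only real nuisance to be carrying the $r$-dependent constants (the $13^r$-type factors from covering numbers) cleanly through the two cases; the rest is elementary counting and triangle inequalities.
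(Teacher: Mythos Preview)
Your proof is correct and follows essentially the same route as the paper: localize $\Omega(n)$ points inside a ball of radius $O(D)$, then use a covering of that ball by $O(D^r)$ cells of diameter $<1$ together with convexity of $m\mapsto\binom m2$ to force $\Omega(n^2/D^r)$ pairs that are too close, each contributing $\Omega(1)$ to the energy. The only stylistic difference is in the localization step: the paper assumes $E(\vx)=cn^2$ is small, averages to find a vertex $i'$ with $\sum_j E_{i',j}(\vx)\le 2cn$, and applies Markov to conclude that $\ge 4n/5$ points lie within $O(D)$ of $\vx_{i'}$, whereas you run a case split on the number of close-or-far pairs and in the nontrivial case use a maximum-degree argument in the ``medium-distance'' graph $H$ to find the central vertex---both are standard and yield the same packing setup.
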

\begin{lemma}[Informal version of Lemma~\ref{lm:diam}]
For any graph of diameter $D$ and any target dimension $r \ge 1$, any global minimizer of $E(\vx)$ satisfies \[ \max_{i,j} \|\vx_i - \vx_j\| = O(D \log \log D), \]
i.e. the diameter of the embedding is $O(D \log \log D)$.
\end{lemma}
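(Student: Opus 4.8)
The plan is to project the embedding onto the direction achieving its diameter, pass to a one-dimensional problem, and then play a crude global ``energy-counting'' estimate off against the first-order optimality (stationarity) conditions enjoyed by a global minimizer, bootstrapping the two.

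First I would fix a global minimizer $\vx$, choose a unit vector $u$ realizing the embedding diameter, and set $t_i = \langle \vx_i, u\rangle$, so that the embedding diameter equals $\max_i t_i - \min_i t_i$; let $m$ be the median of the $t_i$ and let $\gamma(s)$ be the number of indices with $|t_i - m| > s$. The two facts to exploit are: (a) \emph{folding} --- reflecting every point with $\langle \vx_i, u\rangle > c$ across the hyperplane $\langle x, u\rangle = c$ never increases any cross-pair Euclidean distance, so it cannot increase the energy contributed by over-stretched cross-pairs; and (b) \emph{stationarity} --- the directional derivative of $E$ at $\vx$ vanishes under every perturbation, in particular under rigidly translating a subset $S$ of the points in the direction $u$, which gives
\[ \sum_{i \in S,\ j \notin S}\left(\frac{1}{d(i,j)^2} - \frac{1}{d(i,j)\,\|\vx_i - \vx_j\|}\right)(t_i - t_j) = 0. \]

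For the coarse bound I would use $E(\vx) \le \binom{n}{2}$ (all points at the origin) together with $1 \le d(i,j) \le D$: any pair with $|t_i - t_j| \ge 2D$ contributes at least $(|t_i - t_j|/(2D))^2$ to $E$, so counting pairs between $\{i : t_i \le m\}$ and the points of height $> m + s$ gives $\gamma(s) = O(nD^2/s^2)$ for $s \ge 2D$ (and the same bound applied to the quartiles shows the $t_i$ have interquartile range $O(D)$). This alone only yields embedding diameter $O(D\sqrt n)$. To go further I would apply stationarity with $S$ the ``top band'' $\{i : t_i > \max_k t_k - 2D\}$, writing $h = \max_k t_k - m$: then every cross-pair has $t_i > t_j$; a cross-pair with $j$ in the bottom half is over-stretched and its term in the identity is $\gtrsim (t_i - t_j)/D^2 \gtrsim h/D^2$, while an under-stretched cross-pair (one with $\|\vx_i - \vx_j\| < d(i,j) \le D$) has a term of magnitude $O(1)$ and can occur only when $\vx_j$ lies within Euclidean distance $D$ of some $\vx_i$ with $i \in S$ --- so there are at most $|S|\cdot\gamma(h - O(D))$ of them. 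Matching the two sides of the identity (the factor $|S|$ cancels) gives $nh/D^2 = O\!\left(\gamma(h - O(D))\right)$, and substituting the counting bound yields a first honest estimate, embedding diameter $= O(D^{4/3})$ (or at worst a fixed power of $D$, independent of $n$).

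The last step is to bootstrap: any upper bound $W$ on the embedding diameter sharpens the local and tail counts feeding into the two estimates above --- e.g.\ the number of points within distance $D$ of an extreme point is the number of points of height $\ge h - D$ \emph{inside an embedding of total height $\le W$} --- and re-running the argument with these sharper inputs, or folding at carefully chosen locations, should produce a strictly smaller bound, the claimed $\log\log D$ factor counting the number of refinement rounds. I expect this bootstrapping to be the main obstacle: the delicate point is to set up the iteration so that it provably contracts all the way down to $O(D\log\log D)$ rather than stalling at a fixed power of $D$, and within each round the crux is to control the under-stretched (over-compressed) pairs --- exactly the pairs that behave badly under inward translation or folding --- and show that there are few of them. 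By contrast the general target dimension $r$ causes no real trouble: the entire argument runs on the one-dimensional projections $t_i$ and the inequality $|t_i - t_j| \le \|\vx_i - \vx_j\|$, so $r$ enters only through the implied constants.
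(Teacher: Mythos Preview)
Your framework matches the paper's: project to one dimension, combine a crude energy-counting tail bound with first-order optimality at the extreme end, and feed one into the other. Your computation up through $h=O(D^{4/3})$ is essentially correct and parallels the paper's final step (the paper uses single-vertex stationarity at the extreme point rather than your block-translation identity, but the resulting inequality $\gamma(h-O(D))\gtrsim nh/D^{2}$ is the same).

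The genuine gap is exactly where you flag it: the bootstrapping. Your inequality $nh/D^{2}\lesssim\gamma(h-O(D))$ is only as good as your tail bound on $\gamma$, and the energy-counting estimate $\gamma(s)\lesssim nD^{2}/s^{2}$ does \emph{not} improve just because you now know an a~priori diameter bound $W$; re-running the same pair of estimates with $W$ in hand gives nothing new, so the iteration you describe stalls at a fixed power of $D$. What the paper does instead is prove a separate concentration lemma that upgrades the polynomial tail to a doubly-exponential one, $\gamma\bigl((C+k)D\bigr)\le 2rn\,C^{-\sqrt{2}^{k}}$, and then applies your stationarity inequality \emph{once}. The mechanism behind the concentration lemma is not stationarity but a global comparison with a specific perturbed layout: collapse the band $S_{k}=\{i:t_{i}\in[(C+k)D,(C+k+1)D)\}$ onto a single hyperplane and simultaneously shift the far tail $T_{k+1}$ inward by $D$. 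Bounding the energy increase inside $S_{k-1}\cup S_{k}\cup S_{k+1}$ against the energy decrease in $E_{S_{0},T_{k+1}}$, and using optimality of $\vx$, yields the recursion
\[
|T_{k+1}|\;\lesssim\;\frac{|T_{k-1}|\,|T_{k}|}{n},
\]
which is what produces the doubly-exponential decay and hence the $\log\log D$ count of bands. Your folding move is in the right spirit (it is a non-increasing perturbation on over-stretched cross-pairs), but as stated it does not isolate a product $|T_{k-1}||T_{k}|$ on the increase side, and your block-translation stationarity with $S=T_{s}$ only gives the wrong-direction inequality $|T_{s-D}|\gtrsim ns/D^{2}$ when $|T_{s}|>0$. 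So the missing idea is precisely this collapse-and-shift comparison (or some equivalent device that yields a multiplicative recursion on the tail counts); once you have it, the rest of your outline goes through in one shot rather than by iteration.
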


\subsection{Related Work}
\subsubsection{Other Approaches to Nonlinear Dimensionality Reduction and Visualization.} Recently, there has been renewed interest in force-directed graph layouts due to new applications in machine learning and data science. MDS itself is a popular technique for dimension reduction. Newer techniques, such as $t$-SNE \cite{maaten2008visualizing} and UMAP \cite{mcinnes2018umap}, can be viewed as similar type of force-directed weighted graph drawing with more complex objectives than Kamada-Kawai (see the discussion in \cite{mcinnes2018umap}); in comparison, some other dimensionality reduction methods, e.g. Laplacian eigenmaps \cite{belkin2003laplacian}, are based on spectral embeddings of graphs.

In practice, methods like t-SNE and UMAP appear to work quite well, even though they are based on optimizing non-convex objectives with gradient descent, which in general comes with no guarantee of success. Towards explaining this phenomena, $t$-SNE has been mathematically analyzed in a fairly specific setting where the data is split into well-separated clusters (e.g. generated by well-separated Gaussian mixtures); in this case, the works \cite{arora2018analysis,linderman2019clustering} prove that the visualization recovers the corresponding cluster structure. A difficulty when proving more general guarantees is that the $t$-SNE and UMAP objectives are fairly complex, and hence not so easy to mathematically analyze. 

Partially for this reason, in this work we focus on the simpler metric MDS/Kamada-Kawai objective. 
Experimentally, it has been observed that, using this objective, it is easy to find high quality minima in many different situations (see e.g. \cite{zheng2018graph}), but to our knowledge there has not been a mathematical explanation of this phenomena.

\subsubsection{Other related work.} In the multidimensional scaling literature, there has been some study of the \emph{local convergence} of algorithms like stress majorization, see for example \cite{de1988convergence}, which shows that stress majorization will converge quickly if in a sufficiently small neighborhood of a local minimum. This work seems to propose the first provable guarantees for global optimization. The closest previous hardness result is the work of \cite{cayton2006robust} where they showed that a similar problem is hard. In their problem: 1) the terms in \eqref{eqn:kk} are weighted by $d(i,j)$ and absolute value loss replaces the squared loss and 2) the input is an arbitrary pseudometric where nodes in the input are allowed to be at distance zero from each other. The second assumption makes the diameter (ratio of max to min distance in the input) infinite, and this is a major obstruction to modifying their approach to show Theorem~\ref{thm:hardness-intro}. See Remark~\ref{rmk:differences} for further discussion. A much earlier hardness result is the work of \cite{saxe1979embeddability}, in the easier (for proving hardness) case where distortion is only measured with respect to edges of the graph.

In the approximation algorithms literature, there has been a great deal of interest in optimizing the worst-case distortion of metric embeddings into various spaces, see e.g. \cite{badoiu2005approximation} for approximation algorithms for embeddings into one dimension, and \cite{deza2009geometry,naor2012introduction} for more general surveys of low distortion metric embeddings. Though conceptually related, the techniques used in this literature are not generally targeted for minimizing a measure of average pairwise distortion like \eqref{eqn:kk}.

In the graph drawing literature, there are a number of competing methods for drawing a graph, with the best approach depending on application \cite{battista1998graph}. Tutte's spring embedding theorem is often considered the seminal work in the force-directed layout community, and provides a method for producing a planar drawing of a three-connected planar graph \cite{tutte1963draw}. Though the problem under consideration in this work does indeed belong to the class of force-directed layouts, we stress the layouts under consideration do not minimize edge crossings in any sense.

\subsection{Notation.} In the remainder of the paper, we will generally assume the input is given as an unweighted graph to simplify notation; however, for the upper bounds (e.g. Theorem~\ref{thm:kk-alg-intro}) we do handle the general case of arbitrary metrics with distances in $[1,D]$ --- note that the lower bound of $1$ is without loss of generality after re-scaling. In the lower bound (i.e. Theorem~\ref{thm:hardness-intro}), we prove the (stronger) result that the problem is hard when restricted to graph metrics, instead of just for arbitrary metrics. We use standard asymptotic notation: $f(n) = O(g(n))$ means that $\lim\sup_{n \to \infty} \frac{f(n)}{g(n)} < \infty$, $f(n) = \Omega(g(n))$ means that $\lim \inf_{n \to \infty} \frac{f(n)}{g(n)} > 0$, and $f(n) = \Theta(g(n))$ means that $f(n) = \Omega(g(n))$ and $f(n) = O(g(n))$. The notation $[n]$ denotes $\{1,\ldots,n\}$. Unless otherwise noted, $\|\cdot\|$ denotes the Euclidean norm.

We also recall that a \emph{metric} $d : V \times V \to \mathbb{R}_{\ge 0}$ on a set $V$ is formally defined to be any function satisfying 1) $d(v,w) = 0$ iff $v = w$, 2) $d(v,w) = d(w,v)$ for all $v,w \in V$ and 3) $d(v,w) \le d(v,u) + d(u,w)$ for any $u,v,w \in V$. A \emph{pseudometric} relaxes 1) to the requirement that $d(v,v) = 0$ for all $v$.
\section{Structural Results for Optimal Embeddings}

In this section, we present two results regarding optimal layouts of a given graph. In particular, we provide a lower bound for the energy of a graph layout and an upper bound for the diameter of an optimal layout. The techniques used primarily involve estimating different components of the objective function $E(\vx_1, \ldots, \vx_n)$ given by (\ref{eqn:kk}) (written as $E(\vx)$ in this section for convenience). For this reason, we introduce the notation
\begin{align*}
E_{i,j}(\vx)&:= \bigg(\frac{\|\vx_i - \vx_j\|}{d(i,j)} - 1\bigg)^2 \quad \text{for } i,j \in [n],\\
E_{S}(\vx) &:= \sum_{\substack{i,j \in S \\ i<j}} E_{i,j}(\vx) \quad \text{for } S \subset [n], \\
E_{S,T}(\vx) &:= \sum_{i \in S} \sum_{j \in T} E_{i,j}(\vx)\quad \text{for } S,T \subset [n], \; S\cap T = \emptyset.
\end{align*}
We also make use of this notation in Appendices \ref{app:struct} and \ref{app:hardness}. First, we recall the following standard $\epsilon$-net estimate.

\begin{lemma}[Corollary 4.2.13 of \cite{vershynin2018high}]\label{lem:eps-net-ball}
Let $B_R = \{x : \|x\| \le R\} \subset \mathbb{R}^r$ be the origin-centered radius $R$ ball in $r$ dimensions. For any $\epsilon \in (0,R)$ there exists a subset $S_{\epsilon} \subset B_R$ with $|S_{\epsilon}| \le (3R/\epsilon)^r$ such that
\[ \max_{\|x\| \le R} \min_{y \in S_{\epsilon}} \|x - y\| \le \epsilon, \]
i.e. $S_{\epsilon}$ is an $\epsilon$-net of $B_R$.
\end{lemma}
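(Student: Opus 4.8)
The plan is to construct $S_\epsilon$ by a greedy packing argument and then bound its cardinality by a volume comparison. First I would let $S_\epsilon \subseteq B_R$ be a \emph{maximal} subset with the property that any two distinct points of $S_\epsilon$ are at distance strictly greater than $\epsilon$; such a maximal set exists, and it is finite because $B_R$ is compact while an infinite $\epsilon$-separated set would have an accumulation point. By maximality, every $x \in B_R$ lies within distance $\epsilon$ of some $y \in S_\epsilon$ — otherwise $x$ itself could be added to $S_\epsilon$ without violating the separation property, contradicting maximality. This is exactly the net property $\max_{\|x\| \le R} \min_{y \in S_\epsilon} \|x - y\| \le \epsilon$.

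To bound $|S_\epsilon|$, I would observe that the open balls $\{B(y,\epsilon/2) : y \in S_\epsilon\}$ are pairwise disjoint, since any two points of $S_\epsilon$ are more than $\epsilon$ apart, and they are all contained in $B_{R+\epsilon/2}$, since each $y$ has norm at most $R$. Comparing Lebesgue volumes and using that the volume of a Euclidean ball scales as the $r$-th power of its radius,
\[
|S_\epsilon|\left(\frac{\epsilon}{2}\right)^r \le \left(R + \frac{\epsilon}{2}\right)^r ,
\]
so that $|S_\epsilon| \le (2R/\epsilon + 1)^r$. Since $\epsilon < R$ we have $1 \le R/\epsilon$, hence $2R/\epsilon + 1 \le 3R/\epsilon$, giving $|S_\epsilon| \le (3R/\epsilon)^r$ as claimed.

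I do not expect any genuine obstacle here: the only points needing a little care are the finiteness of the maximal separated set (handled by compactness of $B_R$) and checking that the inclusion $B(y,\epsilon/2) \subseteq B_{R+\epsilon/2}$ together with disjointness is invoked in the correct direction for the volume inequality. An alternative construction, covering $B_R$ by an axis-aligned grid of cells of side $\epsilon/\sqrt{r}$ and taking one point per cell, also proves the statement but yields a worse constant depending on $r$, so the packing argument is preferable. Since this is a standard fact (Corollary 4.2.13 of \cite{vershynin2018high}) one may simply cite it, but the self-contained argument above is short enough to reproduce.
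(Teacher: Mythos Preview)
Your argument is correct and is exactly the standard maximal-separated-set plus volume-comparison proof. The paper itself does not give a proof of this lemma at all: it simply states the result and cites Corollary~4.2.13 of \cite{vershynin2018high}, so there is nothing to compare against beyond noting that your self-contained argument is essentially the one found in that reference.
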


Using this result, we prove the following lower bound for the objective value of any layout of a diameter $D$ graph in $\mathbb{R}^r$.

\begin{lemma}\label{lm:energy}
Let $G = ([n],E)$ have diameter 
$$D \le \frac{(n/2)^{1/r} }{10}.$$
Then any layout $\vec x \in \mathbb{R}^{rn}$ has energy 
$$E(\vec x) \ge \frac{n^2}{81(10D)^r}.$$
\end{lemma}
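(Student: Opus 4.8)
The plan is to argue by contradiction: I would assume $E(\vx) < \frac{n^2}{81(10D)^r}$ and derive a contradiction by exhibiting too many ``short'' pairs.

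First I would confine most of the points to a ball of radius $\tfrac{10}{9}D$. The point is that, because $G$ has diameter $D$, any pair $i \ne j$ with $\|\vx_i - \vx_j\| > \tfrac{10}{9}D$ has $\frac{\|\vx_i - \vx_j\|}{d(i,j)} \ge \frac{\|\vx_i - \vx_j\|}{D} > \tfrac{10}{9}$ and therefore contributes $E_{i,j}(\vx) > \tfrac1{81}$; so the number of such ``long'' pairs is at most $81\,E(\vx) < \frac{n^2}{(10D)^r} \le \frac{n^2}{10}$, using $D \ge 1$. Regarding the pairs at distance $\le \tfrac{10}{9}D$ as the edges of a graph on $[n]$, this graph has at least $\binom n2 - \tfrac{n^2}{10}$ edges and hence a vertex $v$ of degree at least $\tfrac45 n - 1$, which means the ball $B := B(\vx_v, \tfrac{10}{9}D)$ contains a set $S$ of at least $m := \tfrac45 n$ of the points.

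Next I would partition $S$ using the $\epsilon$-net of Lemma~\ref{lem:eps-net-ball} applied to $B$ with $\epsilon = \tfrac13$, which covers $B$ by at most $\big(3 \cdot \tfrac{10}{9}D / \tfrac13\big)^r = (10D)^r$ balls of radius $\tfrac13$; assigning each point of $S$ to a net ball containing it splits $S$ into $N \le (10D)^r$ groups $G_1,\dots,G_N$, each of diameter at most $\tfrac23$. Since $d(i,j) \ge 1$, any two points in a common group $G_k$ satisfy $\frac{\|\vx_i - \vx_j\|}{d(i,j)} \le \tfrac23$ and so $E_{i,j}(\vx) \ge (1 - \tfrac23)^2 = \tfrac19$, giving
\[
E(\vx) \;\ge\; \sum_{k=1}^{N} E_{G_k}(\vx) \;\ge\; \frac19 \sum_{k=1}^{N} \binom{|G_k|}{2}.
\]
Convexity of $t \mapsto \binom t2$ (equivalently Cauchy--Schwarz on $\sum_k |G_k| \ge m$ distributed among $N$ groups) gives $\sum_k \binom{|G_k|}{2} \ge \tfrac m2\big(\tfrac mN - 1\big)$. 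The hypothesis $D \le (n/2)^{1/r}/10$ is precisely $(10D)^r \le n/2$, so $N \le n/2 \le \tfrac58 m$, hence $\tfrac mN \ge \tfrac85$ and $\tfrac m2\big(\tfrac mN - 1\big) \ge \tfrac{3m^2}{16N}$. Substituting $m \ge \tfrac45 n$ and $N \le (10D)^r$ would then yield $E(\vx) \ge \tfrac{m^2}{48N} \ge \tfrac{n^2}{75(10D)^r} > \tfrac{n^2}{81(10D)^r}$, the desired contradiction.

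The step I expect to be the crux is making the constants fit under the stated hypothesis, since the skeleton is essentially forced: one must confine most points to a ball of radius $O(D)$ (otherwise the energy is already $\Omega(n^2)$), net that ball into $O(D^r)$ cells, and pass to short pairs via convexity rather than by pigeonholing into a single cell (the latter would only give $\Omega(n^2/D^{2r})$, the wrong exponent in $D$). The tension is that the net scale must be $< \tfrac12$ for the within-group terms to be bounded away from $0$, while at the same time the ball radius must stay close enough to $D$ that the cell count does not exceed $(10D)^r$, and the captured fraction of points together with $(10D)^r \le n/2$ must force the number of points to exceed the number of cells by a constant factor so the convexity estimate is not vacuous; one should also check the handful of small-$n$ (equivalently small-$D$) instances, which turn out to be covered by the very same bounds.
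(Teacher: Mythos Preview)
Your proof is correct and follows essentially the same approach as the paper's: confine at least $4n/5$ points to a ball of radius $\tfrac{10}{9}D$, partition via an $\epsilon$-net into $(10D)^r$ cells of diameter $\le \tfrac23$, and apply convexity to $\sum_k \binom{|G_k|}{2}$ to obtain $E(\vx) \ge \tfrac{n^2}{75(10D)^r}$. The only (cosmetic) difference is in the localization step: the paper picks a center $i'$ with small row-energy via averaging and then applies Markov's inequality, whereas you bound the number of ``long'' pairs globally and extract a high-degree vertex in the complementary ``short'' graph --- both yield the same $\tfrac45 n$ points in the same ball.
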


\begin{proof}
Let $G=([n],E)$ have diameter $D\le (n/2)^{1/r} /10$, and suppose that there exists a layout $\vec x \subset \mathbb{R}^r$ of $G$ in dimension $r$ with energy $E(\vec x) = c n^2$ for some $c \le 1/810$. If no such layout exists, then we are done. We aim to lower bound the possible values of $c$. For each vertex $i \in [n]$, we consider the quantity $E_{i, V\setminus i}(\vx)$. The sum
$$\sum_{i \in [n]} E_{i, V\setminus i}(\vx) = 2 c n^2,$$
and so there exists some $i' \in [n]$ such that 
$E_{i', V\setminus i'}(\vx) \le 2cn$. By Markov's inequality,
$$\big| \{ j \in [n] \, | \, E_{i',j}(\vx) > 10c\} \big| < n/5,$$
and so at least $4n/5$ vertices (including $i'$) in $[n]$ satisfy
$$ \left(\frac{\|\vx_{i'} -\vx_j\|}{d(i',j)} -1 \right)^2 \le 10c,$$
and also
$$\|\vx_{i'} - \vx_j\| \le d(i',j) (1 +  \sqrt{10 c}) \le \frac{10}{9} D.$$
The remainder of the proof consists of taking the $d$-dimensional ball with center $\vx_{i^*}$ and radius $10D/9$ (which contains $\ge 4n/5$ vertices), partitioning it into smaller sub-regions, and then lower bounding the energy resulting from the interactions between vertices within each sub-region.

By applying Lemma \ref{lem:eps-net-ball} with $R:= 10D/9$ and $\epsilon:=1/3$, we may partition the $r$ dimensional ball with center $\vx_{i'}$ and radius $10D/9$ into $(10 D)^r$ disjoint regions, each of diameter at most $2/3$. For each of these regions, we denote by $S_j \subset [n]$, $j \in [(10D)^r]$, the subset of vertices whose corresponding point lies in the corresponding region. As each region is of diameter at most $2/3$ and the graph distance between any two distinct vertices is at least one, either
$$E_{S_j}(\vx ) \ge {|S_j| \choose 2} (2/3 - 1)^2 = \frac{|S_j|(|S_j|-1)}{18}$$
or $|S_j| = 0$. Empty intervals provide no benefit and can be safely ignored. The optimization problem
$$ \min \; \sum_{k=1}^\ell m_k(m_k-1) \quad 
\text{s.t.} \; \sum_{k=1}^\ell m_k = m, \;m_k \ge 1, \; k \in [\ell],$$
 has a non-empty feasible region for $m \ge \ell$, and the solution is given by $m(m/\ell -1)$ (achieved when $m_k = m/\ell$ for all $k$). In our situation, $m := 4n/5$ and $\ell :=(10D)^r$, and, by assumption, $m \ge \ell$. This leads to the lower bound
 $$ cn^2 = E(\vx)  \ge \sum_{j=1}^\ell E_{S_j}(\vx ) \ge \frac{4n}{90} \left[\frac{4n}{5(10D)^r } -1 \right],$$
 which implies that
 $$c \ge \frac{16}{450(10D)^r}\left(1 - \frac{5(10D)^r}{4n}\right) \ge \frac{1}{75(10D)^r}$$
 for $D \le (n/2)^{1/r}/10$. This completes the proof.
\end{proof}

The above estimate has the correct dependence for $r=1$. For instance, consider the lexicographical product of a path $P_D$ and a clique $K_{n/D}$: i.e. a graph with $D$ cliques in a line, and complete bipartite graphs between neighboring cliques. This graph has diameter $D$, and the layout in which the ``vertices'' (each corresponding to a copy of $K_{n/D}$) of $P_D$ lie exactly at the integer values $[D]$ has objective value $\frac{n}{2}(n/D-1)$. This estimate is almost certainly not tight for dimensions $r>1$, as there is no higher dimensional analogue of the path (i.e., a graph with $O(D^r)$ vertices and diameter $D$ that embeds isometrically in $\mathbb{R}^r$).



Next, we provide an upper bound for the diameter of any optimal layout of a diameter $D$ graph. For the sake of space, the proof of this result is reserved for Appendix A.

\begin{lemma}[Proved in Appendix A] \label{lm:diam}
Let $G=([n],E)$ have diameter $D$. Then, for any optimal layout $\vec x \in \R^{rn}$, i.e., $\vx$ such that $E(\vx) \le E(\vec{y})$ for all $\vec{y} \in \mathbb{R}^{rn}$, 
$$\| \vx_i - \vec x_j \|_2 \lesssim D \log \log D$$
for all $i,j \in [n]$.
\end{lemma}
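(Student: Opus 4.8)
The plan is to argue by contradiction: suppose an optimal layout $\vx$ has some pair of points at distance much larger than $C D \log\log D$, and show that we can strictly decrease the energy, contradicting optimality. The basic mechanism is that once two groups of vertices are very far apart (much farther than the graph distance $\le D$ between them), every cross-pair term $\big(\tfrac{\|\vx_i-\vx_j\|}{d(i,j)}-1\big)^2$ is large, roughly $(\text{distance}/D)^2$, and this grows without bound; so a layout that is ``too spread out'' is paying a huge price. To make this quantitative I would first establish a clustering/separation structure: project onto the direction of maximum spread, or more robustly, partition $[n]$ into groups according to a coarse scale, and look at the ``gaps'' in the layout. The key is a \emph{single-scale} statement of the form: if the layout has diameter $\Delta$, then there is a hyperplane (or a threshold on a $1$-Lipschitz coordinate) splitting the vertices into nonempty sets $A$ and $B$ such that $E_{A,B}(\vx)$ is at least of order $(\Delta / D)^2 \cdot (\text{something})$ but the two halves are each within a ball of radius, say, $\Delta/2$, and then recurse. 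Comparing this to the upper bound $E(\vx) \le E(\text{trivial layout}) = O(n^2)$ (e.g.\ the all-zeros layout has energy exactly $\binom n2$) forces $\Delta$ to be bounded.

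More concretely, here is the route I expect to work. Fix an optimal $\vx$ and let $\Delta = \max_{i,j}\|\vx_i-\vx_j\|$, witnessed by vertices $a,b$. Consider the function $f(i) = \langle \vx_i - \vx_a, u\rangle$ where $u = (\vx_b-\vx_a)/\|\vx_b-\vx_a\|$; then $f$ ranges over $[0,\Delta]$ (after noting $f(i)\in[0,\Delta]$ up to a harmless shift, using optimality/centering), and $\|\vx_i-\vx_j\|\ge |f(i)-f(j)|$. Slice $[0,\Delta]$ into dyadic scales. At scale $2^{-k}\Delta$ there is a ``cheap'' way to cut: by an averaging/pigeonhole argument over the $\Theta(2^k)$ translates of a cut at that scale, there is a threshold $t$ so that few vertices lie within $D$ of $t$, yet both sides are nonempty. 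Letting $A = \{i : f(i) < t\}$, $B = \{i: f(i) > t + \text{gap}\}$, every cross pair has $\|\vx_i-\vx_j\| \ge \text{gap}$, hence contributes $\ge (\text{gap}/D - 1)^2$, which is $\gtrsim (\text{gap}/D)^2$ once $\text{gap} \ge 2D$. Summing over the $|A|\cdot|B|$ cross pairs and optimizing the scale at which $|A|\cdot|B| = \Omega(n^2)$ (which happens at some scale since the total spread is $\Delta$), we get $E(\vx) \gtrsim n^2 (\Delta/D)^2 / \poly\log$, or something in this spirit; combined with $E(\vx) = O(n^2)$ this already gives $\Delta = O(D\,\poly\log D)$, and a more careful bookkeeping of the $\log\log$ should come from iterating the argument: after the first cut, each side has diameter $\le \Delta/2$ but only $\le n$ vertices, and recursing $O(\log\log D)$ times tightens the constant — this geometric-series-of-scales is exactly where the $\log\log D$ factor (rather than $\log D$ or a constant) should emerge.

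The step I expect to be the main obstacle is getting the \emph{clean} $\log\log D$ bound rather than a cruder $\log D$ or $\sqrt{D}$ bound. A single application of the ``far-apart pairs are expensive'' idea gives polynomial-in-$D$ or $\log D$-type control; squeezing it down to $\log\log D$ requires a delicate multi-scale / bootstrapping argument where one repeatedly uses the improved diameter bound from the previous stage to get a better bound at the next stage, with the number of stages being $\log\log D$ and each stage losing only a constant factor. Setting up this recursion so the constants don't blow up — in particular, controlling how the vertex counts $|A|,|B|$ and the relevant scales interact as one recurses, and making sure the ``few vertices near the cut'' error terms stay negligible — is the technical heart. A secondary subtlety is that the energy lower bound on the expensive side must be compared against the \emph{optimal} energy, not a fixed layout, so one should phrase everything as an exchange/improvement argument: exhibit an explicit competitor layout (e.g.\ contract the far side inward by a constant factor, or collapse $B$ onto a scaled copy) and show its energy is strictly smaller, which is cleaner than a pure counting bound and also naturally produces the recursion.
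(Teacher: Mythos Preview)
Your high-level instincts are right: compare to the trivial all-at-one-point layout to get $E(\vx)\le\binom n2$, use an exchange/competitor argument, and expect the $\log\log D$ to come from some kind of bootstrapping. But the specific mechanism you propose for extracting $\log\log D$ has a gap.

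The difficulty is the unbalanced case. Your ``find a good cut with both sides nonempty and bound the cross-energy'' step only gives useful information when both sides carry a constant fraction of the vertices; if all but $m\ll n$ vertices sit in a small ball and those $m$ outliers are at distance $\Delta$, the cross-energy bound $m\cdot n\cdot(\Delta/D)^2\lesssim n^2$ only yields $\Delta\lesssim D\sqrt{n/m}$, which can be enormous. Recursing on the two halves of a dyadic cut does not fix this, because the problem is not that each half has large diameter but that a few stragglers are far from a dense core. Nothing in your outline produces the \emph{doubly-exponential} improvement per step that $\log\log D$ requires; a halving-the-diameter recursion would need $\log(\Delta/D)$ steps, not $\log\log D$.

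The paper's route is different in two essential ways. First, it proves a tail bound (Lemma~\ref{lm:conc}): the number of vertices at $\ell_\infty$-distance $\ge(C+k)D$ from the marginal median is at most $2rn\,C^{-\sqrt{2}^k}$. The doubly-exponential decay comes from a specific exchange: collapse the slab $S_k=\{i:\vx_i(1)\in[(C{+}k)D,(C{+}k{+}1)D)\}$ onto a hyperplane and shift the tail $T_{k+1}$ inward by $D$. Comparing energies yields a \emph{product} recursion $|T_{k+1}|\lesssim |T_{k-1}|\,|T_k|/n$, whose iterates square the effective exponent and give $|T_k|\le n\,C^{-\sqrt{2}^k}$. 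Second, the diameter bound itself is obtained not from a cut argument but from the \emph{first-order optimality condition} at the extremal vertex $\vx_1$: the stationarity equation $\partial E/\partial\vx_1(1)=0$ balances the pull from the $\ge n/2$ vertices on the far side of the median (contributing $\gtrsim n(\alpha-D)/D^2$) against the push from the few vertices within $D$ of $\vx_1$ (contributing $\le |T_3|$). Plugging the tail bound for $|T_3|$ with $k\approx\alpha/D$ and $C=2$ forces $\alpha\lesssim D\log\log D$. Both of these ingredients --- the product-form tail recursion and the gradient-balance at the extreme point --- are absent from your sketch, and I don't see how your dyadic scheme substitutes for them.
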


While the above estimate is sufficient for our purposes, we conjecture that this is not tight, and that the diameter of an optimal layout of a diameter $D$ graph is always at most $2D$.

\section{Algorithmic Lower Bounds}\label{sec:hardness}

In this section, we discuss algorithmic lower bounds for multidimensional scaling. In particular, we provide a sketch of the reduction used in the proof of Theorem \ref{thm:hardness-intro}. The formal proof itself is quite involved, and is therefore reserved for Appendix \ref{app:hardness}.



To show that minimizing (\ref{eqn:kk}) is NP-hard in dimension $r=1$, we use a reduction from a version of Max All-Equal 3SAT. The Max All-Equal 3SAT decision problem asks whether, given variables $t_1,\dots,t_\ell$, clauses $C_1,\dots,C_m \subset \{t_1,\dots,t_\ell,\bar{t}_1,\dots,\bar{t}_\ell\}$ each consisting of at most three literals (variables or their negation), and some value $L$, there exists an assignment of variables such that at least $L$ clauses have all literals equal. The Max All-Equal 3SAT decision problem is known to be APX-hard, as it does not satisfy the conditions of the Max CSP classification theorem for a polynomial time optimizable Max CSP  \cite{khanna2001approximability}. More precisely, this is because of the following properties: 1) setting all variables true or all variables false does not satisfy all clauses, and 2) all clauses cannot be written in disjunctive normal form as two terms, one with all unnegated variables and one with all negated variables.

We require a much more restrictive version of this problem. In particular, we require a version in which all clauses have exactly three literals, no literal appears in a clause more than once, the number of copies of a clause is equal to the number of copies of its complement (defined as the negation of all its elements), and each literal appears in exactly $k$ clauses. This more restricted version is shown to still be APX-hard in Appendix \ref{app:hardness}.

Suppose we have an instance of the aforementioned version of Max All-Equal 3SAT with variables $t_1,\dots,t_\ell$ and clauses $C_1,\dots,C_{2m}$. Let $\mathcal{L} = \{t_1,\dots,t_\ell,\bar{t}_1,\dots,\bar{t}_\ell\}$ be the set of literals and $\mathcal{C} =\{C_1,\dots,C_{2m}\}$ be the multiset of clauses. Consider the graph $G = (V,E)$, with $V = V_0 \sqcup V_1 \sqcup V_2$, where
\begin{align*}
    V_0 &= \{v^i : i\in [N_v] \}, \\
    V_1 &= \{ t^i : t \in \mathcal{L}, i \in [N_t] \}, \\
    V_2 &= \{ C^i : C \in \mathcal{C}, i \in [N_c] \},
\end{align*}
and $ E = V^{(2)} \setminus (\overline{E}_1 \cup \overline{E}_2)$, where
\begin{align*}
    \overline{E}_1 &=\{(t^i,\bar{t}^j) : t\in \mathcal{L}, i,j \in [N_t] \} \}, \\
    \overline{E}_2 &= \{(t^i,C^j) : t \in C, C \in \mathcal{C}, i \in [N_t],j \in [N_c] \}, 
\end{align*}
$\sqcup$ denotes disjoint union, parameters $N_v \gg N_t \gg N_c \gg m$, and $V^{(2)}:= \{ U \subset V \, : \, |U| =2 \}$. For simplicity, in the following description we assume that cliques (other than $V_0$) in the original graph generally embed together as one collection of nearby points, so we can treat them as single objects in the embedding. In Appendix \ref{app:hardness}, this intuition is rigorously justified.

The clique on vertices $V_0$ serves as an ``anchor'' that forces all other vertices to be almost exactly at the correct distance from its center. Without loss of generality, assume this anchor clique is centered at $0$. In this graph, the cliques corresponding to literals and clauses, given by $\{ t^i \}_{ i \in [N_t]}$ and $\{ C^i \}_{i \in [N_C]}$ respectively, are all at distance one from the anchor clique. Literal cliques are at distance one from each other, except negations of each other, which are at distance two. Clause cliques are distance two from the literal cliques corresponding to literals in the clause and distance one from literal cliques corresponding to literals not in the clause. Clause cliques are all distance one from each other. The main idea of the reduction is that the location of the center of the anchor clique at $0$ forces each literal to roughly be at either $-1$ or $+1$, and the distance between negations forces negations to be on opposite sides, i.e., $\vec x_{t^i} \approx - \vec x_{\bar t^i}$. Clause cliques are also roughly at either $-1$ or $+1$ and the distance to literals forces clauses to be opposite the side with the majority of its literals, i.e., clause $C = \{t_1,t_2,t_3\}$ lies at 
$\vec x_{C^i} \approx - \chi\{\vec x_{t_1^i} + \vec x_{t_2^i} + \vec x_{t_3^i} \ge 0\}$, where $\chi$ is the indicator variable. The optimal embedding of $G$, i.e. the location of variable cliques at either $+1$ or $-1$, corresponds to an optimal assignment for the Max All-Equal 3SAT instance.

\begin{remark}[Comparison to \cite{cayton2006robust}]\label{rmk:differences}
As mentioned in the Introduction, the reduction here is significantly more involved than the hardness proof for a related problem in \cite{cayton2006robust}. At a high level, the key difference is that in \cite{cayton2006robust} they were able to use a large number of distance-zero vertices to create a simple structure around the origin. This is no longer possible in our setting (in particular, with bounded diameter graph metrics), which results in graph layouts with much less structure. For this reason, we require a graph that exhibits as much structure as possible. To this end, a reduction from Max All-Equal 3SAT using both literals and clauses in the graph is a much more suitable technique than a reduction from NAE 3SAT using only literals. In fact, it is not at all obvious that the same approach in \cite{cayton2006robust}, applied to unweighted graphs, would lead to a computationally hard instance.
\end{remark}

\section{Approximation Algorithm}\label{sec:approx}
In this section, we formally describe an approximation algorithm using tools from the Dense CSP literature, and prove theoretical guarantees for the algorithm. 
\subsection{Preliminaries: Greedy Algorithms for Max-CSP}
A long line of work studies the feasibility of solving the Max-CSP problem under various related pseudorandomness and density assumptions.
In our case, an algorithm with mild dependence on the alphabet size is extremely important. A very simple greedy approach, proposed and analyzed by Mathieu and Schudy \cite{mathieu2008yet,schudy} (see also \cite{yaroslavtsev2014going}), satisfies this requirement.
\begin{algorithm}
\caption{Greedy Algorithm for Dense CSPs \cite{mathieu2008yet,schudy}}
\begin{algorithmic}[1]
\FUNCTION{GreedyCSP$(\Sigma,n,t_0,\{f_{ij}\})$\label{alg:schudy}}
\STATE Shuffle the order of variables $x_1,\ldots,x_n$ by a random permutation.
\FOR{all assignments $x_1,\ldots,x_{t_0} \in \Sigma^{t_0}$}
\FOR{$(t_0 + 1) \le i \le n$}
\STATE Choose $x_i \in \Sigma$ to maximize
\[ \sum_{j < i} f_{ji}(x_j,x_i) \]
\ENDFOR
\STATE Record $x$ and objective value $\sum_{i \ne j} f_{ij}(x_i,x_j)$.
\ENDFOR
\STATE Return the assignment $x$ found with maximum objective value. 
\ENDFUNCTION
\end{algorithmic}
\end{algorithm}
\begin{theorem}[\cite{mathieu2008yet,schudy}]\label{thm:dense-csp}
Suppose that $\Sigma$ is a finite alphabet, $n \ge 1$ is a positive integer, and for every $i,j \in {n \choose 2}$ we have a function $f_{ij} : \Sigma \times \Sigma \to [-M,M]$. Then for any $\epsilon > 0$, Algorithm~\textsc{GreedyCSP} with $t_0 = O(1/\epsilon^2)$ runs in time $n^2 |\Sigma|^{O(1/\epsilon^2)}$ and returns $x_1,\ldots,x_n \in \Sigma$ such that 
\[ \mathbb{E} \sum_{i \ne j} f_{ij}(x_i,x_j) \ge \sum_{i \ne j} f_{ij}(x^*_i,x^*_j) - \epsilon M n^2  \]
for any $x^*_1,\ldots,x^*_n \in \Sigma$, where $\mathbb{E}$ denotes the expectation over the randomness of the algorithm. 
\end{theorem}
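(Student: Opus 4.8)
The plan is to run the standard analysis of the Mathieu--Schudy greedy scheme: a hybrid (one-variable-exchange) argument that charges the gap to the comparison assignment step by step, with each step's loss controlled by a subsampling concentration estimate. After symmetrizing the $f_{ij}$ and absorbing a factor of two into $M$, write the objective as $\mathrm{obj}(x) := \sum_{i<j} f_{ij}(x_i,x_j)$, so that when placing $x_i$ the already-decided contribution of $x_i$ is exactly $\sum_{j<i} f_{ij}(x_i,x_j)$, which is what the greedy step maximizes; fix an arbitrary comparison assignment $x^*$. If $n \le t_0$ the outer loop enumerates all assignments and returns the exact optimum, so assume $n > t_0$. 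Among the $|\Sigma|^{t_0}$ outer iterations, one guesses $x^*$ correctly on the first $t_0$ variables of the realized random order $\pi$; since the algorithm keeps the best iteration, it suffices to lower-bound $\mathbb{E}_\pi[\mathrm{obj}(\hat x)]$, where $\hat x$ is the assignment built on that ``good'' branch: $\hat x_{\pi(s)} = x^*_{\pi(s)}$ for $s \le t_0$, and for $s > t_0$ the algorithm chooses $\hat x_{\pi(s)}$ to maximize $\hat M_s(a) := \sum_{u < s} f_{\pi(u)\pi(s)}(\hat x_{\pi(u)}, a)$.

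Write $p_s := \pi(s)$. Expanding $\mathrm{obj}(\hat x) = \sum_s \hat M_s(\hat x_{p_s})$ and $\mathrm{obj}(x^*) = \sum_s \hat M^*_s(x^*_{p_s})$ with $\hat M^*_s(a) := \sum_{u<s} f_{p_u p_s}(x^*_{p_u}, a)$, then using that $\hat M_s = \hat M^*_s$ for $s \le t_0$ (good branch) while $\hat M_s(\hat x_{p_s}) \ge \hat M_s(x^*_{p_s})$ for $s > t_0$ (greedy is optimal against the already-placed variables), and finally reindexing the resulting double sum by the \emph{earlier} of the two endpoints, one obtains
\[ \mathrm{obj}(x^*) - \mathrm{obj}(\hat x) \;\le\; \sum_{s > t_0} \Bigl( \nu_s(x^*_{p_s}) - \nu_s(\hat x_{p_s}) \Bigr), \qquad \nu_s(a) := \sum_{t > s} f_{p_s p_t}(a, x^*_{p_t}). \]
So it is enough to bound the expected per-step ``regret'' $\rho_s := \mathbb{E}\bigl[\nu_s(x^*_{p_s}) - \nu_s(\hat x_{p_s})\bigr]$ and show $\sum_{s > t_0} \rho_s \le \epsilon M n^2$.

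For a fixed $s$, the greedy step picked $\hat x_{p_s}$ optimally for $\hat M_s$, which aggregates the pairs of $p_s$ against the $s-1$ already-placed variables --- a uniformly random subset of the other $n-1$. A subsampling/concentration estimate then shows that (after rescaling by $\tfrac{n-1}{s-1}$) $\hat M_s$ is a faithful proxy for the full marginal of $p_s$, with error $O(M\sqrt{s})$ per value of the coordinate; hence greedy's choice is within $O(Mn/\sqrt{s})$ of the best choice against all of $p_s$'s pairs, which is what $\rho_s$ measures, so $\rho_s = O(Mn/\sqrt{s})$. I expect this to be the main obstacle: the assignment to the already-placed variables is itself produced by the greedy run and is therefore correlated with the random identity of that subset, so the concentration argument cannot be invoked as a black box --- one must reveal the randomness of $\pi$ position by position, tracking carefully which quantities are measurable at each stage (this is the technical core of \cite{mathieu2008yet,schudy}). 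Granting it, summing gives $\sum_{s=t_0+1}^{n} O(Mn/\sqrt{s}) = O(Mn^{3/2})$, which is $O(\epsilon M n^2)$ because $n > t_0 = \Theta(1/\epsilon^2)$ forces $1/\sqrt{n} = O(\epsilon)$; fixing the constant in $t_0 = \Theta(1/\epsilon^2)$ and recalling the $n \le t_0$ case is exact then yields $\mathbb{E}[\mathrm{obj}(\hat x)] \ge \mathrm{obj}(x^*) - \epsilon M n^2$. The runtime is the $|\Sigma|^{t_0}$ branches times $O(n^2 |\Sigma|)$ each (the greedy sweep plus evaluating the objective), i.e. $n^2 |\Sigma|^{O(1/\epsilon^2)}$.
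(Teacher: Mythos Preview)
The paper does not prove this theorem; it is quoted as a black-box result from \cite{mathieu2008yet,schudy}, so there is no ``paper's own proof'' to compare against. Your sketch is aimed at the right target: the hybrid/telescoping identity you derive,
\[
\mathrm{obj}(x^*)-\mathrm{obj}(\hat x)\le \sum_{s>t_0}\bigl(\nu_s(x^*_{p_s})-\nu_s(\hat x_{p_s})\bigr),
\]
is correct (it follows cleanly from writing $y^{(s)}$ as the hybrid that agrees with $\hat x$ on positions $\le s$ and with $x^*$ on positions $>s$, telescoping, and using greedy optimality to drop the $\hat M_s$ term), and this is indeed the starting point of the Mathieu--Schudy analysis.

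Where you should be more careful is in the step you explicitly flag and then ``grant.'' Your claimed bound $\rho_s=O(Mn/\sqrt{s})$ is stated as a consequence of a sampling-without-replacement concentration estimate for $\hat M_s$, but $\hat M_s$ is built from $\hat x$-values on the past, not $x^*$-values, so it is \emph{not} an unbiased subsample of any fixed full marginal; the usual Hoeffding-for-sampling argument does not apply to it directly. The actual argument in \cite{mathieu2008yet,schudy} works on the $\nu_s$ side (future positions carry $x^*$-values and, conditional on $p_s$, form a uniform random complement), and then handles the dependence of $\hat x_{p_s}$ on that same randomness by a careful measurability/union-bound step over $|\Sigma|$ alphabet values --- this is precisely the ``technical core'' you allude to. So your outline is right, but the sentence asserting that ``$\hat M_s$ is a faithful proxy for the full marginal'' misidentifies which quantity the concentration is applied to; as written, that step would not go through, even though the conclusion $\rho_s=O(Mn/\sqrt{s})$ and the final summation are the correct ones once the cited lemma is invoked properly.
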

\noindent
In comparison, we note that computing the maximizer using brute force would run in time $|\Sigma|^{n}$, i.e. exponentially slower in terms of $n$.
This guarantee is stated in expectation but, if desired, can be converted to a high probability guarantee by using Markov's inequality and repeating the algorithm multiple times (as in Remark~\ref{rmk:variants}). We use \textsc{GreedyCSP} to solve a minimization problem instead of maximization, which corresponds to negating all of the functions $f_{ij}$.

\subsection{Discretization Argument}
\begin{lemma}\label{lem:lipschitz1}
For $c,R > 0$, the function $x \mapsto (x/c - 1)^2$ is $\frac{2}{c} \max(1, R/c)$-Lipschitz on the interval $[0,R]$.
\end{lemma}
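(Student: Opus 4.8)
The plan is to reduce the claim to a uniform bound on the derivative of $f(x) := (x/c - 1)^2$ and then invoke the mean value theorem. Since $f$ is a polynomial, it is differentiable everywhere, with $f'(x) = \frac{2}{c}\bigl(\frac{x}{c} - 1\bigr)$. For $x,y \in [0,R]$, the mean value theorem gives $|f(x) - f(y)| \le \bigl(\sup_{\xi \in [0,R]} |f'(\xi)|\bigr)\,|x-y|$, so it suffices to show $\sup_{\xi \in [0,R]} |f'(\xi)| \le \frac{2}{c}\max(1, R/c)$.

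The next step is to bound $|f'(\xi)| = \frac{2}{c}\bigl|\frac{\xi}{c} - 1\bigr|$ on $[0,R]$. As $\xi$ ranges over $[0,R]$, the quantity $\frac{\xi}{c} - 1$ ranges over $[-1,\ R/c - 1]$, and since $\xi \mapsto \bigl|\frac{\xi}{c}-1\bigr|$ is convex its maximum over this interval is attained at an endpoint, giving $\bigl|\frac{\xi}{c}-1\bigr| \le \max\bigl(1,\ \bigl|\frac{R}{c} - 1\bigr|\bigr)$. Finally, since $R/c \ge 0$ one has $\bigl|\frac{R}{c} - 1\bigr| \le \max\bigl(1, \frac{R}{c}\bigr)$: if $R/c \le 1$ then $|R/c - 1| = 1 - R/c \le 1$, and if $R/c \ge 1$ then $|R/c - 1| = R/c - 1 \le R/c$. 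Combining, $\bigl|\frac{\xi}{c}-1\bigr| \le \max(1, R/c)$ for every $\xi \in [0,R]$, which gives the claimed bound on $\sup_\xi |f'(\xi)|$ and finishes the proof.

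There is no real obstacle here; the only point needing (mild) care is the elementary comparison $|R/c - 1| \le \max(1, R/c)$, handled by the short case split above. If one prefers to avoid calculus entirely, the same constant follows by the algebraic identity $f(x) - f(y) = \frac{x-y}{c}\bigl(\frac{x}{c} + \frac{y}{c} - 2\bigr)$ together with the bound $\bigl|\frac{x}{c} + \frac{y}{c} - 2\bigr| \le 2\max(1, R/c)$ valid for $x,y \in [0,R]$, which again yields $|f(x) - f(y)| \le \frac{2}{c}\max(1,R/c)\,|x-y|$.
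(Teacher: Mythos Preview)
Your proof is correct and follows essentially the same approach as the paper: bound the derivative $f'(x)=\frac{2}{c}(x/c-1)$ by $\frac{2}{c}\max(1,R/c)$ on $[0,R]$ and apply the mean value theorem. You simply spell out the case split justifying $|x/c-1|\le\max(1,R/c)$ more explicitly than the paper does.
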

\begin{proof}
Because the derivative of the function is $\frac{2}{c} (x/c - 1)$ and $\left|\frac{2}{c}(x/c - 1)\right| \le \frac{2}{c} \max(1, R/c)$ on $[0,R]$, the result
follows from the mean value theorem.
\end{proof}
\begin{lemma}\label{lem:lipschitz2}
For $c,R > 0$ and $y \in \mathbb{R}^r$ with $\|y\| \le R$, the function $x \mapsto (\|x - y\|/c - 1)^2$ is $\frac{2}{c} \max(1,2R/c)$-Lipschitz on $B_R = \{ x : \|x\| \le R\}$.
\end{lemma}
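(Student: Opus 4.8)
The plan is to write the function as a composition $x \mapsto \|x-y\| \mapsto (\|x-y\|/c - 1)^2$ and track Lipschitz constants through the composition. First I would observe that the inner map $g : B_R \to \mathbb{R}$, $g(x) = \|x - y\|$, is $1$-Lipschitz by the reverse triangle inequality: $\big|\|x-y\| - \|x'-y\|\big| \le \|x - x'\|$. Next, I would note that $g$ maps $B_R$ into the interval $[0, 2R]$: for $\|x\| \le R$ and $\|y\| \le R$ we have $0 \le \|x - y\| \le \|x\| + \|y\| \le 2R$.

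Then I would invoke Lemma~\ref{lem:lipschitz1} with its parameter ``$R$'' taken to be $2R$: the scalar function $h(t) = (t/c - 1)^2$ is $\frac{2}{c}\max(1, 2R/c)$-Lipschitz on $[0, 2R]$. Finally, since the composition of an $L_1$-Lipschitz map with an $L_2$-Lipschitz map (where the codomain of the first lies in the domain of the second) is $L_1 L_2$-Lipschitz, the function $x \mapsto h(g(x)) = (\|x - y\|/c - 1)^2$ is $1 \cdot \frac{2}{c}\max(1, 2R/c)$-Lipschitz on $B_R$, which is the claimed bound.

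There is essentially no hard step here; the only point requiring a moment's care is making sure the relevant interval is $[0,2R]$ rather than $[0,R]$, so that Lemma~\ref{lem:lipschitz1} is applied with the doubled radius — this is exactly why the constant is $\frac{2}{c}\max(1, 2R/c)$ and not $\frac{2}{c}\max(1, R/c)$.
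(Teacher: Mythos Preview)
Your proof is correct and is essentially identical to the paper's own argument: the paper also notes that $\|x-y\|$ is $1$-Lipschitz, bounds $\|x-y\| \le 2R$ via the triangle inequality, and then invokes Lemma~\ref{lem:lipschitz1} (with radius $2R$) together with the fact that compositions of Lipschitz functions are Lipschitz.
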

\begin{proof}
Because the function $\|x - y\|$ is $1$-Lipschitz and $\|x - y\| \le \|x\| + \|y\| \le 2R$ by the triangle inequality, the result follows from Lemma~\ref{lem:lipschitz1} and the fact that a composition of Lipschitz functions is Lipschitz.
\end{proof}

\begin{lemma}\label{lem:discretization-main}
Let $\vx_1,\ldots,\vx_n \in \mathbb{R}^r$ be arbitrary vectors such that $\|\vx_i\| \le R$ for all $i$ and $\epsilon > 0$ be arbitrary. Define $S_{\epsilon}$ to be an $\epsilon$-net of $B_R$ as in Lemma~\ref{lem:eps-net-ball}, so $|S_{\epsilon}| \le (3R/\epsilon)^r$. For any input metric over $[n]$ with $\min_{i,j \in [n]} d(i,j) = 1$, there exists $\vx'_1,\ldots,\vx'_n \in S_{\epsilon}$ such that
\[ E(\vx'_1,\ldots,\vx'_n) \le E(\vx_1,\ldots,\vx_n) + 4\epsilon R n^2  \]
where $E$ is \eqref{eqn:kk} defined with respect to an arbitrary graph with $n$ vertices.
\end{lemma}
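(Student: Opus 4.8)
The plan is to round each point $\vx_i$ to its nearest neighbor $\vx_i' \in S_\epsilon$ and control the resulting change in the objective term by term. By the defining property of the $\epsilon$-net from Lemma~\ref{lem:eps-net-ball}, we have $\|\vx_i - \vx_i'\| \le \epsilon$ for every $i$. The key observation is that for any fixed $j$, the map $\vx \mapsto E_{i,j}(\vx) = \left(\|\vx - \vx_j\|/d(i,j) - 1\right)^2$ viewed as a function of the $i$-th coordinate is Lipschitz on $B_R$, and since $d(i,j) \ge 1$, Lemma~\ref{lem:lipschitz2} (applied with $c = d(i,j) \ge 1$, so that $\frac{2}{c}\max(1, 2R/c) \le 2\max(1, 2R) \le 4R$ when $R \ge 1$, or more carefully $\tfrac2c \max(1,2R/c) \le 2 \cdot 2R = 4R$ using $c \ge 1$) gives a Lipschitz constant of at most $4R$.

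First I would fix an ordering and move the points one at a time from $\vx_i$ to $\vx_i'$, writing the total change as a telescoping sum; equivalently, just bound $|E(\vx') - E(\vx)|$ directly. For each pair $i<j$, the term $E_{i,j}$ changes by at most the Lipschitz bound times the displacement: first moving $\vx_i \to \vx_i'$ changes $E_{i,j}$ by at most $4R\|\vx_i - \vx_i'\| \le 4R\epsilon$ (here I use that $\|\vx_i'\| \le R$ as well, so the Lipschitz estimate of Lemma~\ref{lem:lipschitz2} still applies with $y = \vx_j$ — and symmetrically with $y = \vx_i'$ for the second move), and then moving $\vx_j \to \vx_j'$ changes it by at most another $4R\epsilon$. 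Summing over all $\binom{n}{2} \le n^2/2$ pairs, each contributing at most $2 \cdot 4R\epsilon = 8R\epsilon$... which would give $4\epsilon R n^2$. So the arithmetic works out: $\binom{n}{2} \cdot 8R\epsilon \le 4R\epsilon n^2$.

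The one genuinely delicate point — the ``main obstacle,'' though it is minor — is making sure the Lipschitz constant is uniformly at most $4R$ across all pairs: this needs $d(i,j) \ge 1$ (guaranteed by the hypothesis $\min_{i,j} d(i,j) = 1$) so that $c = d(i,j)$ makes $\frac{2}{c}\max(1, 2R/c) \le 4R$, and it needs both endpoints of each one-point move to lie in $B_R$, which holds since $S_\epsilon \subset B_R$. I should also double-check whether $R \ge 1$ is needed or whether the bound $\frac{2}{c}\max(1,2R/c) \le \max(4R, 2) $ needs the extra case; since the statement claims $4\epsilon R n^2$, presumably the intended regime has $R \ge 1/2$ so that $4R \ge 2$, making $4R$ a valid uniform bound. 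I would state this explicitly or absorb the $\max(1,2R)$ into the constant. Apart from that bookkeeping, the argument is a routine application of the two Lipschitz lemmas plus the net property, so I would present it concisely:

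\begin{proof}[Proof sketch]
Let $\vx_i'$ be a nearest point to $\vx_i$ in $S_\epsilon$, so $\|\vx_i - \vx_i'\| \le \epsilon$ and $\|\vx_i'\| \le R$. Since $d(i,j) \ge 1$ for all $i \ne j$, Lemma~\ref{lem:lipschitz2} (with $c = d(i,j)$) shows that for any fixed $y \in B_R$ the map $\vx \mapsto (\|\vx - y\|/d(i,j) - 1)^2$ is $L$-Lipschitz on $B_R$ with $L := \frac{2}{d(i,j)}\max(1, 2R/d(i,j)) \le 4R$ (using $d(i,j) \ge 1$; if $R < 1/2$ replace $4R$ by $2$, which only improves the bound). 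Writing $E(\vx') - E(\vx) = \sum_{i<j}\big(E_{i,j}(\vx') - E_{i,j}(\vx)\big)$ and, for each pair, interpolating through the intermediate configuration where $\vx_i$ has been replaced by $\vx_i'$ but $\vx_j$ has not, we get
\[ |E_{i,j}(\vx') - E_{i,j}(\vx)| \le 4R\|\vx_i - \vx_i'\| + 4R\|\vx_j - \vx_j'\| \le 8R\epsilon. \]
Summing over the at most $n^2/2$ pairs gives $E(\vx') \le E(\vx) + 4\epsilon R n^2$, as claimed.
\end{proof}
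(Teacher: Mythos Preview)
Your proof is correct and follows exactly the same approach as the paper: round each $\vx_i$ to its nearest net point, invoke Lemma~\ref{lem:lipschitz2} to get that each term is $4R$-Lipschitz in each of its two arguments, and sum over the $\binom{n}{2}\le n^2/2$ pairs. The paper's own proof is the two-sentence version of what you wrote; your additional care about the $R<1/2$ edge case (where the Lipschitz constant from Lemma~\ref{lem:lipschitz2} is $2$ rather than $4R$, so the stated bound $4\epsilon R n^2$ would actually be too optimistic) is a subtlety the paper simply glosses over, implicitly assuming $R\ge 1/2$.
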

\begin{proof}
By Lemma~\ref{lem:lipschitz2} the energy $E(\vx_1,\ldots,\vx_n)$ is the sum of ${n \choose 2} \le n^2/2$ many terms, which, for each $i$ and $j$, are individually $4R$-Lipschitz in $\vx_i$ and $\vx_j$. Therefore, defining $\vx'_i$ to be the closest point in $S_{\epsilon}$ for all $i$ gives the desired result. 
\end{proof}
\subsection{Approximation Algorithm}
\begin{algorithm}
\caption{Approximation Algorithm \textsc{KKScheme}}
\begin{algorithmic}[1]
\FUNCTION{\textsc{KKScheme($\epsilon_1,\epsilon_2,R$)}:}
\STATE Build an $\epsilon_1$-net $S_{\epsilon_1}$ of $B_R = \{x : \|x\| \le R\} \subset \mathbb{R}^r$ as in Lemma~\ref{lem:eps-net-ball}.
\STATE Apply the \textsc{GreedyCSP} algorithm of Theorem~\ref{thm:dense-csp} with $\epsilon = \epsilon_2$ to approximately minimize $E(\vx_1,\ldots,\vx_n)$ over $\vx_1,\ldots,\vx_n \in S_{\epsilon_1}^n$.
\STATE Return $\vx_1,\ldots,\vx_n$.
\ENDFUNCTION
\end{algorithmic}
\end{algorithm}
\begin{theorem}[Formal Statement of Theorem~\ref{thm:kk-alg-intro}]\label{thm:kk-alg}
Let $R > \epsilon > 0$ be arbitrary. For any input metric over $[n]$ with $\min_{i,j \in [n]} d(i,j) = 1$, Algorithm~\textsc{KKScheme} with $\epsilon_1 = O(\epsilon/R)$ and $\epsilon_2 = O(\epsilon/R^2)$ runs in time $n^2 (R/\epsilon)^{O(rR^4/\epsilon^2)}$ and outputs $\vx_1,\ldots,\vx_n \in \mathbb{R}^r$ with $\|\vx_i\| \le R$ such that
\[ \mathbb{E}\left[E(\vx_1,\ldots,\vx_n)\right] \le E(\vx^*_1,\ldots,\vx^*_n) + \epsilon n^2 \]
for any $\vx^*_1,\ldots,\vx^*_n$ with $\|\vx^*_i\| \le R$ for all $i$, where $\mathbb{E}$ is the expectation over the randomness of the algorithm.
\end{theorem}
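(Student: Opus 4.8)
The plan is to chain the discretization bound (Lemma~\ref{lem:discretization-main}) with the dense-CSP guarantee (Theorem~\ref{thm:dense-csp}), splitting the allowed additive error $\epsilon n^2$ into two halves, one ``spent'' on rounding the layout onto the net and one spent on the greedy optimization over the net.

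First I would fix the comparison layout $\vx^* = (\vx^*_1,\ldots,\vx^*_n)$ with $\|\vx^*_i\| \le R$ and apply Lemma~\ref{lem:discretization-main} with $\epsilon_1$, obtaining $\vx' \in S_{\epsilon_1}^n$ with $E(\vx') \le E(\vx^*) + 4\epsilon_1 R n^2$. Choosing $\epsilon_1 = \epsilon/(8R)$ makes this error at most $\tfrac12 \epsilon n^2$, and then $|S_{\epsilon_1}| \le (3R/\epsilon_1)^r = (24R^2/\epsilon)^r$. Next I would observe that minimizing $E$ over the finite set $S_{\epsilon_1}^n$ is exactly an instance of dense $2$-CSP \emph{minimization} with alphabet $\Sigma = S_{\epsilon_1}$ and predicates $f_{ij}(x_i,x_j) = -\big(\|x_i-x_j\|/d(i,j) - 1\big)^2$ (the sign flip because \textsc{GreedyCSP} maximizes). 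The one point needing care is the range of these predicates: since every $x_i \in S_{\epsilon_1} \subset B_R$ and $\min_{i,j} d(i,j) = 1$, we have $0 \le \|x_i - x_j\|/d(i,j) \le 2R$, hence $|f_{ij}| \le \max\{1,(2R-1)^2\} =: M = O(R^2)$.

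Running \textsc{GreedyCSP} with accuracy parameter $\epsilon_2$ then returns a (random) layout $\vx \in S_{\epsilon_1}^n$ with $\mathbb{E}\big[\sum_{i\ne j} f_{ij}(x_i,x_j)\big] \ge \sum_{i\ne j} f_{ij}(x'_i,x'_j) - \epsilon_2 M n^2$. Since $\sum_{i \ne j} f_{ij}(y_i,y_j) = -2E(y)$ for any layout $y$, this rearranges to $\mathbb{E}[E(\vx)] \le E(\vx') + \tfrac12 \epsilon_2 M n^2$; choosing $\epsilon_2 = \epsilon/M = \Theta(\epsilon/R^2)$ bounds the second term by $\tfrac12 \epsilon n^2$. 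Combining the two estimates gives $\mathbb{E}[E(\vx)] \le E(\vx^*) + \epsilon n^2$, and since $\vx^*$ was arbitrary (the \textsc{GreedyCSP} bound holds against all alphabet assignments simultaneously, and the rounding step can be applied to whichever $\vx^*$ we compare against) this is the claimed guarantee. The output has $\|\vx_i\| \le R$ automatically because $\vx_i \in S_{\epsilon_1} \subset B_R$. For the runtime, Theorem~\ref{thm:dense-csp} costs $n^2 |\Sigma|^{O(1/\epsilon_2^2)}$; substituting $|\Sigma| \le (24R^2/\epsilon)^r$ and $1/\epsilon_2^2 = O(R^4/\epsilon^2)$ yields $n^2 (R/\epsilon)^{O(rR^4/\epsilon^2)}$.

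This is essentially a reduction, so there is no deep obstacle; the step requiring the most attention is the second one --- recognizing that, after discretization, the Kamada--Kawai objective is a bounded-range dense $2$-CSP (which uses crucially both that $d(i,j)\ge 1$ and that the search is confined to $B_R$, so each pairwise stress term is $O(R^2)$), and then propagating the $\epsilon_1$, $\epsilon_2$, and $M$ dependencies cleanly through the \textsc{GreedyCSP} complexity to land on the stated bound.
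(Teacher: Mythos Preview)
Your proposal is correct and follows exactly the paper's approach: combine the discretization bound of Lemma~\ref{lem:discretization-main} with the dense-CSP guarantee of Theorem~\ref{thm:dense-csp}, then choose $\epsilon_1 = O(\epsilon/R)$ and $\epsilon_2 = O(\epsilon/R^2)$ and read off the runtime. In fact you have spelled out more carefully than the paper does both the $M = O(R^2)$ bound on the predicate range (using $d(i,j)\ge 1$ and $\|x_i - x_j\| \le 2R$) and the factor-of-two bookkeeping between $\sum_{i\ne j}$ and $\sum_{i<j}$.
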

\begin{proof}
By combining Lemma~\ref{lem:discretization-main} with Theorem~\ref{thm:dense-csp} (used as a minimization instead of maximization algorithm), the output $\vx_1,\ldots,\vx_n$ of \textsc{KKScheme} satisfies
\[ E(\vx_1,\ldots,\vx_n) \le E(\vx^*_1,\ldots,\vx^*_n) + 4\epsilon_1 R n^2 + \epsilon_2 R^2 n^2 \]
and runs in time $n^2 2^{O(1/\epsilon_2^2) r\log(3R/\epsilon_1)}$. Taking $\epsilon_2 = O(\epsilon/R^2)$ and $\epsilon_1 = O(\epsilon/R)$ gives the desired result.
\end{proof}
\begin{remark}\label{rmk:variants}
The runtime can be improved to $n^2 + (R/\epsilon)^{O(dR^4/\epsilon^2)}$ using a slightly more complex greedy CSP algorithm \cite{mathieu2008yet}. Also, by the usual argument, a high probability guarantee can be derived by repeating the algorithm $O(\log(2/\delta))$ times, where $\delta > 0$ is the desired failure probability.
\end{remark}
\subsection{Extension to Vertex-Weighted Setting}
In this section, we generalize the approximation algorithm to handle vertex weights. This generalization is useful if vertices have associated \emph{importance weights}, e.g. each vertex represents a different number of people, and larger/more important vertices should be embedded more accurately. Given a probability measure $\mu$ over $[n]$, the weighted Kamada-Kawai objective is
\begin{equation}\label{eqn:kk-weighted}
E^{\mu}(\vx_1,\ldots,\vx_n) = n^2 \sum_{i < j} \mu(i) \mu(j) \left(\frac{\|\vx_i - \vx_j\|}{d(i,j)} - 1\right)^2.
\end{equation}
Note that when $\mu$ is the uniform measure on $[n]$, this reduces to \eqref{eqn:kk}. 

\begin{theorem}\label{thm:kk-alg-weighted}
Let $R > \epsilon > 0$ be arbitrary. Algorithm~\textsc{KKScheme} with $\epsilon_1 = O(\epsilon/R)$ and $\epsilon_2 = O(\epsilon/R^2)$ runs in time $n^{O(rR^4\log(R/\epsilon)/\epsilon^2)}$ and outputs $\vx_1,\ldots,\vx_n \in \mathbb{R}^r$ with $\|\vx_i\| \le R$ such that
\[ \mathbb{E}\left[E(\vx_1,\ldots,\vx_n)\right] \le E(\vx^*_1,\ldots,\vx^*_n) + \epsilon n^2 \]
for any $\vx^*_1,\ldots,\vx^*_n$ with $\|\vx^*_i\| \le R$ for all $i$, where $\mathbb{E}$ is the expectation over the randomness of the algorithm.
\end{theorem}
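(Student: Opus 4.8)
The plan is to mimic the proof of Theorem~\ref{thm:kk-alg}, with the only changes being bookkeeping to account for the vertex-weights $\mu(i)\mu(j)$. The key observation is that the weighted objective $E^\mu$ is again a pairwise CSP: writing $f_{ij}(\vx_i,\vx_j) = -n^2\mu(i)\mu(j)(\|\vx_i-\vx_j\|/d(i,j)-1)^2$ puts it in exactly the form required by Theorem~\ref{thm:dense-csp}, since for $\vx_i,\vx_j$ ranging over $B_R$ we have $(\|\vx_i-\vx_j\|/d(i,j)-1)^2 \le (2R+1)^2 \le 9R^2$ (using $d(i,j)\ge 1$, $R>1$), so $|f_{ij}| \le 9n^2 R^2 \mu(i)\mu(j) =: M_{ij}$. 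The subtlety is that the bound $M_{ij}$ on each term is now non-uniform across $i,j$, so I cannot directly invoke Theorem~\ref{thm:dense-csp} with a single $M$; instead I would bound $\sum_{i\ne j}M_{ij} = 9n^2R^2\sum_{i\ne j}\mu(i)\mu(j) \le 9n^2R^2$ and use the version of the greedy-CSP guarantee whose error is additive in $\sum_{i\ne j}\|f_{ij}\|_\infty$ rather than $Mn^2$ (this is how the Mathieu--Schudy / Schudy bound is actually stated, and is the natural normalization). Equivalently, one can simply renormalize: set $\tilde f_{ij} = f_{ij}/(9R^2)$, note $\sum_{i\ne j}\|\tilde f_{ij}\|_\infty \le n^2$, apply the additive-error greedy guarantee with parameter $\epsilon_2$ to get error $\le \epsilon_2 \cdot 9R^2 n^2$ after rescaling back.

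The second ingredient is the discretization step, the analogue of Lemma~\ref{lem:discretization-main}. Each term of $E^\mu$ is $\mu(i)\mu(j)$ times the term of $E$, hence (by Lemma~\ref{lem:lipschitz2}) individually $4R\,\mu(i)\mu(j)n^2$-Lipschitz in each of $\vx_i,\vx_j$ when $R>1$. Rounding every $\vx_i$ to its nearest point $\vx_i'$ in an $\epsilon_1$-net $S_{\epsilon_1}$ of $B_R$ therefore changes the objective by at most $\sum_{i<j} 2\cdot 4R\,\mu(i)\mu(j)n^2\epsilon_1 \le 4R n^2 \epsilon_1$, using $\sum_{i<j}\mu(i)\mu(j)\le 1/2$. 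So the weighted analogue of Lemma~\ref{lem:discretization-main} holds with the \emph{same} error bound $4\epsilon_1 R n^2$ as the unweighted case --- the weights only help.

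Combining the two steps exactly as in the proof of Theorem~\ref{thm:kk-alg}: the output satisfies
\[ \mathbb{E}\,E^\mu(\vx_1,\ldots,\vx_n) \le E^\mu(\vx_1^*,\ldots,\vx_n^*) + 4\epsilon_1 R n^2 + 9\epsilon_2 R^2 n^2, \]
and choosing $\epsilon_1 = O(\epsilon/R)$ and $\epsilon_2 = O(\epsilon/R^2)$ makes the total additive error $\epsilon n^2$. For the runtime, $|S_{\epsilon_1}| \le (3R/\epsilon_1)^r = (R/\epsilon)^{O(r)}$, and Theorem~\ref{thm:dense-csp} with alphabet $\Sigma = S_{\epsilon_1}$ and accuracy $\epsilon_2 = \Theta(\epsilon/R^2)$ runs in time $|\Sigma|^{O(1/\epsilon_2^2)} \cdot n^{O(1)}$; here, because the statement of Theorem~\ref{thm:kk-alg-weighted} quotes the runtime as $n^{O(rR^4\log(R/\epsilon)/\epsilon^2)}$ rather than the sharper $n^2(R/\epsilon)^{O(rR^4/\epsilon^2)}$, I would just use the basic bound $n^{O(\log|\Sigma|/\epsilon_2^2)} = n^{O(r\log(R/\epsilon)\cdot R^4/\epsilon^2)}$, which is the claimed form.

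I expect the only real point requiring care --- the ``main obstacle,'' though it is minor --- is making the CSP reduction honest about the non-uniform per-term magnitudes: one must use the form of the Mathieu--Schudy guarantee with additive error proportional to $\sum_{i\ne j}\|f_{ij}\|_\infty$ (or equivalently renormalize the $f_{ij}$ before applying Theorem~\ref{thm:dense-csp} as literally stated), and then verify $\sum_{i\ne j}\mu(i)\mu(j)\le 1$ so that this sum is still $O(n^2 R^2)$. Everything else is a verbatim repetition of the unweighted argument with the factor $\mu(i)\mu(j)$ carried along and bounded using $\sum_i\mu(i)=1$. I would also remark, as in Remark~\ref{rmk:variants}, that the $n^2$-type additive-splitting improvement and the boosting-to-high-probability step carry over unchanged.
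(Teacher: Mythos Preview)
Your discretization step is correct and matches the paper exactly: Lemma~\ref{lem:discretization-main} goes through for $E^\mu$ with the same $4\epsilon_1 R n^2$ bound, since $\sum_{i<j}\mu(i)\mu(j)\le 1/2$.

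The CSP step, however, diverges from the paper and contains a gap. The paper does \emph{not} reuse \textsc{GreedyCSP}/Theorem~\ref{thm:dense-csp} here; it explicitly says ``we require a different dense CSP algorithm'' and invokes Theorem~\ref{thm:dense-csp-weighted}, a vertex-weighted guarantee proved separately via Sherali--Adams and correlation rounding (Appendix~\ref{apdx:vertex-weighted-csp}). The stated runtime $n^{O(rR^4\log(R/\epsilon)/\epsilon^2)}$ is precisely the $n^{O(\log|\Sigma|/\epsilon_2^2)}$ runtime of that LP-based algorithm, not a loosening of the \textsc{GreedyCSP} bound.

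Your attempt to reuse Theorem~\ref{thm:dense-csp} rests on a version of the Mathieu--Schudy guarantee with additive error $\epsilon\sum_{i\ne j}\|f_{ij}\|_\infty$ rather than $\epsilon M n^2$. That is not what Theorem~\ref{thm:dense-csp} says, and you do not prove it. The difficulty is real: with general $\mu$, a single vertex can carry weight $\Theta(1)$, so individual terms $n^2\mu(i)\mu(j)(\cdot)$ have sup-norm $\Theta(n^2R^2)$, not $O(R^2)$; plugging the uniform bound $M=\Theta(n^2R^2)$ into Theorem~\ref{thm:dense-csp} yields a vacuous error of order $\epsilon n^4R^2$. Whether the greedy analysis of \cite{mathieu2008yet,schudy} can be sharpened to the $\sum_{i\ne j}\|f_{ij}\|_\infty$ form under arbitrary vertex weights is a nontrivial question --- the random permutation treats all vertices symmetrically, and concentration of the seed sample may fail when a few vertices dominate the objective --- and the authors evidently chose to sidestep it by switching to the Sherali--Adams route. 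So either supply that sharpened greedy analysis, or follow the paper and invoke Theorem~\ref{thm:dense-csp-weighted}.
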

\begin{proof}
The proof is the same as Theorem~\ref{thm:kk-alg}, except that we require a different dense CSP algorithm. More precisely, we can directly verify that the discretization Lemma, Lemma~\ref{lem:discretization-main}, holds with the same guarantee for the weighted Kamada-Kawai objective. This reduces the problem to approximating a dense CSP with vertex weights, for which we use Theorem~\ref{thm:dense-csp-weighted}.
\end{proof}
The following Theorem formally describes the guarantee we use for approximately optimizing dense CSPs with vertex/variable weights. This result can be proved by slightly modifying the algorithm and analysis in \cite{yoshida2014approximation}. For completeness, we provide a proof in Appendix~\ref{apdx:vertex-weighted-csp}.
\begin{theorem}[Proved in Appendix~\ref{apdx:vertex-weighted-csp}]\label{thm:dense-csp-weighted}
Suppose that $\Sigma$ is a finite alphabet, $n \ge 1$ is a positive integer, and for every $i,j \in {n \choose 2}$ we have a function $f_{ij} : \Sigma \times \Sigma \to [-M,M]$. Then for any $\epsilon > 0$, there exists an algorithm which runs in time $n^{O(\log |\Sigma|/\epsilon^2)}$ and returns $x_1,\ldots,x_n \in \Sigma$ such that 
\[ \mathbb{E}\left[\mathbb{E}_{i,j \sim \mu} f_{ij}(x_i,x_j)\right] \ge \mathbb{E}_{i,j \sim \mu} f_{ij}(x^*_i,x^*_j) - \epsilon M  \]
for any $x^*_1,\ldots,x^*_n \in \Sigma$, where the outer $\mathbb{E}$ denotes the expectation over the randomness of the algorithm. 
\end{theorem}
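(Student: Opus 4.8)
The plan is to follow the Sherali--Adams-plus-correlation-rounding strategy of \cite{yoshida2014approximation} (in the style of Raghavendra--Tan), modifying each step so that the relevant averages over vertices are taken with respect to $\mu$ rather than the uniform distribution. Set $t = \Theta(\log|\Sigma|/\epsilon^2)$. First I would write down the level-$t$ Sherali--Adams relaxation of the weighted Max-CSP: the variables are local distributions $\{\mu_S\}_{|S|\le t}$, where $\mu_S$ is a distribution on $\Sigma^S$, subject to the usual consistency-under-marginalization constraints, and the objective is the linear functional
\[ \mathbb{E}_{i,j\sim\mu}\langle \mu_{\{i,j\}}, f_{ij}\rangle \;=\; \sum_{i,j}\mu(i)\mu(j)\sum_{a,b\in\Sigma}\mu_{\{i,j\}}(a,b)\, f_{ij}(a,b). \]
This LP has $n^{O(t)}|\Sigma|^{O(t)}$ variables and constraints and is solved in time $n^{O(t)}|\Sigma|^{O(t)} = n^{O(\log|\Sigma|/\epsilon^2)}$ (the $|\Sigma|^{O(t)}$ factor is absorbed into the exponent when $|\Sigma|$ is polynomially bounded in $n$, which holds in our intended application where $|\Sigma|$ is a constant). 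Any integral assignment $x^*$ induces a feasible point of value $\mathbb{E}_{i,j\sim\mu} f_{ij}(x^*_i,x^*_j)$, so the LP optimum $\mathrm{OPT}_{\mathrm{LP}}$ is an upper bound on it.

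The heart of the argument is a $\mu$-weighted correlation-rounding lemma. View an optimal LP solution as the consistent low-order marginals of a pseudo-distribution on $(X_1,\ldots,X_n)$. Draw $i_0,i_1,\ldots,i_t$ i.i.d.\ from $\mu$ and put $S_k=\{i_1,\ldots,i_k\}$; the chain rule for mutual information, applied to the genuine consistent size-$\le t$ local distributions, gives $\sum_{k=0}^{t-1} I(X_{i_0};X_{i_{k+1}}\mid X_{S_k}) = I(X_{i_0};X_{S_t}) \le H(X_{i_0}) \le \log|\Sigma|$. Averaging over the i.i.d.\ draws and over $k$ uniform in $\{0,\ldots,t-1\}$ produces a random conditioning set $S$ whose vertices are i.i.d.\ $\mu$-samples, together with a fresh $\mu$-sampled pair $(i,j)$, satisfying $\mathbb{E}\, I(X_i;X_j\mid X_S) \le \log|\Sigma|/t$; collisions $i_0=i_{k+1}$ contribute only a lower-order term, handled either by sampling the $i_\bullet$ without replacement, by folding the diagonal weight $\sum_i\mu(i)^2$ into the error, or (as in our application) by taking $f_{ii}\equiv 0$. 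Pinsker's inequality together with Jensen then yields $\mathbb{E}\,\|\mu_{\{i,j\}}(\cdot\mid X_S) - \mu_i(\cdot\mid X_S)\otimes\mu_j(\cdot\mid X_S)\|_{\mathrm{TV}} \le \sqrt{\log|\Sigma|/(2t)} \le \epsilon/4$ for a suitable $t=\Theta(\log|\Sigma|/\epsilon^2)$.

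The rounding is then standard: sample $k$, sample $S$ (vertices i.i.d.\ from $\mu$), sample $x_S\sim\mu_S$, and independently set each remaining $x_i\sim\mu_i(\cdot\mid X_S=x_S)$, keeping the already-chosen values on $S$. The expected objective of the rounded assignment equals $\mathbb{E}_{i,j\sim\mu}\sum_{a,b}\mu_i(a\mid x_S)\mu_j(b\mid x_S)f_{ij}(a,b)$; by the estimate above this differs from $\mathbb{E}_{i,j\sim\mu}\langle\mu_{\{i,j\}}(\cdot\mid x_S),f_{ij}\rangle$ by at most $2M\cdot(\epsilon/4)$ in expectation over $(S,x_S)$, while by the tower property (using marginal consistency, valid since $|S\cup\{i,j\}|\le t$) the expectation of the latter over $(S,x_S)$ is exactly $\mathrm{OPT}_{\mathrm{LP}}$. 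Hence the expected rounded value is at least $\mathrm{OPT}_{\mathrm{LP}} - \epsilon M/2 \ge \mathbb{E}_{i,j\sim\mu} f_{ij}(x^*_i,x^*_j) - \epsilon M$, which is the claimed guarantee (the theorem only asks for an expectation over the algorithm's own randomness, so no derandomization is needed), and the runtime is dominated by the LP solve, $n^{O(\log|\Sigma|/\epsilon^2)}$.

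The main obstacle, and essentially the only place the vertex weights do anything nontrivial, is the correlation-rounding telescoping in the $\mu$-weighted setting: one must verify that the chain-rule identity still telescopes when the conditioning set $S$ \emph{and} the test pair $(i,j)$ are all drawn i.i.d.\ from the same measure $\mu$ (this works because the $t+1$ draws $i_0,\ldots,i_t$ are exchangeable), and that sampling collisions and the resulting ``diagonal'' contributions are controlled. Everything else --- solving the LP, Pinsker/Jensen, and the tower-property accounting --- is a routine re-tracing of the dense-CSP rounding of \cite{yoshida2014approximation} with $\mathbb{E}_{i,j\sim\mu}$ in place of the uniform average. Together with the discretization step (Lemma~\ref{lem:discretization-main}, which applies verbatim to \eqref{eqn:kk-weighted} with $M=O(R)$) this is exactly what is needed to conclude Theorem~\ref{thm:kk-alg-weighted}.
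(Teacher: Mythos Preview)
Your proposal is correct and follows essentially the same route as the paper: solve the degree-$O(\log|\Sigma|/\epsilon^2)$ Sherali--Adams relaxation, establish a $\mu$-weighted correlation-rounding lemma via the entropy chain rule with conditioning vertices drawn i.i.d.\ from $\mu$, and round by sampling each $x_i$ independently from its conditional marginal, with Pinsker plus Jensen controlling the loss. The only noteworthy difference is cosmetic: the paper \emph{enumerates} all conditioning sets $S$ of the target size and all assignments $x_S$, returning the best, whereas you \emph{sample} $S$ and $x_S$; both yield the stated in-expectation guarantee, and the enumeration is what appears explicitly in the paper's runtime accounting.
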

\section{Experiments}
We implemented the \textsc{GreedyCSP}-based algorithm described above as well as a standard gradient descent approach to minimizing the Kamada-Kawai objective. In this section we compare the behavior of these algorithms in a few interesting instances. In addition to gradient descent, a couple of other local search heuristics are popular for minimizing the Kamada-Kawai objective: 1) the original algorithm proposed by Kamada and Kawai \cite{kamada1989algorithm}, which updates single points at a time using a Newton-Raphson scheme, and 2) a variational approach known as \emph{majorization}, which optimizes a sequence of upper bounds on the KK objective \cite{de1977applications,gansner2004graph}, where each step reduces to solving a Laplacian system. The recent works \cite{zheng2018graph,borsig2020stochastic} compared these local search heuristics and argued that (stochastic) gradient descent, proposed in the early work of \cite{kruskal1964multidimensional}, is one of the best performing methods in practice. For this reason, we focus on comparing with gradient descent.

\subsubsection{Some Graph Drawing Examples.} 
In Figure~\ref{fig:watts_strogatz} we show the result of embedding a random Watts-Strogatz ``small world''  graph \cite{watts1998collective}, a model of random graph intended to reflect some properties of real world networks. In Figure~\ref{fig:elt3} we show an embedding of the ``3elt'' graph from \cite{graphs}; in this case, it's interesting that all of the methods optimizing \eqref{eqn:kk} seem to find the same solution, except Greedy suffers a small loss due to discretization. This suggests that this solution may be the global optimum.

Note that in all figures, the MDS/Kamada-Kawai objective value achieved (normalized by $1/n^2$, where $n$ is the number of vertices) is included in the subtitle of each plot.  For comparison, in the bottom right of each Figure we display the standard spectral embedding given by embedding each vertex according to the entries of the bottom two nontrivial eigenvectors of the graph Laplacian.
\begin{figure}
    \centering
    \includegraphics[scale=0.5,trim=0 0.1 0 0,clip]{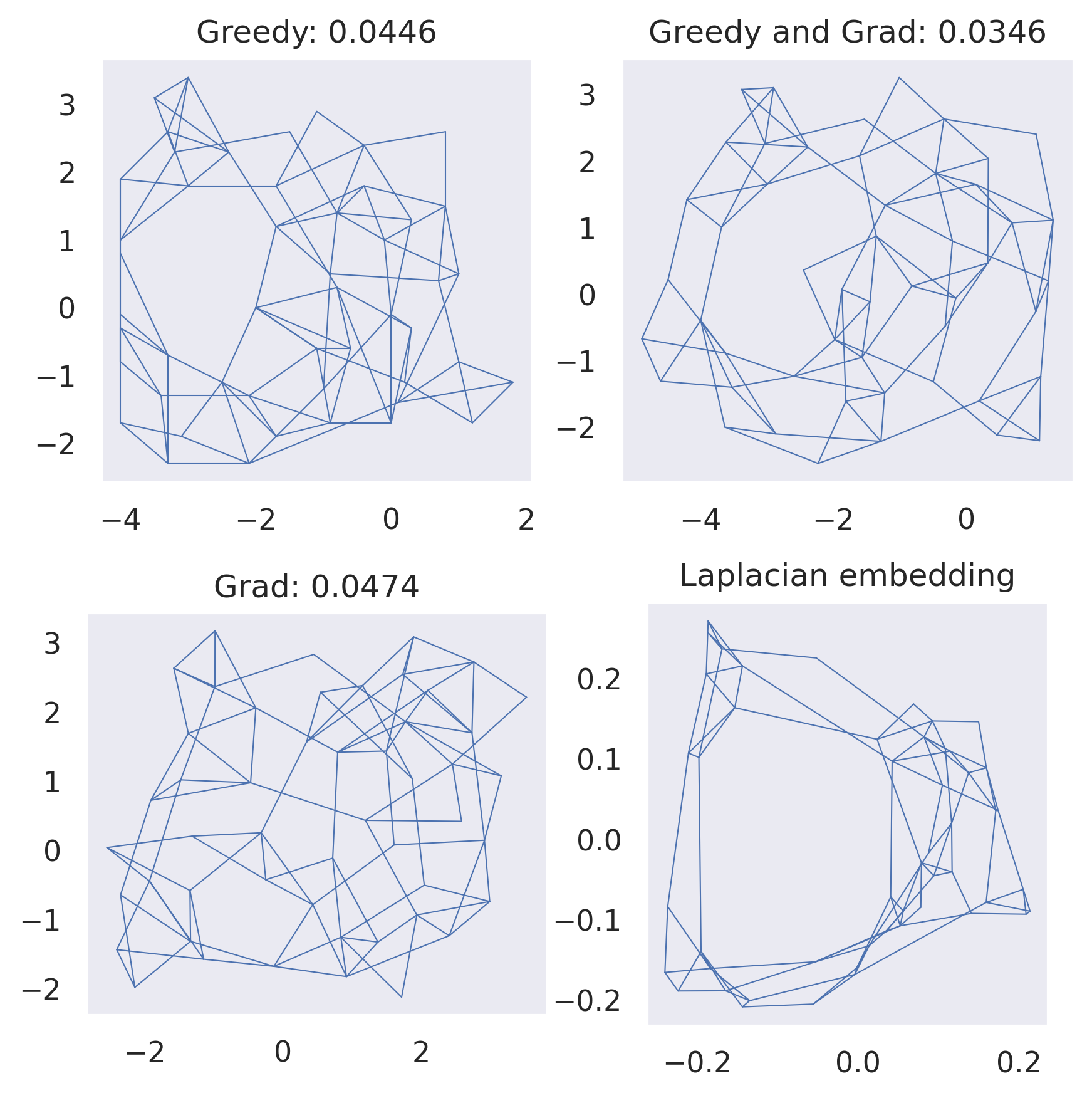}
    \caption{Embeddings of Watts Strogatz graph on 50 nodes with graph parameters $K = 4$ and $\beta = 0.3$ and $t_0 = 3$ for \textsc{GreedyCSP}. }
    \label{fig:watts_strogatz}
\end{figure}

\begin{figure}
    \centering
    \includegraphics[scale=0.5]{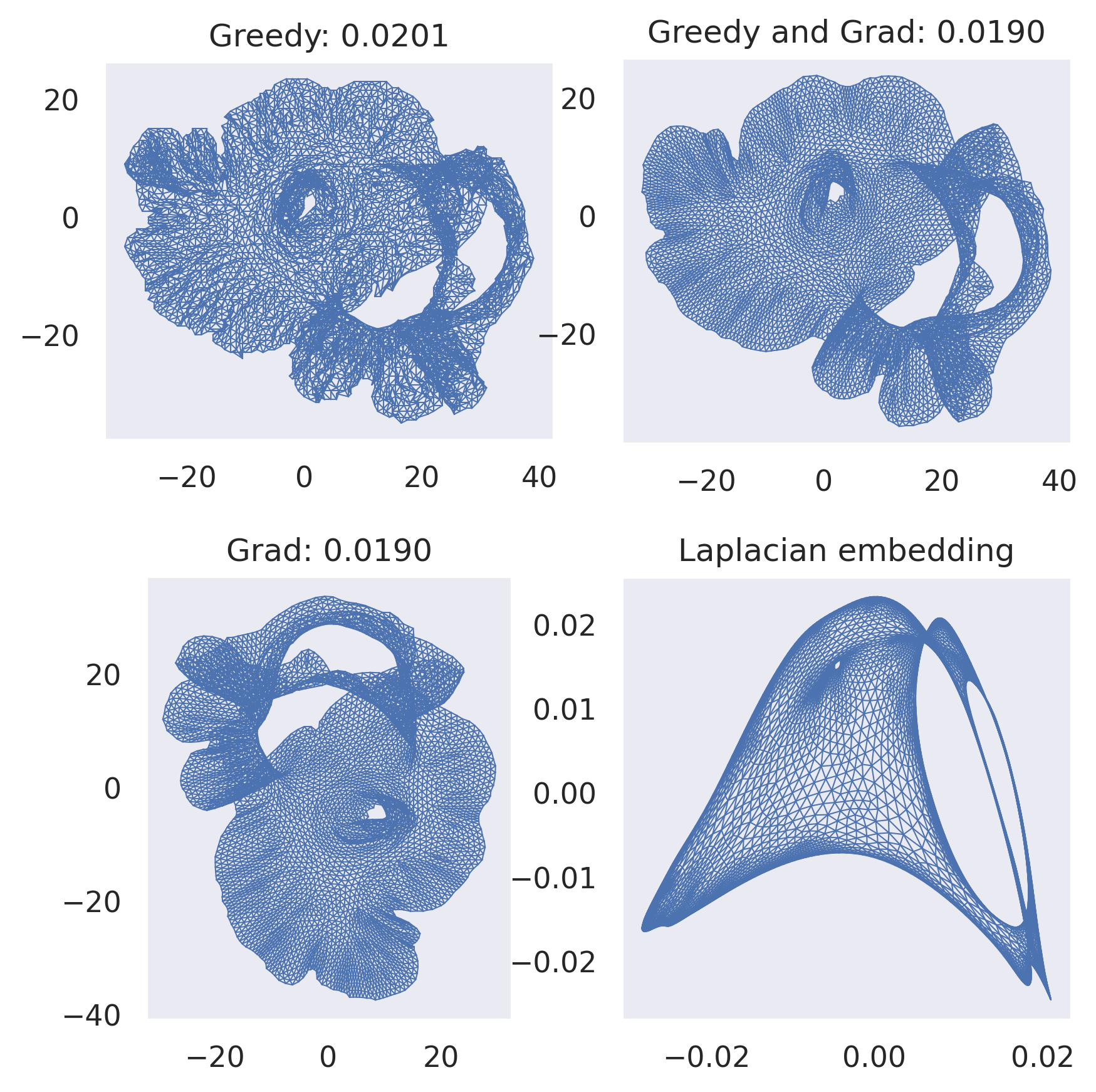}    
    \caption{3elt graph from AG Monien collection \cite{graphs}; \textsc{GreedyCSP} run with parameter $t_0 = 2$.}
    \label{fig:elt3}
\end{figure}

\subsubsection{Experiment with restarts.} The algorithm we propose in Theorem~\ref{thm:kk-alg} is randomized, which leaves open the possibility that better results are obtained by running the algorithm multiple times and taking the best result.
In Figure~\ref{fig:davis}, we show the result of embedding a well-known social network graph, the Davis Southern Women Network \cite{davis2009deep}, by running all methods 10 times and taking the result with best objective value. This graph has a total of 32 nodes and records the attendance of 18 Southern women at 14 social events during the 1930s. To compare with the minimum, the average objective value achieved in the run is 0.0588, 0.0498, and 0.0515 for Greedy, Greedy and Grad, and Grad respectively so all methods did improved slightly by running multiple times. Finally, we note that running gradient descent with 30 restarts (as opposed to 10) improved its best score to $0.0478$, essentially the same as the Greedy and Grad result.

\begin{figure}
    \centering
    \includegraphics[scale=0.5,trim=0 5 0 0,clip]{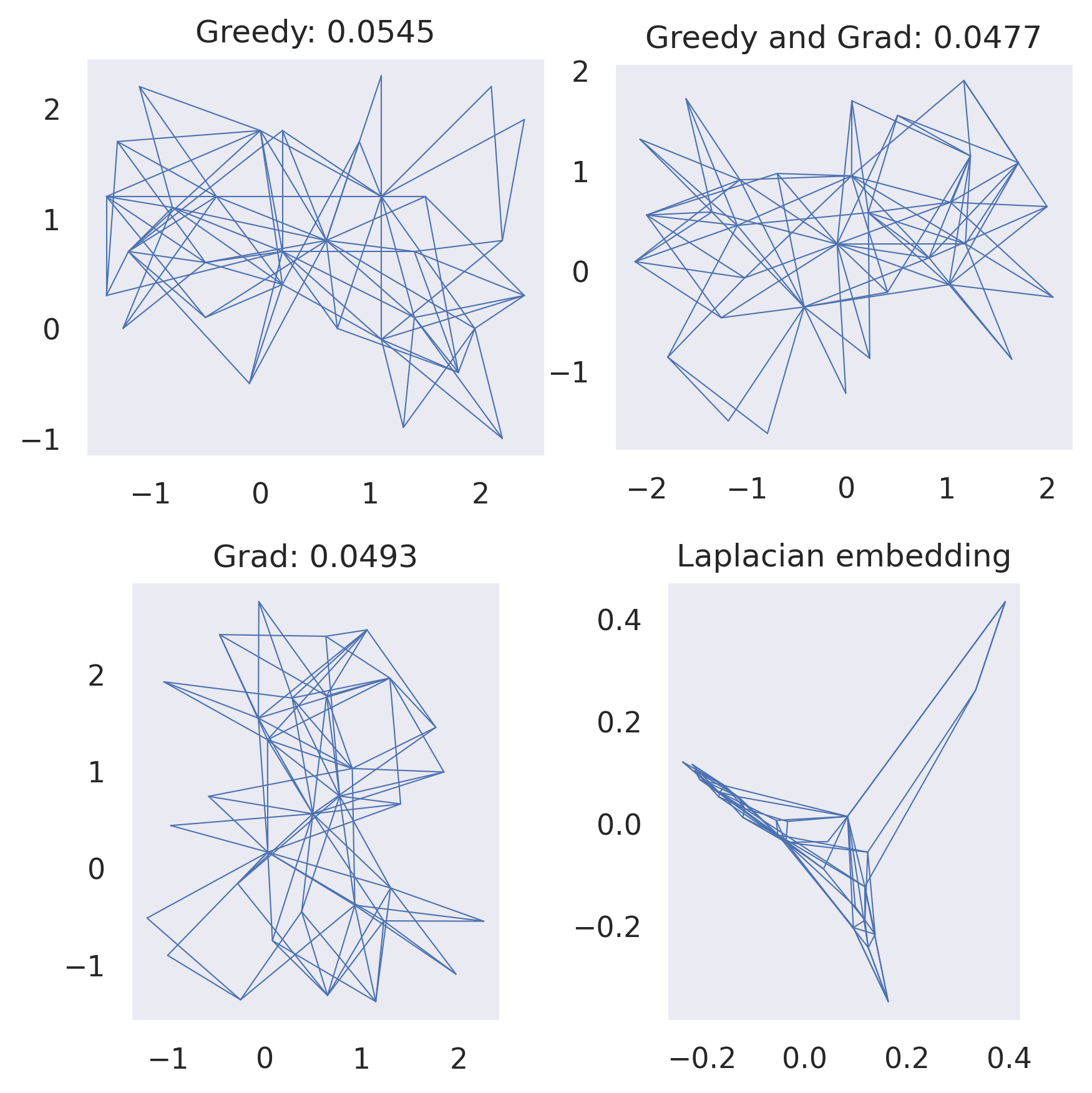}
    \caption{Embedding of Davis Southern Women Network graph. The top left figure was generated using \textsc{GreedyCSP} with $t_0 = 3$.}
    \label{fig:davis}
\end{figure}

\subsubsection{Community Detection Experiment.} A lot of the recent interest in force-directed graph drawing algorithms has been in their ability to discover interesting latent structure in data and with a view towards applications like non-linear dimensionality reduction. As a test of this concept on synthetic data, we tested the algorithms on a celebrated model of latent community structure in graphs, the stochastic block model. The results are shown in Figure~\ref{fig:sbm}, along with the results of a standard spectral embedding using the bottom two nontrivial eigenvectors of the Laplacian. We did not draw the edges in this case as they make the Figure difficult to read; more importantly, the location of points in the embedding show that nontrivial community structure was recovered; for example, the green and blue communities are roughly linearly separable in all of the embeddings.
Note that the spectral embedding approach admits strong provable guarantees for community recovery (see the survey \cite{abbe2017community}), and so the interesting thing to observe here is that the force-directed drawing methods also recover nontrivial information about the latent structure.

\begin{table}
    \centering
    \begin{tabular}{l|c|c|c}
    Runtime & Greedy & Grad & Laplacian \\
    \hline
    Davis &  5.6 s & 4 s & 4 ms \\
    Watts-Strogatz  & 453 s & 4 s & 20 ms 
    \end{tabular}
    \caption{Runtimes for methods with parameters used in figures.}
    \label{tab:my_label}
\end{table}

\subsubsection{Implementation details.} All experiments were performed on a standard Kaggle GPU Kernel with a V80 GPU. Gradient descent was run with learning rate $0.005$ for $4000$ steps on all instances. We seeded the RNG with zero before each simulation for reproducibility.
For the greedy method, we eliminated the rotation and translation degrees of freedom when implementing the initial brute force step; the  parameter $R$ was set to $2.5$ for the Davis experiment, and set to $4$ for all others --- informally, the tuning rule for this parameter is to increase its value until the plot does not hit the boundary of the region. We compare runtimes in Table~\ref{tab:my_label}; the runtime for Greedy in Watts-Strogatz is much larger due to the larger value of $n$ and of $R$ used; the latter roughly corresponds to the larger diameter of the underlying graph (cf. Lemma~\ref{lm:energy}).
\begin{figure}
    \centering
    \includegraphics[scale=0.5]{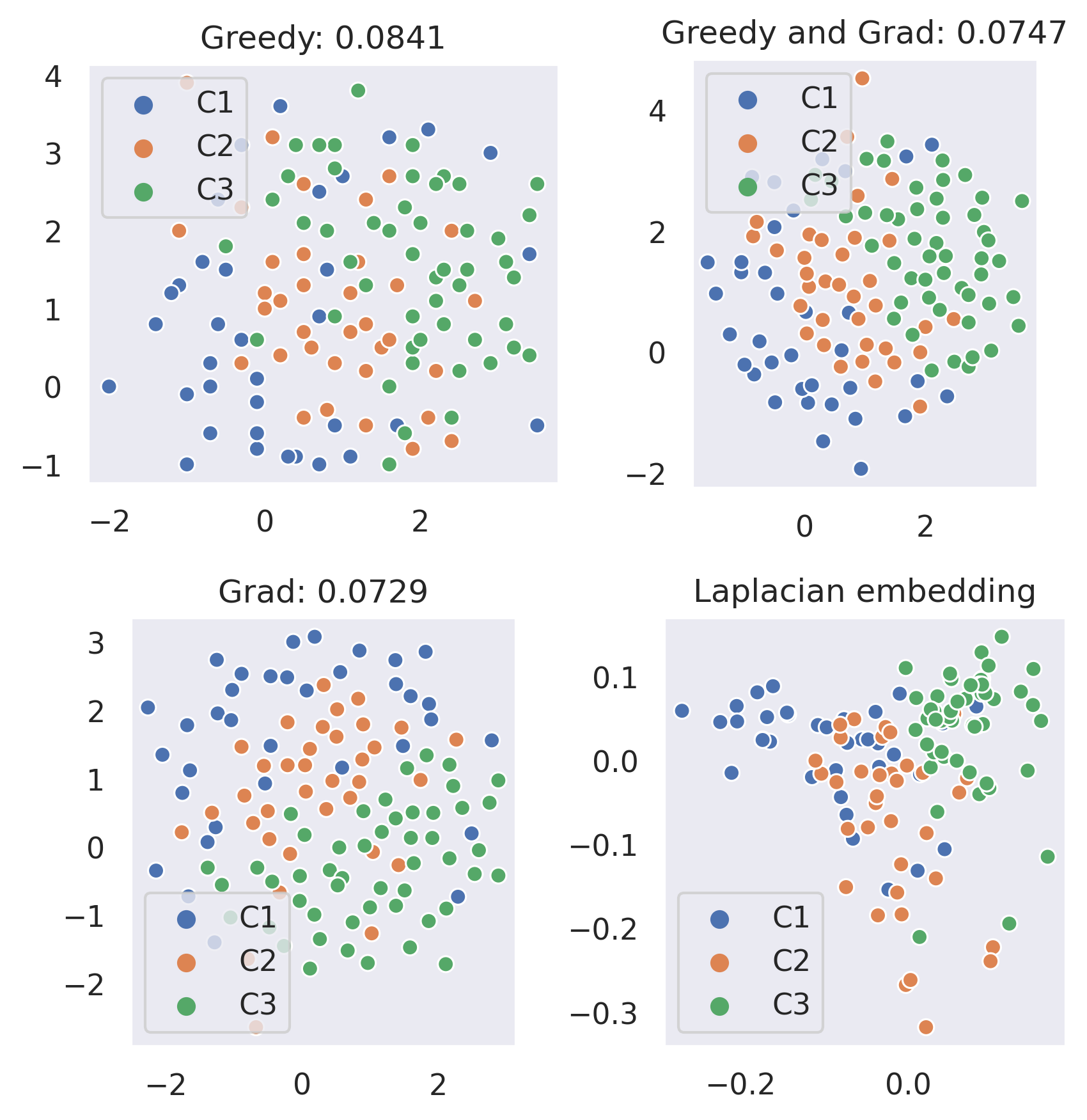}
     \caption{Embeddings of a 3-community Stochastic Block Model (SBM) with connection probabilities $\left(\protect\begin{smallmatrix}0.09 & 0.03 & 0.02 \\
             0.03 & 0.15 & 0.04 \\
             0.02 & 0.04 & 0.1\protect\end{smallmatrix}\right)$ and community sizes $35,35,50$. Colors correspond to latent community assignments. The top left is constructed using the \textsc{GreedyCSP} algorithm with $t_0 = 3$. For this experiment only, we used the degree-normalized Laplacian since it is generally preferred in the context of the SBM.}
    \label{fig:sbm}
\end{figure}
\section{Conclusions}
Our theory and experimental results suggest the following natural question: does gradient descent, with enough random restarts, have a similar provable guarantee to Theorem~\ref{thm:dense-csp}? As noted in our experiments and in the experiments of \cite{zheng2018graph}, gradient-based optimization often seems to find high quality (albeit not global) minima of the Kamada-Kawai objective, even though the loss is highly non-convex. In fact, combining our analysis with a different theorem from \cite{schudy} proves that running a variant of \textsc{GreedyCSP} without the initial brute force step (i.e. with $t_0 = 0$), achieves an additive $O(\epsilon n^2)$ approximation if we repeat the algorithm $2^{2^{1/\epsilon^2}}$ many times. A similar guarantee for gradient descent, a different sort of greedy procedure, sounds plausible. 

\section*{Acknowledgements}
Frederic Koehler was supported in part by NSF CAREER Award CCF-1453261, NSF Large CCF-1565235, Ankur Moitra's ONR Young Investigator Award, and E. Mossel's Vannevar Bush Fellowship ONR-N00014-20-1-2826. The work of J. Urschel was supported in part by ONR Research Contract N00014-17-1-2177. The authors would like to thank Michel Goemans for valuable conversations on the subject. The authors are grateful to Louisa Thomas for greatly improving the style of presentation.

{ \small 
	\bibliographystyle{plain}
	\bibliography{bib} }

\newpage 
\appendix

\section{Structural Results: A Proof of Lemma \ref{lm:diam}}\label{app:struct}

In this section, we provide a proof of Lemma \ref{lm:diam}, which upper bounds the diameter of any optimal layout of a diameter $D$ graph. However, before we proceed with the proof, we first prove an estimate for the concentration of points $\vx_i$ at some distance from the marginal median in any optimal layout. The \emph{marginal median} $\vec{m} \in \mathbb{R}^r$ of a set of points $\vx_1,\ldots,\vx_n \in \mathbb{R}^r$ is the vector whose $i^{th}$ component $\vec{m}(i)$ is the univariate median of $\vx_1(i),\ldots,\vx_n(i)$ (with the convention that, if $n$ is even, the univariate median is given by the mean of the $(n/2)^{th}$ and $(n/2 +1)^{th}$ ordered values). The below result, by itself, is not strong enough to prove our desired diameter estimate, but serves as a key ingredient in the proof of Lemma \ref{lm:diam}. 

\begin{lemma}\label{lm:conc}
Let $G=([n],E)$ have diameter $D$. Then, for any optimal layout $\vec x \in \mathbb{R}^{rn}$, i.e., $\vx$ such that $E(\vx) \le E(\vec{y})$ for all $\vec{y} \in \mathbb{R}^{rn}$, 
$$\big| \{ i \in [n] \, | \, \| \vec m - \vec x_i \|_\infty \ge (C+k) D \} \big| \le 2r n C^{-\sqrt{2}^k}$$
for all $C>0$ and $k \in \mathbb{N}$, where $\vec m$ is the marginal median of $\vx_1,\ldots, \vx_n$.
\end{lemma}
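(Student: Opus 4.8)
The plan is to exploit the optimality of $\vx$ by comparing it to a modified layout in which the ``far'' points are pulled back toward the marginal median, and to show that if too many points are far then this modification strictly decreases the energy. The key structural fact to use is that the marginal median coordinate-wise minimizes the number of points on either side: for each coordinate $a \in [r]$, at most $n/2$ of the values $\vx_1(a),\dots,\vx_n(a)$ lie strictly above $\vec m(a)$, and at most $n/2$ lie strictly below. Combined with the diameter bound $d(i,j) \le D$ for all $i,j$, this means that if $\|\vec m - \vx_i\|_\infty$ is large in coordinate $a$ (say $\vx_i(a) - \vec m(a) = t \gg D$), then there are at least $n/2$ indices $j$ with $\vx_j(a) \le \vec m(a)$, hence $\|\vx_i - \vx_j\| \ge t$, hence $E_{i,j}(\vx) \ge (t/D - 1)^2$. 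So a point that is far from the median in sup-norm already contributes a lot of energy through its interactions with the ``bulk'' half of the points on the other side.

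Concretely, I would first prove a base case: the set $B_0 = \{ i : \|\vec m - \vx_i\|_\infty \ge CD\}$ has size at most $2rnC^{-\sqrt2}$... actually the cleaner route is to set up a one-step contraction inequality. Fix a coordinate $a$ and a threshold $s$, and let $A_s = \{i : \vx_i(a) - \vec m(a) \ge s\}$ (and symmetrically below; handle one side, double at the end, then sum over $r$ coordinates). Consider the perturbation that replaces $\vx_i(a)$ by $\vx_i(a) - \delta$ for every $i \in A_s$ simultaneously, for small $\delta > 0$. The change in energy: (i) interactions between $A_s$ and the $\ge n/2$ points with $\vx_j(a) \le \vec m(a)$ strictly decrease, each $E_{i,j}$ having derivative $\le -\frac{2}{D}(\frac{s}{D}-1) \cdot \frac{\text{something}}{}$ — more carefully, $\frac{\partial}{\partial(-\delta)}(\|\vx_i-\vx_j\|/d(i,j) - 1)^2 \le -\frac{2}{D}(\|\vx_i-\vx_j\|/D - 1)\cdot \frac{|\vx_i(a)-\vx_j(a)|}{\|\vx_i-\vx_j\|}$, and for these $j$ this is bounded above by a negative quantity of order $-\frac1D(\frac sD - 1)$; (ii) interactions within $A_s$ are unchanged to first order in the symmetric shift (pairwise differences in coordinate $a$ are preserved); (iii) interactions between $A_s$ and the remaining points $j \notin A_s$ with $\vx_j(a) > \vec m(a)$: here the distance could decrease, which only helps, or we bound the harm crudely — but in fact for these $j$ we have $\vx_j(a) < \vx_i(a)$ so moving $\vx_i$ down toward $\vx_j$ in that coordinate, the harm is controlled. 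The upshot I want is: if $|A_s| \le m$, then optimality forces $|A_{s'}|$ for $s' = s(1 + 1/\sqrt2)$-ish... Let me instead follow the recursion suggested by the exponent $\sqrt2^k$: I expect the right statement is that $|A_{(C+k)D}| \le |A_{(C+k-1)D}| \cdot (\text{factor})$ isn't quite it either; rather, define $N_k = |\{i : \|\vec m - \vx_i\|_\infty \ge (C+k)D\}|$ and show $N_k \le \frac{1}{C}\sqrt{N_{k-1} \cdot 2rn}$ or similar, which unrolls to $N_k \le 2rn \, C^{-\sqrt2^{\,k}}$ by induction (the $\sqrt2$ exponent is the fixed-point exponent of the recursion $x \mapsto \tfrac12(x+1)$ applied to $\log$).

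The mechanism for that recursion: among the points in $A_{(C+k-1)D}$ (in some coordinate $a$, one side), consider shifting the sub-collection $A_{(C+k)D}$ down by $\delta$. The energy gained from their interaction with the $\ge n/2 - N_{k-1}$ points strictly on the other side of $\vec m$ that are also at distance $\ge (C+k-1)D$ from those points... hmm, the bookkeeping is exactly where the $\sqrt 2$ comes from and is the part I'd need to do carefully. The cleanest framing: energy $\ge \sum_{i \in A_{(C+k)D}} \sum_{j : \vx_j(a) \le \vec m(a)} (\frac{(C+k)D}{D} - 1)^2 \ge N_k \cdot \frac n2 \cdot (C+k-1)^2$, while on the other hand $E(\vx) \le E(\vx')$ for the layout $\vx'$ that collapses everything far from $\vec m$ onto the $CD$-ball, and one estimates $E(\vx')$ from the previous level — the ratio of these two bounds yields $N_k^{1} \lesssim N_{k-1}^{1/\sqrt2} (rn)^{1-1/\sqrt2}/C^{\text{stuff}}$.

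\textbf{Main obstacle.} The hard part will be making the perturbation/comparison argument quantitatively yield precisely the exponent $\sqrt2^{\,k}$ and the constant $2rn$ rather than something slightly weaker; in particular, correctly handling the three classes of pairwise interactions under the coordinate-wise shift — especially bounding the \emph{harmful} interactions (between far points and near-but-same-side points) so they don't swamp the gain from the $\ge n/2$ opposite-side points — and then verifying that the resulting recursion for $N_k$ has the claimed closed-form solution by induction on $k$. The union bound over the $r$ coordinates and the factor of $2$ for the two sides of the median are routine once the one-coordinate, one-sided bound is in hand.
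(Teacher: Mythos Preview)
Your overall shape is right --- coordinate-wise, one-sided, compare the optimal layout to a modified one and use optimality to force a recursion on the tail counts --- and your base case (far points interact with the $\ge n/2$ opposite-side points, giving $|T_k| \lesssim n/(C+k-1)^2$) matches the paper exactly. But the recursion you propose does not work, and this is not just bookkeeping.

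You write $N_k \le \tfrac{1}{C}\sqrt{N_{k-1}\cdot 2rn}$ and claim this unrolls to $N_k \le 2rn\,C^{-\sqrt{2}^{\,k}}$. It does not: plugging $N_{k-1} = 2rn\,C^{-a}$ into your inequality gives $N_k \le 2rn\,C^{-(1+a/2)}$, so the exponent follows $a \mapsto 1 + a/2$, which converges to the fixed point $2$ rather than growing like $\sqrt{2}^{\,k}$. A one-level recursion of this type cannot produce a doubly-exponential tail. Your parenthetical about ``the fixed-point exponent of $x\mapsto\tfrac12(x+1)$'' is a symptom of the same confusion.

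The paper's mechanism is genuinely different from either of your two sketches (the infinitesimal $\delta$-shift and the collapse-to-the-$CD$-ball). It works shell by shell: with $S_k = \{i : \vx_i(1)\in[(C+k)D,(C+k+1)D)\}$ and $T_k = \bigcup_{j\ge k} S_j$, the comparison layout collapses the single shell $S_k$ onto the hyperplane $\{\vx(1)=(C+k)D\}$ and simultaneously translates all of $T_{k+1}$ inward by $D$. The \emph{loss} is confined to interactions among $S_{k-1}\cup S_k\cup S_{k+1}$ and is bounded above by $|T_{k-1}|\,|T_k|$; the \emph{gain} comes from moving $T_{k+1}$ a full $D$ closer to the $\lfloor n/2\rfloor$ points on the other side of the median, worth at least $(2C+2k-1)\lfloor n/2\rfloor\,|T_{k+1}|$. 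Optimality then gives the \emph{two-level} recursion
\[
|T_{k+1}| \;\le\; \frac{|T_{k-1}|\,|T_k|}{(2C+2k-1)\lfloor n/2\rfloor},
\]
and it is precisely the product of two previous tails that produces exponent doubling. One proves by induction that $|T_k| \le n/(C+k-(2\ell-1))^{2^\ell}$ whenever $k \ge 2\ell-1$; taking $\ell \approx k/2$ yields exponent $2^{k/2} = \sqrt{2}^{\,k}$. The constraint $k\ge 2\ell-1$ is why you only get $\sqrt{2}$ per step in $k$ rather than $2$.

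So the missing idea is: a finite (not infinitesimal) shell-collapse-plus-shift whose loss is controlled by the product of \emph{two adjacent} tail counts, yielding a bilinear recursion. Your $\delta$-shift of all of $A_{(C+k)D}$ compares gain of order $N_k \cdot n \cdot (C+k)/D$ against loss from same-side interactions, which gives only a linear relation between consecutive $N_k$'s and cannot reach the target bound.
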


\begin{proof}
Let $G = ([n],E)$ have diameter $D$, and $\vx$ be an optimal layout. Without loss of generality, we order the vertices so that $\vec x_1(1) \le \cdots \le  \vec x_n(1)$ and shift $\vec x$ so that $\vx_{\lfloor n/2 \rfloor}(1) = 0$. Next, we fix some $C > 0$, and define the subsets $S_0:= \big[ \lfloor n/2 \rfloor \big]$ and
\begin{align*}
   S_k &:= \big\{ i \in [n] \, | \, \vx_i(1) \in \big[(C+k)D,(C+k+1)D \big) \big\}, \\
   T_k &:= \big\{ i \in [n] \, | \, \vx_i(1) \ge (C+k)D \big\},
\end{align*}
$k \in \mathbb{N}$. Our primary goal is to estimate the quantity $|T_k|$, for an arbitrary $k$, from which the desired result will follow quickly.

The objective value $E(\vx)$ is at most $n \choose 2$; otherwise we could replace $\vx$ by a layout with all $\vx_i$ equal. To obtain a first estimate on $|T_k|$, we consider a crude lower bound on $E_{S_0,T_k}(\vx)$. We have
$$E_{S_0,T_k}(\vx) \ge |S_0| |T_k| \left( \frac{(C+k)D}{D} -1\right)^2 = (C+k-1)^2 \lfloor n/2 \rfloor |T_k|,$$
and therefore
$$|T_k| \le {n \choose 2} \frac{1}{(C+k-1)^2 \lfloor n/2 \rfloor} \le \frac{n}{(C+k-1)^2}.$$

From here, we aim to prove the following claim,
\begin{equation}\label{cl:conc}
|T_k| \le \frac{n}{\big(C+k-(2\ell-1)\big)^{2^\ell}}, \quad k,\ell \in \mathbb{N}, \; k \ge 2\ell-1,
\end{equation}
by induction on $\ell$. The above estimate serves as the base case for $\ell =1$. Assuming the above statement holds for some fixed $\ell$, we aim to prove that this holds for $\ell+1$.

To do so, we consider the effect of collapsing the set of points in $S_k$, $k \ge 2 \ell$, into a hyperplane with a fixed first value. In particular, consider the alternate layout $\vx'$ given by
$$\vx'_i(j) = \begin{cases}
(C+k)D & j = 1, \, i \in S_k \\
\vx_i(j) - D & j = 1, \, i \in T_{k+1} \\
 \vx_i(j) & \text{otherwise}
\end{cases}.$$
For this new layout, we have
$$E_{S_{k-1}\cup S_k \cup S_{k+1}}(\vx') \le E_{S_{k-1}\cup S_k \cup S_{k+1}}(\vx) + 
 \frac{|S_k|^2}{2}  + \big( |S_{k-1}|+|S_{k+1}|\big)|S_k|$$
and
\begin{align*}
    E_{S_0,T_{k+1}}(\vx') &\le E_{S_0,T_{k+1}}(\vx) - |S_0| |T_{k+1}| \left( \big( (C+k+1) -1 \big)^2 - \big((C+k) -1 \big)^2 \right) \\ &=E_{S_0,T_{k+1}}(\vx) - (2C+2k-1) \lfloor n/2 \rfloor |T_{k+1}|.
\end{align*}
Combining these estimates, we note that this layout has objective value bounded above by 
\begin{align*}
    E(\vx') &\le E(\vx) + \big( |S_{k-1}|+ |S_{k}|/2 + |S_{k+1}|\big)|S_k| -  (2C+2k-1) \lfloor n/2 \rfloor |T_{k+1}| \\
    &\le E(\vx) + |T_{k-1}| |T_{k}| -  (2C+2k-1) \lfloor n/2 \rfloor |T_{k+1}|,
\end{align*}
and so
\begin{align*}
    |T_{k+1}| &\le \frac{|T_{k-1}| |T_{k}|}{(2C+2k-1) \lfloor n/2 \rfloor} \\
    &\le \frac{n^2}{\big(C+(k-1)-(2\ell-1)\big)^{2^\ell}\big(C+k-(2\ell-1)\big)^{2^\ell} (2C+2k-1) \lfloor n/2 \rfloor}\\
    &\le \frac{n}{2\lfloor n/2 \rfloor} \frac{\big(C+(k-1)-(2\ell-1)\big)^{2^\ell}}{\big(C+k-(2\ell-1)\big)^{2^\ell}}  \frac{n}{\big(C+(k-1)-(2\ell-1)\big)^{2^{\ell+1}}} \\
    &\le \frac{n}{\big(C+(k+1)-(2(\ell+1)-1)\big)^{2^{\ell+1}}}
\end{align*}
for all $k +1 \ge 2(\ell+1)-1$ and $C+k \le n+1$. If $C +k >n+1$, then
$$|T_{k+1}| \le \frac{|T_{k-1}| |T_{k}|}{(2C+2k-1) \lfloor n/2 \rfloor} \le \frac{|T_{k-1}| |T_{k}|}{(2n+1) \lfloor n/2 \rfloor} <1,$$
and so $|T_{k+1}|=0$. This completes the proof of claim (\ref{cl:conc}), and implies that $|T_{k}| \le n \, C^{-\sqrt{2}^k}$. Repeating this argument, with indices reversed, and also for the remaining dimensions $j = 2,\dots,r$, leads to the desired result
$$\big| \{ i \in [n] \, | \, \| \vec m - \vec x_i \|_\infty \ge (C+k) D \} \big| \le 2r n C^{-\sqrt{2}^k}$$
for all $k \in \mathbb{N}$ and $C>0$.
\end{proof}

Using the estimates in the proof of Lemma \ref{lm:conc} (in particular, the bound $|T_k| \le n C^{-\sqrt{2}^k}$), we are now prepared to prove Lemma \ref{lm:diam}.

\begin{lemma}[Restatement of Lemma \ref{lm:diam}]\label{lm:diam2}
Let $G=([n],E)$ have diameter $D$. Then, for any optimal layout $\vec x \in \R^{rn}$, i.e., $\vx$ such that $E(\vx) \le E(\vec{y})$ for all $\vec{y} \in \mathbb{R}^{rn}$, 
$$\| \vx_i - \vec x_j \|_2 \le 8D + 4D \log_2 \log_2 2D$$
for all $i,j \in [n]$.
\end{lemma}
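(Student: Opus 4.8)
The plan is to use Lemma~\ref{lm:conc} — more precisely the intermediate tail bound $|T_k| \le n\,C^{-\sqrt{2}^k}$ on the number of coordinates lying at $\ell_\infty$-distance $\ge (C+k)D$ from the marginal median $\vec m$ — to control the \emph{whole} layout rather than just a single coordinate. First I would fix the marginal median $\vec m$ as the reference point; by the triangle inequality it suffices to bound $\max_i \|\vx_i - \vec m\|_2$ by half the claimed quantity, i.e. by roughly $4D + 2D\log_2\log_2 2D$. The key is to choose a threshold $k^* = \Theta(\log\log D)$ at which the tail bound $|T_{k^*}| \le n\,C^{-\sqrt 2^{k^*}}$ forces $|T_{k^*}| < 1$, hence $T_{k^*} = \emptyset$, i.e. \emph{every} point has $\ell_\infty$-distance at most $(C + k^*)D$ from $\vec m$ in the chosen coordinate direction (and, running the argument over all $r$ coordinates, in $\ell_\infty$ overall). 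Concretely, with the natural choice $C = 2$ one needs $2^{\sqrt 2^{k^*}} > n$, which holds once $\sqrt 2^{k^*} > \log_2 n$, i.e. $k^* > 2\log_2\log_2 n$. This already gives a bound of the form $\|\vx_i - \vec m\|_\infty \lesssim D\log\log n$.

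The remaining (and only slightly delicate) point is to upgrade the $\log\log n$ dependence to the stated $\log\log D$ dependence, and to convert $\ell_\infty$ to $\ell_2$. For the $\ell_2$ conversion I would simply use $\|\cdot\|_2 \le \sqrt r\,\|\cdot\|_\infty$ — but note the target inequality as stated has no $\sqrt r$, so presumably one argues per-coordinate and sums, or the intended reading folds $r$ into the constant; I would handle this by bounding $\|\vx_i - \vx_j\|_2^2 = \sum_{\ell=1}^r (\vx_i(\ell) - \vx_j(\ell))^2$ and applying the coordinatewise tail bound in each of the $r$ directions, which is exactly what the ``repeat for dimensions $j = 2,\dots,r$'' step of Lemma~\ref{lm:conc} provides. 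For the $\log\log D$ versus $\log\log n$ issue: when $D$ is small relative to $n$, the crude first-moment bound $|T_k| \le n/(C+k-1)^2$ from the proof of Lemma~\ref{lm:conc} already kills all points once $(C+k-1)^2 > n$, but that still gives $\sqrt n$, not $D$. The fix is that we only need the iterated bound to reach $|T_k| < 1$, and $|T_k| \le n C^{-\sqrt 2^k}$ decays \emph{doubly} exponentially in $k$; choosing $C$ itself growing (say $C \asymp D$, using that points within distance $O(D)$ of $\vec m$ are automatically accounted for by the anchor-type energy argument) lets the doubly-exponential term beat $n$ after only $k = O(\log\log D)$ steps. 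I would set the constants so that $8D + 4D\log_2\log_2 2D$ comes out exactly: take $C = 2$, and observe $T_k = \emptyset$ as soon as $\sqrt 2^{\,k} \ge \log_2 n$; but replacing $n$ by a quantity of order $\mathrm{poly}(D)$ is justified because for $k$ with $(C+k)D \gg D$ the set $T_k$ is contained in a region where the energy contribution $E_{S_0, T_k}(\vx) \gtrsim |T_k| \cdot n$ already forces $|T_k| \lesssim n/k^2 \lesssim$ (something independent of $n$) once $k \gtrsim \sqrt n$, and then the doubly-exponential recursion only has to run from that scale.

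\textbf{Main obstacle.} The genuinely tricky part is pinning down why the bound depends on $\log\log D$ and not $\log\log n$ — i.e. extracting a $D$-dependent (rather than $n$-dependent) stopping scale for the induction. The doubly-exponential decay in Lemma~\ref{lm:conc} is what makes this possible: even a weak starting bound of the form $|T_{k_0}| \le \mathrm{poly}(D)$ (not $\mathrm{poly}(n)$) at some moderate $k_0$ would, after $O(\log\log D)$ further doublings of the exponent, drop below $1$. So the crux is to establish such a $D$-only bound at an initial scale; I expect this to come from a more careful version of the energy accounting in the proof of Lemma~\ref{lm:conc}, using that the number of points that can sit at distance $\asymp tD$ from $\vec m$ while keeping $E(\vx) \le \binom n2$ is at most $O(n/t^2)$, combined with a pigeonhole/volumetric argument (in the spirit of Lemma~\ref{lm:energy}) showing that once $t$ exceeds a constant the relevant count is governed by $D$ alone. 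Everything else — the triangle inequality reduction, the $\ell_\infty \to \ell_2$ passage, and chasing the explicit constants $8$ and $4$ — is routine bookkeeping once the stopping scale $k^* = 2\log_2\log_2 2D + O(1)$ is in hand.
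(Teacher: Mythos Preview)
Your proposal has a genuine gap at precisely the point you flag as the ``main obstacle.'' You want to choose $k^*$ so that the tail bound $|T_{k^*}| \le n\,C^{-\sqrt 2^{k^*}}$ forces $|T_{k^*}| < 1$, but as you notice this requires $\sqrt 2^{k^*} > \log_2 n$ and hence $k^* \gtrsim \log\log n$, not $\log\log D$. Your suggested fixes do not close this: taking $C \asymp D$ still needs $D^{\sqrt 2^{k}} > n$, which again depends on $n$; and your ``$D$-only bound at an initial scale'' is never actually produced (the crude bound $|T_k| \le n/(C+k-1)^2$ still carries the factor $n$, and no volumetric argument in the style of Lemma~\ref{lm:energy} will remove it).

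The paper's proof does \emph{not} attempt to drive $|T_k|$ below $1$. Instead it introduces a second ingredient you are missing: the first-order optimality condition $\partial E/\partial \vx_1(1) = 0$ at the extremal point $\vx_1 = -\alpha e_1$. Splitting the gradient sum into the terms with $\|\vx_j - \vx_1\| \ge d(1,j)$ (which pull $\vx_1$ inward and contribute at least $\lceil n/2\rceil(\alpha - D)/D^2$) and the remaining terms (which are bounded by $|T_3|$, the number of points within distance $D$ of $\vx_1$ in the first coordinate), one obtains
\[
\alpha \le D + \frac{|T_3|\,D^2}{\lceil n/2\rceil}.
\]
This is the key: now one only needs $|T_3| \lesssim n/D$ --- not $|T_3| < 1$ --- to force $\alpha \le 2D$ and reach a contradiction. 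Since $T_3$ sits at distance $\ge \alpha - D$ from the median, the tail bound from Lemma~\ref{lm:conc} with $C = 2$ gives $|T_3| \le n\cdot 2^{-\sqrt 2^{k}}$ for $k = \lfloor \alpha/D - 3\rfloor$, and $2^{-\sqrt 2^{k}} \le 1/(2D)$ holds once $k > 2\log_2\log_2 2D$. The $n$ cancels, and the $\log\log D$ stopping scale falls out directly.

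On your $\ell_\infty$ versus $\ell_2$ concern: the paper sidesteps the $\sqrt r$ factor entirely by first rotating so that the maximal pairwise distance is realized along $\operatorname{span}\{e_1\}$, and then running the whole argument in that single coordinate.
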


\begin{proof}

Let $G = ([n],E)$, have diameter $D$, and $\vx$ be an optimal layout. Without loss of generality, suppose that the largest distance between any two points of $\vec x$ is realized between two points lying in $\text{span}\{e_1\}$ (i.e., on the axis of the first dimension). In addition, we order the vertices so that $\vec x_1(1) \le \cdots \le  \vec x_n(1)$ and translate $\vec x$ so that $\vx_{\lfloor n/2 \rfloor}(1) = 0$. 

Let $x_1(1) = -\alpha e_1$, and suppose that $\alpha \ge 5D$. If this is not the case, then we are done. By differentiating $E$ with respect to $\vec x_1(1)$, we obtain
$$\frac{\partial E}{\partial \vec x_1(1)} = -  \sum_{j=2}^n  \left( \frac{\|\vec x_j - \vec x_1\|}{d(1,j)} -1\right) \frac{
\vec x_j(1)+\alpha}{d(1,j) \|\vec x_j- \vec x_1\| }= 0.$$ 
Let
$$T_1: = \{j \in [n] \, : \, \|\vx_j - \vx_1 \| \ge d(1,j) \},$$
$$T_2: = \{j \in [n] \, : \, \|\vx_j - \vx_1 \| < d(1,j) \},$$
and compare the sum of the terms in the above equation corresponding to $T_1$ and $T_2$, respectively. We begin with the former. Note that for all $j \ge \lfloor n/2 \rfloor$ we must have $j \in T_1$, because $\|x_j - x_1\| \ge \alpha \ge 5D > d(1,j)$. 
Therefore we have
\begin{align*}
    \sum_{j\in T_1}  \left( \frac{\|\vec x_j - \vec x_1\|}{d(1,j)} -1\right) \frac{
\vec x_j(1)+\alpha}{d(1,j) \|\vec x_j- \vec x_1\| } 
    &\ge \sum_{j = \lfloor n/2\rfloor}^n \left( \frac{\| \vec x_j -\vec x_1\|}{d(1,j)} -1\right) \frac{ \vec x_j(1)+\alpha}{d(1,j) \| \vec x_j- \vec x_1\| }\\ 
    &\ge \lceil n/2 \rceil \left( \frac{\| \vec x_j- \vec x_1\|}{D} -1\right) \frac{\alpha}{D \| \vec x_j- 
    \vec x_1\| }  \\
    &\ge \frac{ \lceil n/2 \rceil (\alpha-D) }{D^2}  .
\end{align*}
where in the last inequality we used $\|x_j - x_1\| \ge \alpha$ and $\|x_j - x_1\|/D - 1 \ge \alpha/D - 1$.
Next, we estimate the sum of terms corresponding to $T_2$. Let 
$$T_3:=\{ i \in [n] \, : \, \vec x_i(1)+\alpha \le D \}$$
and note that $T_2 \subset T_3$.
We have
\begin{align*}
    \sum_{j\in T_2}  \left(1- \frac{\|\vec x_j - \vec x_1\|}{d(1,j)} \right) \frac{
\vec x_j(1)+\alpha}{d(1,j) \|\vec x_j- \vec x_1\| } 
&= \sum_{j \in T_3} \left| 1- \frac{\| \vec x_j - \vec x_1\|}{d(1,j)} \right|_{+} \frac{ \vec x_j(1)+\alpha}{d(1,j) \| \vec x_j- \vec x_1\| } \\ 
    &\le \sum_{j \in T_3} \left| 1- \frac{\| \vec x_j - \vec x_1\|}{d(1,j)} \right|_{+}  \frac{1}{d(1,j)} \\
    &\le \sum_{j \in T_3} \frac{1}{d(1,j)} \, \le \, |T_3|,
\end{align*}
where $|\cdot|_{+}:= \max\{\cdot,0\}$. Combining these estimates, we have
$$|T_3| - \frac{ \lceil n/2 \rceil (\alpha-D) }{D^2} \ge 0,$$
or equivalently,
$$\alpha  \le D +\frac{|T_3| D^2 }{\lceil n/2 \rceil}.$$

By the estimate in the proof of Lemma \ref{lm:conc}, $|T_3| \le n C^{-\sqrt{2}^k}$ for any $k \in \mathbb{N}$ and $C>0$ satisfying $(C+k)D \le \alpha -D$. Taking $C = 2$, and 
$k = \lfloor \alpha/D - 3 \rfloor> 2D \log_2 \log_2 (2D)$, we have
$$\alpha  \le D + 2 D^2 2^{-\sqrt{2}^k} \le D + \frac{2D^2}{2D} = 2D,$$
a contradiction. Therefore, $\alpha$ is at most $4D + 2D \log_2 \log_2 2D$. Repeating this argument with indices reversed completes the proof.

\end{proof}

\section{Algorithmic Lower Bounds: A Proof of Theorem \ref{thm:hardness-intro}}\label{app:hardness}

In this section, we provide a proof of Theorem \ref{thm:hardness-intro}. In particular, we aim to show that

\begin{theorem}[Restatement of Theorem \ref{thm:hardness-intro}]
The following gap version of {\bf STRESS MINIMIZATION} in dimension $r = 1$ is NP-hard: given an input graph $G$ with $n$ vertices and $L \in \mathbb{N}$, return TRUE if there exists $\vx$ such that $E(\vx) \le L$ and return FALSE if for every $\vx$, $E(\vx) \ge L + 1$. Furthermore, the problem is hard even when restricted to input graphs with diameter bounded by an absolute constant.
\end{theorem}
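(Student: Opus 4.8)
The plan is to reduce from a highly restricted variant of Max All-Equal 3SAT to the gap version of \textbf{STRESS MINIMIZATION}, using the graph $G$ constructed in Section~\ref{sec:hardness}. The first ingredient is a \emph{rigid} version of Max All-Equal 3SAT. Starting from the APX-hardness of Max All-Equal 3SAT (which follows from the Max CSP classification theorem of \cite{khanna2001approximability}), I would apply a sequence of gap-preserving gadget reductions enforcing that (i) every clause contains exactly three distinct literals; (ii) each literal occurs in exactly $k$ clauses for a fixed $k$, via the standard trick of replacing a variable by many copies tied together along a cycle of equality constraints; and (iii) the clause multiset is closed under complementation with matching multiplicities, obtained simply by adjoining the complement of every clause --- a clause and its complement are all-equal under exactly the same assignments, so this at most doubles the optimum. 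Each transformation changes the instance size and the optimum by at most a constant factor, so a fixed inapproximability gap is preserved; I would record this as a lemma.

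Given such an instance with literal set $\mathcal L$ and clause multiset $\mathcal C$, build $G = V_0 \sqcup V_1 \sqcup V_2$ as in Section~\ref{sec:hardness} with $N_v \gg N_t \gg N_c \gg m$. Since every vertex is adjacent to all of $V_0$, the diameter of $G$ is at most $2$, so the bounded-diameter restriction comes for free. I would then set the threshold $L$ to be the energy of the ``idealized'' embedding associated to an optimal assignment: literal cliques collapsed onto $\{-1,+1\}$ with negations on opposite sides, and each clause clique collapsed onto the point opposite the sign-majority of its three literals.

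The core of the argument is to show that any \emph{near-optimal} embedding of $G$ has this combinatorial structure, with quantitative error control. Because the $N_v$ anchor vertices interact pairwise at ideal distance $1$, any embedding whose energy is within $o(N_v^2)$ of optimal must confine $V_0$ to an interval of length $o(1)$; translate so it sits near $0$. Every literal clique and every clause clique interacts with all $N_v$ anchor vertices at ideal distance $1$, forcing each such clique into an $o(1)$-neighborhood of $\{-1,+1\}$; an analogous ``self-energy'' argument at the next scales (using $N_t$, then $N_c$) forces each literal clique, and then each clause clique, to have diameter $o(1)$, so each may be treated as a single point up to controlled error. The distance-$2$ constraint between $t$ and $\bar t$ then forces $\vx_{t}\approx -\vx_{\bar t}$, and the clause-to-literal distances force each clause onto the side opposite its literal majority; the residual energy decomposes, up to terms that are lower-order by the scale separation $N_v \gg N_t \gg N_c \gg m$, into a constant $\alpha$ minus a positive multiple $\beta$ of the number of all-equal clauses, where $\alpha,\beta = \poly(n)$ and the accumulated slack is $\ll \beta$. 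Hence $E^*(G) = \alpha - \beta\cdot(\mathrm{OPT}_{\mathrm{SAT}})$ up to negligible error, so a multiplicative gap of $1+c$ in $\mathrm{OPT}_{\mathrm{SAT}}$ becomes an additive gap $\gtrsim \beta c$ in the energy; choosing parameters so that $\beta c \ge 1$ and rounding $L$ to an integer yields the integer-gap statement, and dividing through gives the $1/p(n)$ form of Theorem~\ref{thm:hardness-intro}. The normalization $\min_{i,j} d(i,j) = 1$ holds automatically since $d$ is a graph metric.

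The delicate part is the third step. Unlike \cite{cayton2006robust}, we cannot use distance-zero vertices to rigidly pin down the construction, so all rigidity must be extracted from the quadratic energy together with the three-level hierarchy of clique sizes; the real work is propagating the approximate structure through these three scales while keeping every accumulated error term asymptotically below the per-clause gain $\beta$, and ruling out degenerate one-dimensional configurations (a clique straddling $0$, or a literal sitting far from $\pm 1$ by ``borrowing'' energy elsewhere). This quantitative bookkeeping is what occupies Appendix~\ref{app:hardness}.
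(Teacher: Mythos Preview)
Your overall reduction strategy matches the paper's, but the structural analysis you sketch for the lower bound contains a genuine error that would break the argument.

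You write that ``any embedding whose energy is within $o(N_v^2)$ of optimal must confine $V_0$ to an interval of length $o(1)$.'' This is false; in fact the opposite holds. The optimal one-dimensional layout of a clique $K_m$ places the vertices at $(2i - m - 1)/m$ for $i \in [m]$, spanning an interval of length $2(m-1)/m \approx 2$, with energy $(m-1)(m-2)/6$ (Lemma~\ref{lm:clique}). Collapsing the clique to a point costs $\binom{m}{2} \approx m^2/2$, which is $\Theta(m^2)$ \emph{more}. Hence any layout within $o(N_v^2)$ of optimal must have the anchor spread over an interval of length $2 - o(1)$, not $o(1)$. Your ``idealized'' embedding is therefore off from the true optimum by a constant factor, and the rigidity chain you build on it (literal and clause cliques forced to $\pm 1$ because the anchor sits at the origin) never gets started.

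The paper's good layout instead spreads the anchor clique across roughly $[-1,1]$, with the literal and clause cliques squeezed into $o(1)$-width bands at the two ends. More importantly, the lower bound does \emph{not} proceed by proving that near-optimal layouts have this combinatorial structure. It uses first-order conditions together with the diameter bound of Lemma~\ref{lm:diam} to show $|\vx'_i| \le 1 + N_t^{-5}$ for all $i$, and then the algebraic decomposition
\[
E(\vx') \;\ge\; \frac{(n-1)(n-2)}{6} \;-\; \sum_{(i,j) \in S} \left[\tfrac{3}{4}(\vx'_j - \vx'_i)^2 - (\vx'_j - \vx'_i)\right],
\]
where $S$ is the set of distance-$2$ pairs. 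Each summand is at most $1 + O(N_t^{-5})$ and is negative whenever $\vx'_j - \vx'_i < 4/3$, so the entire sum is controlled by the number of distance-$2$ pairs whose endpoints lie on opposite sides of the origin (at magnitude $\ge 1/6$). That count is bounded directly by the SAT optimum, yielding the gap without any clique-collapse rigidity.
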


Our proof is based on a reduction from a version of Max All-Equal 3SAT.
The Max All-Equal 3SAT decision problem asks whether, given variables $t_1,\dots,t_\ell$, clauses $C_1,\dots,C_m \subset \{t_1,\dots,t_\ell,\bar{t}_1,\dots,\bar{t}_\ell\}$ each consisting of at most three literals (variables or their negation), and some value $L$, there exists an assignment of variables such that at least $L$ clauses have all literals equal. The Max All-Equal 3SAT decision problem is known to be APX-hard, as it does not satisfy the conditions of the Max CSP classification theorem for a polynomial time optimizable Max CSP  \cite{khanna2001approximability} (setting all variables true or all variables false does not satisfy all clauses, and all clauses cannot be written in disjunctive normal form as two terms, one with all unnegated variables and one with all negated variables).

However, we require a much more restrictive version of this problem. In particular, we require a version in which all clauses have exactly three literals, no literal appears in a clause more than once, the number of copies of a clause is equal to the number of copies of its complement (defined as the negation of all its elements), and each literal appears in exactly $k$ clauses. We refer to this restricted version as Balanced Max All-Equal EU3SAT-E$k$, and will verify that this is indeed APX-hard at the end of this appendix even for $k=6$ (Lemma~\ref{thm:MaxSAT-hardness}).

\subsubsection*{Reduction.}
Suppose we have an instance of Balanced Max All-Equal EU3SAT-E$k$ with variables $t_1,\dots,t_\ell$, and clauses $C_1,\dots,C_{2m}$. Let $\mathcal{L} = \{t_1,\dots,t_\ell,\bar{t}_1,\dots,\bar{t}_\ell\}$ be the set of literals and $\mathcal{C} =\{C_1,\dots,C_{2m}\}$ be the multiset of clauses. Consider the graph $G = (V,E)$, with
\begin{align*}
    V &=  \{v^i\}_{i\in [N_v]} \cup \{ t^i \}_{\substack{t \in \mathcal{L}\\ i \in [N_t]}} \cup \{ C^i \}_{\substack{C \in \mathcal{C}\\ i \in [N_C]}}, \\ 
    E &= V^{(2)} \setminus \bigg[ \{(t^i,\bar{t}^j)\}_{\substack{t\in \mathcal{L}\\ i,j \in [N_t]}} \cup \{(t^i,C^j)\}_{\substack{t \in C, C \in \mathcal{C}\\i \in [N_t],j \in [N_c]}} \bigg],
\end{align*}
where $V^{(2)}:=\{U\subset V \, | \, |U|=2\}$, $N_v = N_c^{20}$, $N_t = N_c^2$, $N_c =  2000 \ell^{20} m^{20}$.
In words, $G$ consists of a central clique of size $N_v$,
a clique of size $N_t$ for each literal, and
a clique of size $N_c$ for each clause;
the central clique is connected to all other cliques (via bicliques);
the clause cliques are connected to each other; each literal clique is connected to all other literals, save for the clique corresponding to its negation; and each literal clique is connected to the cliques of all clauses
it does not appear in. Note that the distance between any two nodes in this graph is at most two.

Our goal is to show that, for some fixed $L= L(\ell,m,n,p) \in \mathbb{N}$, if our instance of Balanced Max All-Equal EU3SAT-E$k$ can satisfy $2p$ clauses, then there exists a layout $\vx$ with $E(\vx) \le L$, and if it can satisfy at most $2p-2$ clauses, then all layouts $\vx'$ have objective value at least $E(\vx') \ge L + 1$. 

To this end, we propose a layout and calculate its objective function up to lower-order terms. Later we will show that, for any layout of an instance that satisfies at most $2p-2$ clauses, the objective value is strictly higher than our proposed construction for the previous case. The anticipated difference in objective value is of order $\Theta(N_t N_c)$, and so we will attempt to correctly place each literal vertex up to accuracy $o\big(N_c/(\ell n)\big)$ and each clause vertex up to accuracy $o\big(N_t/(m n)\big)$.

\subsubsection*{Structure of optimal layout.}
Let $\vx$ be a globally optimal layout of $G$, and let us label the vertices of $G$ based on their value in $\vx$, i.e., such that $\vx_1 \le \cdots \le \vx_n$. In addition, we define 
$$\hat n := n - N_v = 2 \ell N_t + 2 m N_c.$$
Without loss of generality, we assume that $\sum_{i} \vx_i = 0$. We consider the first-order conditions for an arbitrary vertex $i$. Let $S := \{ (i,j) \, | \, i<j, \, d(i,j) = 2\}$, $S_i := \{ j \in [n] \, | \, d(i,j) = 2 \}$, and 
\begin{align*}
    S_i^< &:=\{ j < i \, | \, d(i,j) = 2 \}, \qquad \qquad \qquad \qquad \, S_i^> :=\{ j > i \, | \, d(i,j) = 2 \},  \\
    S_i^{<,+} &:=\{ j < i \, | \, d(i,j) = 2,\vx_{j} \ge 0  \},\qquad \qquad S_i^{>,+} := \{ j > i \, | \, d(i,j) = 2, \vx_{j} \ge 0 \}, \\
    S_i^{<,-} &:=\{ j < i \, | \, d(i,j) = 2,\vx_{j} < 0  \},\qquad \qquad
    S_i^{>,-} := \{ j > i \, | \, d(i,j) = 2, \vx_{j} < 0 \}.
\end{align*}

We have
\begin{align*}
    \frac{\partial E}{\partial \vx_i} &= 2 \sum_{j<i} (\vx_i-\vx_j-1) - 2 \sum_{j>i} (\vx_j-\vx_i-1)  - 2 \sum_{j \in S_i^<} (\vx_i-\vx_j-1) \\
    &\qquad + 2 \sum_{j \in S_i^> } (\vx_j-\vx_i-1)  + \frac{1}{2}\sum_{j \in S_i^< } (\vx_i-\vx_j-2) - \frac{1}{2}\sum_{j \in S_i^>} (\vx_j-\vx_i-2)\\
    &= \big(2n -\tfrac{3}{2} |S_i|\big) \vx_i +2(n+1-2i)  +\frac{1}{2} \left[ \sum_{j \in S_i^<} (3\vx_j + 2)+ \sum_{j \in S_i^> } (3\vx_j-2)\right] \\
    &= \big(2n -\tfrac{3}{2} |S_i|\big) \vx_i +2(n+1-2i)+\tfrac{1}{2} \big(|S_i^{>,+}| - 5 |S_i^{>,-}| + 5 |S_i^{<,+}|-|S_i^{<,-}| \big) + \tfrac{3}{2} C_i = 0,
\end{align*} 
where
\begin{align*}
    C_i&:= \sum_{j \in S_i^{<,-} \cup S_i^{>,-}} (\vx_j +1)+ \sum_{j \in S_i^{<,+} \cup S_i^{>,+}} (\vx_j-1).
\end{align*}
The $i^{th}$ vertex has location
\begin{align} \label{eqn:xi}
    \vx_i &= \frac{2i - (n+1)}{n - \tfrac{3}{4}|S_i|} - \frac{|S_i^{>,+}| - 5 |S_i^{>,-}| + 5 |S_i^{<,+}|-|S_i^{<,-}| }{4n - 3|S_i|} - \frac{3 C_i}{4n - 3|S_i|}.
\end{align}

By Lemma \ref{lm:diam2},
\begin{align*}
    |C_i|  &\le |S_i| \, \max_{j \in [N]} \max\{ |\vx_j-1|,|\vx_j+1|\} \le 75 N_t,
\end{align*} 
which implies that $|\vx_i| \le 1 + 1/N_t^5$ for all $i \in [N]$. 


\subsubsection*{Good layout for positive instances.}
Next, we will formally describe a layout that we will then show to be nearly optimal (up to lower-order terms). Before giving the formal description, we describe the layout's structure at a high level first. Given some assignment of variables that corresponds to $2p$ clauses being satisfied, for each literal $t$, we place the clique $\{ t^i \}_{i \in [N_t]}$ at roughly $+1$ if the corresponding literal $t$ is true; otherwise we place the clique at roughly $-1$. For each clause $C$, we place the corresponding clique $ \{ C^i \}_{i \in [N_C]}$ at roughly $+1$ if the majority of its literals are near $-1$; otherwise we place it at roughly $-1$. The anchor clique $\{v^i\}_{i\in [N_v]}$ lies in the middle of the interval $[-1,+1]$, separating the literal and clause cliques on both sides.

The order of the vertices, from most negative to most positive, consists of
$$T_1, \; T_2, \; T_3^0, \,\dots, \, T_3^k, \; T_4, \; T_5^k, \, \dots, \, T_5^0, \; T_6, \; T_7,$$
where
\begin{align*}
    T_1 &= \{ \text{ clause cliques near $-1$ with all corresponding literal cliques near $+1$ } \}, \\
    T_2 &= \{ \text{ clause cliques near $-1$ with two corresponding literal cliques near $+1$} \}, \\
    T_3^\phi &= \{ \text{ literal cliques near $-1$ with exactly $\phi$ corresponding clauses near $-1$ } \}, \; \phi = 0,\dots,k, \\
    T_4 &= \{ \text{ the anchor clique } \}, \\
    T_5^\phi &= \{ \text{ literal cliques near $+1$ with exactly $\phi$ corresponding clauses near $+1$ } \}, \; \phi = 0,\dots,k, \\
    T_6 &= \{ \text{ clause cliques near $+1$ with two corresponding literal cliques near $-1$} \}, \\
    T_7 &= \{ \text{ clause cliques near $+1$ with all corresponding literal cliques near $-1$ } \},
\end{align*}
$T_3:= \bigcup_{\phi=0}^k T_3^{\phi}$, $T_6:= \bigcup_{\phi=0}^k T_6^{\phi}$, $T_c:= T_1 \cup T_2 \cup T_6 \cup T_7$, and $T_t := T_3 \cup T_5$. Let us define $\vy_i := \frac{2i - (n+1)}{n}$, i.e., the optimal layout of a clique $K_n$ in one dimension. We can write our proposed optimal layout as a perturbation of $\vy_i$.

Using the above Equation~\ref{eqn:xi} for $\vx_i$, we obtain $\vx_i = \vy_i$ for $i \in T_4$, and, by ignoring both the contribution of $C_i$ and $o(1/n)$ terms, we obtain
\begin{align*}
    \vx_i &= \vy_i - \frac{3N_t}{n}, \qquad \qquad \qquad \; i \in T_1, \qquad \qquad \qquad \vx_i = \vy_i + \frac{3N_t}{n}, \qquad \qquad \qquad \; i \in T_7, \\
    \vx_i &= \vy_i - \frac{3 N_t}{2n}, \qquad \qquad \qquad \; i \in T_2, \qquad \qquad \qquad \vx_i = \vy_i + \frac{3 N_t}{2n}, \qquad \qquad \qquad \; i \in T_6, \\
    \vx_i &= \vy_i - \frac{N_t +(k-\phi/2)N_c}{n}, \; i \in T_3, \qquad \qquad \qquad \vx_i =  \vy_i + \frac{N_t +(k-\phi/2)N_c}{n}, \;  i \in T_5.
\end{align*}

Next, we upper bound the objective value of $\vx$. We proceed in steps, estimating different components up to $o(N_c N_t)$. We recall the useful formulas
\begin{equation}\label{eqn:f1}
    \sum_{i=1}^{r-1} \sum_{j=i+1}^r \left( \frac{2(j-i)}{n} - 1 \right)^2  = \frac{(r-1)r\big(3n^2-4n(r+1)+2r(r+1)\big)}{6n^2},
\end{equation}
and 
\begin{eqnarray}\label{eqn:f2}
    \sum_{i=1}^r \sum_{j=1}^s \left( \frac{2(j-i)+q}{n}-1\right)^2 
    &=& \frac{r s}{3 n^2} \big(3 n^2 + 3q^2 + 4 r^2 + 4 s^2 - 6 n q + 6 nr \\
    &&\qquad \qquad - 6 n s - 6 q r + 6 qs - 6 rs  -2 \big) \nonumber \\
    &\le& \frac{r s^3}{3n^2} + \frac{13rs}{3n^2} \, \max \{(n-s)^2,(r-q)^2 ,r^2 \}. \nonumber
\end{eqnarray}

In what follows, we make use of the notation for different terms of the objective, as defined at the start of Appendix \ref{app:struct}. For $i \in T_c$, $|1-|\vx_i||\le 4 N_t / n$, and so
\begin{align*}
    E_{T_c}(\vx) &\le |T_c|^2 \max_{i,j \in T_c} (| \vx_i -\vx_j| - 1)^2 \le 8 m^2 N^2_c.
\end{align*} 
In addition, for $i \in T_t$, $|1-|\vx_i||\le 3 \ell N_t / n$, and so
\begin{align*}
    E_{T_t}(\vx) &\le |\{ i,j \in T_t \, | \, d(i,j) = 1 \} | \left(1 + \frac{6 \ell N_t}{n}\right)^2 + |\{ i,j \in T_t \, | \, d(i,j) = 2 \} | \; \frac{1}{4} \left( \frac{6 \ell N_t}{n}\right)^2 \\
    &\le (2\ell-1)\ell N_t^2 + 40 \frac{\ell^3 N_t^3}{n}
\end{align*}
and
\begin{align*}
    E_{T_c, T_t}(\vx) &\le \left(1 + \frac{6\ell N_t}{N}\right)^2  |\{ i \in T_c, j \in T_t \, | \, d(i,j) = 1\}| \\
    &\qquad + 2 \times \frac{1}{4} \left(2 + \frac{6\ell N_t}{n}\right)^2|\{ i \in T_2 , j \in T_3 \, | \, d(i,j) = 2\}| \\
    &\qquad + 2 \times \frac{1}{4} \left( \frac{6\ell N_t}{n}\right)^2 |\{ i \in T_1 \cup T_2 , j \in T_5 \, | \, d(i,j) = 2\}|  \\
    &\le \left(1 + \frac{18 \ell N_t}{n}\right)(2\ell  -3)N_t \, 2 m N_c  + \frac{1}{2}\left(4 + \frac{30\ell N_t}{n}\right)(m-p)N_c N_t  + \frac{1}{2}  \frac{6\ell N_t}{n}  (2m+p)N_c N_t \\
    &\le (4\ell -6) m N_t N_c + 2(m-p) N_t N_c + 100 \frac{m \ell^2 N_t^2 N_c}{n}.
\end{align*}

The quantity $E_{T_4}(\vx)$ is given by (\ref{eqn:f1}) with $r = N_v$, and so
\begin{align*}
    E_{T_4}(\vx) &= \frac{N_v(N_v -1)(3n^2 - 4 n(N_v + 1) + 2 N_v (N_v + 1))}{6 n^2} \le \frac{(n-1)(n-2) -2 \hat n  n + 3 \hat n^2 + 3 \hat n}{6}.
\end{align*}
The quantity $E_{T_1,T_4}(\vx)$ is given by (\ref{eqn:f2}) with 
$$q = 3 N_t + \hat n /2, \qquad r = p N_c, \qquad \text{and} \qquad s = N_v,$$
and so
\begin{align*}
    E_{T_1,T_4}(\vx) &\le \frac{p N_c N_v^3}{3 n^2} + \frac{13 p N_c N_v \hat n^2}{3n^2} \le \frac{p N_c }{3}(n - 3 \hat n) + \frac{16 p N_c \hat n^2}{3n}.
    \end{align*}
    

Similarly, the quantity $E_{T_2,T_4}(\vx)$ is given by (\ref{eqn:f2}) with 
$$q = \tfrac{3}{2} N_t + \hat n /2 - p N_c, \qquad r = (m-p) N_c, \qquad \text{and} \qquad s = N_v,$$
and so
\begin{align*}
    E_{T_1,T_4}(\vx) &\le \frac{(m-p) N_c N_v^3}{3 n^2} + \frac{13 (m-p) N_c N_v \hat n^2}{3n^2} \le \frac{(m-p) N_c }{3}(n - 3 \hat n) + \frac{16 (m-p) N_c \hat n^2}{3n}.
    \end{align*}

Next, we estimate $E_{T^\phi_3,T_4}(\vx)$. Using formula (\ref{eqn:f2}) with 
$$q = N_t + (k-\phi/2) N_c + \sum_{j=\phi}^k \ell_j, \qquad r = \ell_\phi, \qquad \text{and} \qquad s = N_v,$$
where $\ell_\phi := |T^\phi_3|$, we have
\begin{align*}
    E_{T_3^\phi,T_4} &\le  \frac{ \ell_\phi N^3_v}{3 n^2} + \frac{13 \ell_{\phi} N_v \hat n^2}{3 n^2} \le \frac{\ell_\phi}{3}(n- 3 \hat n) + \frac{16 \ell_\phi \hat n^2}{3 n}.
    \end{align*}

Combining the individual estimates for each $T_3^\phi$ gives us
$$E_{T_3,T_4} \le \frac{\ell N_t}{3} (n - 3 \hat n) + \frac{16 \ell N_t \hat n^2}{3n}.$$

Finally, we can combine all of our above estimates to obtain an upper bound on $E(\vx)$. We have
\begin{align*}
    E(\vx) &\le (2 \ell-1)\ell N_t^2 + (4\ell-6)m N_t N_c + 2(m-p)N_t N_c + \frac{(n-1)(n-2)}{6} - \tfrac{1}{3}n \hat n \\
    &\qquad + \tfrac{1}{2} \hat n^2 + \tfrac{2}{3}m N_c (n-3 \hat n) + \tfrac{2}{3} \ell N_t (n - 3 \hat n) + 200 m^2 N_c^2 \\
    &= \frac{(n-1)(n-2)}{6} - \tfrac{1}{2} \hat n^2 + (2 \ell-1)\ell N_t^2 + \big[(4\ell-6)m  +2(m-p)\big] N_t N_c  + 200 m^2 N_c^2\\
    &\le  \frac{(n-1)(n-2)}{6} - \ell N_t^2 - 2 (2m+p) N_t N_c + 200 m^2 N_c^2.
\end{align*}

We define the ceiling of this final upper bound to be the quantity $L$. The remainder of the proof consists of showing that if our given instance satisfies at most $2p-2$ clauses, then any layout has objective value at least $L + 1$.

Suppose, to the contrary, that there exists some layout $\vx'$ (shifted so that $\sum_{i} \vx'_i = 0$), with $E(\vx')< L+1$. 

From the analysis above, $|\vx'_i| \le 1 + 1/N_t^5$ for all $i$. Intuitively, an optimal layout should have a large fraction of the vertices at distance two on opposite sides. To make this intuition precise, we first note that

\begin{lemma}\label{lm:clique}
Let $\vx \in \mathbb{R}^n$ be a layout of the clique $K_n$. Then $E(\vx) \ge (n-1)(n-2)/6$.
\end{lemma}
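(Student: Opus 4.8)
The plan is to lower-bound $E(\vx)$ for an arbitrary layout $\vx \in \mathbb{R}^n$ of $K_n$ by comparing it to the known optimal layout $\vy_i = (2i - (n+1))/n$, which spreads the points evenly. First I would expand the objective: since all pairwise distances in $K_n$ equal $1$, we have $E(\vx) = \sum_{i<j}(|\vx_i - \vx_j| - 1)^2$. After sorting so that $\vx_1 \le \cdots \le \vx_n$, each term becomes $(\vx_j - \vx_i - 1)^2 = (\vx_j - \vx_i)^2 - 2(\vx_j - \vx_i) + 1$ for $i < j$. The constant terms contribute exactly $\binom{n}{2}$, and the linear terms $-2\sum_{i<j}(\vx_j - \vx_i) = -2\sum_k (2k - n - 1)\vx_k$ are translation-invariant in the sense that they depend only on the sorted order; in fact under the normalization $\sum_i \vx_i = 0$ this is $-4\sum_k k\,\vx_k$. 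So the whole objective is a quadratic function of $\vx$, and minimizing it reduces to a clean convex optimization.

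The key step is to identify the minimizer of this quadratic form and evaluate it. Setting the gradient to zero (differentiating $E$ with respect to $\vx_i$, as is already done in the surrounding text via Equation~\ref{eqn:xi}, but here with all distances equal to $1$ and no distance-two pairs) gives $2n\,\vx_i + 2(n+1-2i) = 0$, i.e. $\vx_i = (2i - (n+1))/n = \vy_i$ — consistent with the claim in the text that $\vy$ is the optimal clique layout. One must check this critical point is a global minimum: the Hessian of $\sum_{i<j}(\vx_j-\vx_i)^2$ (after fixing the sorted order) is $2(nI - J)$ which is positive semidefinite with kernel the all-ones vector, and the remaining terms are linear, so $E$ restricted to each ``sorted orthant'' is convex; since the sorted layout $\vy$ is itself sorted and is the unconstrained minimizer of that convex piece, it is optimal overall (any layout can be sorted without changing $E$, because $E$ is symmetric). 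Then I would plug $\vx = \vy$ into $E$ and use formula~\eqref{eqn:f1} with $r = n$: $\sum_{i<j}(2(j-i)/n - 1)^2 = \frac{(n-1)n(3n^2 - 4n(n+1) + 2n(n+1))}{6n^2} = \frac{(n-1)n(3n^2 - 2n(n+1))}{6n^2} = \frac{(n-1)n(n^2 - 2n)}{6n^2} = \frac{(n-1)(n-2)}{6}$, which is exactly the desired bound.

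The main obstacle — really the only subtle point — is justifying the reduction to the sorted order and the convexity argument cleanly: one needs that for any layout, sorting the coordinates does not increase (in fact does not change) $E$, which follows since $E$ depends only on the multiset of pairwise distances $\{|\vx_i - \vx_j|\}$, and then that within the closed convex cone $\{\vx_1 \le \cdots \le \vx_n\}$ the function $E$ is convex with unconstrained minimizer $\vy$ lying in the interior of that cone (strictly sorted), so no boundary/KKT complications arise. An alternative, perhaps cleaner, route avoids calculus entirely: write $E(\vx) = \sum_{i<j}(\vx_j - \vx_i - 1)^2$, complete the square against $\vy$ by observing $E(\vx) = E(\vy) + \sum_{i<j}\big[(\vx_j-\vx_i) - (\vy_j - \vy_i)\big]^2 + 2\sum_{i<j}\big[(\vy_j-\vy_i)-1\big]\big[(\vx_j-\vx_i)-(\vy_j-\vy_i)\big]$, and show the cross term vanishes because $\vy$ satisfies the first-order optimality conditions — i.e. $\sum_{j}\big[(\vy_j - \vy_i) - \operatorname{sign}(j-i)\big] = 0$ for each $i$ after accounting for signs; this exhibits $E(\vx) \ge E(\vy) = (n-1)(n-2)/6$ directly. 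I would present whichever of these is shortest, likely the second, as it sidesteps any discussion of the sorted cone.
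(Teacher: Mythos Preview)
Your proposal is correct and follows essentially the same approach as the paper: use the first-order conditions (the paper's Equation~\eqref{eqn:xi} specialized to $S_i = \emptyset$) to identify the optimal layout $\vx_i = (2i-(n+1))/n$, then evaluate via formula~\eqref{eqn:f1} with $r=n$ to obtain $(n-1)(n-2)/6$. The paper's own proof is extremely terse and does not justify why the first-order critical point is a global minimum; your added convexity argument (or the equivalent complete-the-square variant) supplies exactly that missing justification, so your version is in fact more complete while remaining on the same track.
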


\begin{proof}
The first order conditions (\ref{eqn:xi}) imply that the optimal layout (up to translation and vertex reordering) is given by $\vx'_i = \big(2i-(n+1)\big)/n$. By (\ref{eqn:f1}), $E(\vx') = (n-1)(n-2)/6$.
\end{proof}

Using Lemma \ref{lm:clique}, we can lower bound $E(\vx')$ by
\begin{align*}
    E(\vx') &= \sum_{i<j} \big( \vx'_j - \vx'_i - 1 \big)^2 -  \sum_{(i,j) \in S} \big( \vx'_j - \vx'_i - 1 \big)^2 + \frac{1}{4} \sum_{(i,j) \in S} \big( \vx'_j - \vx'_i - 2 \big)^2 \\
    &\ge \frac{(n-1)(n-2)}{6} - \sum_{(i,j) \in S} \big[ \tfrac{3}{4}(\vx'_j - \vx'_i)^2 - (\vx'_j - \vx'_i) \big].
\end{align*}

Therefore, by assumption,
\begin{align*}
    \sum_{(i,j) \in S} \big[ \tfrac{3}{4}(\vx'_j - \vx'_i)^2 - (\vx'_j - \vx'_i) \big] &\ge \frac{(n-1)(n-2)}{6} - (L+1) \\
    &\ge \ell N_t^2 + 2 (2m+p) N_t N_c - 200 m^2 N_c^2 -2.
\end{align*}

We note that the function $\tfrac{3}{4} x^2 - x$ equals one at $x=2$ and is negative for $x \in \big[0,\tfrac{4}{3}\big)$. Because $|\vx'_i| \le 1 + 1/N_t^5$ for all $i$,
$$\max_{(i,j) \in S} \big[ \tfrac{3}{4}(\vx'_j - \vx'_i)^2 - (\vx'_j - \vx'_i) \big] \le \frac{3}{4} \bigg(2 + \frac{2}{N_t^5} \bigg)^2 - \bigg(2 + \frac{2}{N_t^5} \bigg) \le 1 + \frac{7}{N_t^5}.$$
Let
$$ T' := \{ (i,j) \in S \, | \,\vx'_i \le - \tfrac{1}{6} \text{ and } \vx'_j \ge \tfrac{1}{6}  \}.$$
By assumption, $|T'|$ is at most $\ell N_t^2 + 2(2m+p-1)N_t N_c$, otherwise the corresponding instance could satisfy at least $2p$ clauses, a contradiction.

However, the quantity $\big[ \tfrac{3}{4}(\vx'_j - \vx'_i)^2 - (\vx'_j - \vx'_i) \big]$ is negative for all $(i,j) \in S \backslash T'$. Therefore,
$$ \bigg(1 + \frac{7}{N_t^5}\bigg)|T'| \ge \ell N_t^2 + 2 (2m+p) N_t N_c - 200 m^2 N_c^2 -2,$$
which implies that
\begin{align*}
|T'| &\ge \bigg(1 - \frac{7}{N_t^5}\bigg)\big(\ell N_t^2 + 2 (2m+p) N_t N_c - 200 m^2 N_c^2 -2\big) \\
&\ge \ell N_t^2 + 2 (2m+p) N_t N_c - 200 m^2 N_c^2 -2000 \\
&>\ell N_t^2 + 2(2m+p-1)N_t N_c + N_t N_c,
\end{align*}
a contradiction. This completes the proof, with a gap of one.

\subsubsection*{Balanced Max All-Equal EU3SAT-E$k$.}
We now show that the restricted form of Max All-Equal 3SAT remains APX-hard. Recall this version requires that all clauses have exactly three literals (E3SAT), no variable appears in a clause more than once (U3SAT), each variable appears in exactly $k$ clauses (SAT-E$k$), and the balanced property that the number of copies of a clause is equal to the number of copies of its complement (defined as the negation of all its elements). (Other than the new balanced property, this nomenclature follows that of \cite{Filho-thesis}.)
We will first deal with the requirements on variable appearances in clauses, and then at the end resolve the constraints of balancing clauses with their complements.

\begin{lemma}
Max All-Equal EU3SAT-E3 is APX-hard.
\end{lemma}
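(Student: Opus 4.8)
The plan is to give an approximation-preserving (L-)reduction from unrestricted Max All-Equal 3SAT, which is APX-hard; since such reductions compose, a short chain of gap-preserving steps ending at Max All-Equal EU3SAT-E3 suffices. I will use throughout that an instance with $m$ clauses has optimum $\Omega(m)$ (a uniform random assignment satisfies each all-equal $3$-clause with probability $1/4$), so creating or deleting $O(m)$ clauses, or shifting the objective additively by $O(m)$, costs nothing in the reduction. The first step cleans up the clauses: delete every clause that contains a literal together with its negation (never all-equal) or that has a single literal (vacuously all-equal), and collapse repeated literals. What remains are all-equal constraints on two or three distinct variables; a two-variable constraint $\{a,b\}$ is replaced by $\{a,b,z\}$ with $z$ fresh, which is all-equal-satisfiable precisely when $a=b$ (set $z$ to agree with $a$). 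This enforces ``exactly three literals, no repeated variable'' at a cost of $O(m)$ new clauses and variables.

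The core step forces every variable to occur in exactly three clauses, by the Papadimitriou--Yannakakis expander-replacement method adapted to all-equal constraints. For a variable $x$ with $d_x$ occurrences I introduce private copies $x^{(1)},\dots,x^{(d_x)}$, using a distinct copy at each occurrence, and add size-three all-equal ``consistency'' clauses on the copies whose incidence structure is a bounded-degree expander hypergraph chosen so that (a) each copy lies in exactly two consistency clauses (hence, with its one original clause, in three clauses total), and (b) for every partition of the copies into a $+$ block and a $-$ block, at least $\Omega(\min(\#+,\#-))$ consistency clauses are violated, with an expansion constant exceeding $1$. Since changing the values of $a$ copies of $x$ can help at most $a$ original clauses, (b) shows it is never profitable to let a variable's copies disagree; so one may assume all copies agree, all consistency clauses are then satisfied, and the instance becomes equivalent to the pre-reduction one up to the additive constant (number of consistency clauses) $= O(m)$. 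A concluding ``degree-balancing'' pass attaches each variable of occurrence still below three --- in particular the padding variables $z$ and any remaining low-degree copies --- to a constant-size, always-satisfiable $3$-regular all-equal gadget on fresh variables (built so the all-$+1$ assignment satisfies all of its clauses), raising every occurrence to exactly three while adding $O(m)$ clauses and preserving feasibility. Composing the three reductions gives the claim.

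The main obstacle is step two, and within it the tension between ``all clauses have size exactly three'' and ``every variable occurs exactly three times'' while retaining enough expansion to preserve the gap. The natural consistency gadget --- pairwise equalities --- consists of \emph{two}-clauses, and padding each to size three creates a fresh occurrence-one variable, breaking the degree bound; on the other hand, a consistency hypergraph rich enough to expand well seems to want each copy in more than two clauses. My intended resolution is a $3$-uniform, $2$-regular expander hypergraph on the copies, so that the arithmetic $2+1=3$ works out, with expansion verified by a first-moment bound ruling out small vertex sets $S$ almost all of whose incident hyperedges lie inside $S$; checking that degree two actually suffices (it is borderline near $|S| = d_x/2$), possibly by replacing the random construction with a carefully chosen bipartite-type amplifier, or by first reducing to a larger constant occurrence bound and then compressing down to three, is the delicate part of the argument.
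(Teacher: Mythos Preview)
Your overall architecture is sound, but the core step has a concrete obstruction that is not merely ``borderline'': no $3$-uniform $2$-regular hypergraph has vertex expansion at least $1$, let alone exceeding $1$. To see this, pass to the dual $3$-regular multigraph $G$ whose vertices are the hyperedges of $H$ and whose edges are the original vertices (each original vertex lies in exactly two hyperedges). If $G$ is simple, pick any adjacent $u,v\in V(G)$ and let $S$ be the five original vertices corresponding to the $G$-edges incident to $u$ or $v$; then the hyperedges $u$ and $v$ lie entirely inside $S$ while at most four further hyperedges (the other $G$-neighbours of $u$ and $v$) are mixed, giving expansion at most $4/5$. If $G$ has a multi-edge, taking $S$ to be a single hyperedge that shares two vertices with another already gives expansion $2/3$. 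Thus for every construction there is a set $S$ of size at most five on which your inequality ``$a$ flips cost more than $a$ consistency clauses'' fails, and the single-shot jump from arbitrary degree to exactly three cannot be carried out as you describe. Your degree-balancing step has a related parity problem: a $3$-uniform gadget with one port of degree $2$ and all other vertices of degree $3$ would have edge count $(3k-1)/3\notin\mathbb{Z}$.

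The paper sidesteps this by decoupling the two constraints you try to enforce simultaneously. It first reduces to bounded (constant) occurrence using Papadimitriou--Yannakakis amplifiers with \emph{size-two} all-equal consistency clauses, where the needed cut property is ordinary edge expansion of a bounded-degree graph and is therefore available. It then reduces bounded occurrence to exactly three via a simple ring of size-two equality clauses $(t_i,t_{i+1})$; the ring has only weak expansion, but this suffices because each variable now has only $O(1)$ copies, so a single violated ring clause can be charged against the $O(1)$ re-violations caused by majority-flipping. Only afterward are the size-two clauses padded to size three with fresh free variables, and a triplication of clauses fixes the degree of those padding variables. Your own fallback ``first reducing to a larger constant occurrence bound and then compressing down to three'' is exactly this two-stage route; committing to it, with size-two consistency clauses and a separate padding step, is what makes the argument go through.
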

\begin{proof}
First we show how to reduce the number of occurrences of each variable to
be bounded above by a constant.
Theorem~1(b) of~\cite{Papadimitriou-Yannakakis-1991} shows how to construct,
for any $d$, a bounded-degree graph $G_d$ with $d$ ``distinguished'' vertices
$D$ and $O(d)$ additional vertices such that, for any cut $(S_1,S_2)$,
the number of edges across the cut is at least
$\min\{|S_1 \cap D|, |S_2 \cap D|\}$.
Now, given a variable $t$ occurring $d$ times in Max All-Equal 3SAT,
we replace its occurrences with distinct copies $t_1, \dots, t_d$
corresponding to the distinguished vertices of $G_d$, and
$O(d)$ additional variables $t_{d+1}, \dots, t_{d+O(d)}$
corresponding to the undistinguished vertices of~$G_d$.
For each edge $(t_i, t_j)$ of $G_d$, we add an equality gadget consisting of
a single all-equals clause $\{t_i, t_j\}$, encouraging those copies to be equal.
Now suppose a solution to this Max All-Equal 3SAT instance does not set all
$t_i$s to the same value.
Then we obtain a cut $(S_1,S_2)$ where one side corresponds to true $t_i$s
and the other side corresponds to false $t_i$s.
If we set the smaller set, say~$S_2$, to the majority value corresponding
to~$S_1$, then we newly satisfy the equality-gadget clauses
corresponding to the edges of the cut, and we newly unsatisfy
$|S_2 \cap D|$ original clauses that used these variables (once each).
By the cut property of $G_d$, this replacement only increases the number of
satisfied clauses.
Therefore optimal solution values remain the same under this transformation,
so we can apply it to all variables. 

Next we show how to make the number of occurrences of each variable exactly~$3$,
and no variable appears more than once in a clause,
following the proof of Proposition 2.1.2 of \cite{Feige-1998}.
For each variable $t$ occurring in $d$ clauses, replace its occurrences with
distinct copies $t_1, \dots, t_d$, and add a ring of all-equal clauses
$(t_i, t_{i+1})$ (with indices modulo~$d$).
Each variable now occurs in exactly three clauses, and no variable appears
more than once in the same clause.
If a solution violates $b$ original clauses, then its corresponding assignment
in the modified instance (with all copies set equal) still violates only $b$
clauses.
Conversely, any solution to the modified instance that violates $b$ clauses
can be converted into a solution to the original instance that violates
$O(b)$ clauses: for each variable $t$, assign it to the majority of the
$t_i$ assignments.  If the $t_i$ assignments were not all equal, then there
was at least one violated clause within the $t_i$ cycle, so the re-assignment
of the $\leq d/2$ nonmajority copies and the resulting violation of $O(d)$
original clauses can be charged to that violation.
Therefore the number of violations is preserved up to constant factors,
making this an L-reduction.

If a clause has fewer than $3$ literals, we create a new variable which will be free to take on any value. However, we now have a parity issue, since every variable must be in exactly $k$ clauses and each clause has exactly $3$ variables. Thus we triplicate each original clause (adding two additional copies) allowing there to be exactly three instances of each new free variable. This changes the number of violations by exactly a factor of $3$.

So far this construction has generated a significant number of new clauses, so we need to examine more carefully how it changes the objective to preserve APX-hardness. If our original instance had $l$ variables, $m$ clauses, and an optimal solution of $p$ clauses satisfied then the reduction produces the following changes:
\begin{itemize}
\item Add $3m$ clauses when tripling.
\item For each variable in $i>d$ clauses, we generate  the number$i+O(d)$ variables and clauses to reduce the number of instances to at most $d$ where $d$ is a constant.
\item For each variable appearing in $d>j>3$ clauses (but less than $d$), we generate $j$ new variables and $j$ new clauses
\item Variables cannot appear in less than $3$ clauses, since we tripled all of the clauses.
\item The above steps have replaced every instance of a variable with a new variable, so variables are never duplicated in any clause.
\item Each missing literal from a clause generates a new variable.
\end{itemize}
The new instance has $O(n+m)$ clauses, $O(n+m)$ variables, and the number of violated clauses remains within a constant factor of the original problem.
\end{proof}

To get the balanced property, we will make use of more than $3$ instances of each variable. we take every clause and add its complement. This transformation ensures that the number of copies of a clause is exactly equal to its complement. Further, this maintains the prior solution since the complementary clause will be satisfied if and only if the original clause was satisfied. This doubles the number of clauses in which each variable appears giving the following theorem.

\begin{lemma}
\label{thm:MaxSAT-hardness}
Balanced Max All-Equal EU3SAT-E6 is APX-hard.
\end{lemma}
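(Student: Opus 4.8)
The plan is to obtain Balanced Max All-Equal EU3SAT-E6 from Max All-Equal EU3SAT-E3, which is APX-hard by the previous lemma, via the complementation construction sketched just before the statement. Given an instance $\Phi$ with literal set $\mathcal{L}$ and clause multiset $\mathcal{C}$ (each clause a size-three set of literals on distinct variables, each variable occurring in exactly three clauses), I would build $\Phi'$ whose clause multiset is $\mathcal{C} \cup \{\bar C : C \in \mathcal{C}\}$, taken with multiplicity, where $\bar C := \{\bar a : a \in C\}$ is the complement of $C$; the variable set is left unchanged.

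First I would check that $\Phi'$ lies in the claimed syntactic class. Complementation sends a three-literal clause to a three-literal clause and preserves the set of underlying variables, so $\Phi'$ is still E3 and still U3 (and in particular $\bar C \ne C$, since $\bar C = C$ would put a variable and its negation in the same clause, contradicting U3). A variable $t$ occurring in three clauses of $\Phi$ reappears — as $t$ or $\bar t$ — in the complement of each of those three clauses, hence occurs in exactly $3+3=6$ clauses of $\Phi'$, so $\Phi'$ is E6. Finally $\Phi'$ is balanced: for any clause $D$, its multiplicity in $\Phi'$ equals $(\text{mult.\ of }D\text{ in }\mathcal{C}) + (\text{mult.\ of }\bar D\text{ in }\mathcal{C})$, which is symmetric under $D \mapsto \bar D$, so $D$ and $\bar D$ occur equally often.

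Next I would establish that this is an approximation-preserving (indeed L-) reduction. For the All-Equal predicate, a clause $C=\{a,b,c\}$ is satisfied by $\sigma$ iff $\sigma(a)=\sigma(b)=\sigma(c)$, and since $\sigma(\bar a)=\neg\sigma(a)$ etc., $\bar C$ is satisfied iff $\neg\sigma(a)=\neg\sigma(b)=\neg\sigma(c)$, i.e.\ iff $C$ is. Thus every assignment satisfies exactly twice as many clauses of $\Phi'$ as of $\Phi$, so $\mathrm{OPT}(\Phi')=2\,\mathrm{OPT}(\Phi)$, the same assignment transfers (scaling the objective by $2$ or $1/2$), and the total number of clauses merely doubles; this gives an L-reduction, and composing with the previous lemma yields APX-hardness of Balanced Max All-Equal EU3SAT-E6.

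I do not anticipate a genuine obstacle here. The only points needing care are confirming that the All-Equal semantics make a clause and its complement logically equivalent (immediate from $\neg$ distributing over the conjunction of literal equalities) and confirming that the occurrence count lands at exactly $6$ rather than something instance-dependent (which holds because each clause contributes exactly one complement and each of its three variables reappears, negated, in that complement).
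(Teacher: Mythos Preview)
Your proposal is correct and is essentially the same as the paper's argument: start from the APX-hard Max All-Equal EU3SAT-E3 instance produced by the previous lemma, adjoin the complement of every clause, and observe that this yields a balanced E6 instance in which each assignment satisfies exactly twice as many clauses as before. Your write-up simply fills in the syntactic checks (E3, U3, E6, balanced) and spells out the L-reduction more carefully than the paper does, but the underlying reduction is identical.
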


\section{Approximating Vertex-Weighted Dense Max-CSPs}\label{apdx:vertex-weighted-csp}
In this section, we describe an algorithm for the Dense Max-CSP where vertices are weighted, i.e. the objective we seek to minimize is of the form
\[ \E_{i,j \sim \mu} f_{ij}(x_i,x_j) \]
where $i,j$ are sampled iid from an arbitrary measure $\mu$ over $[n]$. The usual setting in the CSP literature is where $\mu$ is the uniform measure over $[n]$, but we confirm in this Appendix that this is not essential.
\begin{theorem}[Restatement of Theorem~\ref{thm:dense-csp-weighted}]
Suppose that $\Sigma$ is a finite alphabet, $n \ge 1$ is a positive integer, and for every $i,j \in {n \choose 2}$ we have a function $f_{ij} : \Sigma \times \Sigma \to [-M,M]$. Then for any $\epsilon > 0$, there exists an algorithm which runs in time $n^{O(\log |\Sigma|/\epsilon^2)}$ and returns $x_1,\ldots,x_n \in \Sigma$ such that 
\[ \mathbb{E}\left[\mathbb{E}_{i,j \sim \mu} f_{ij}(x_i,x_j)\right] \ge \mathbb{E}_{i,j \sim \mu} f_{ij}(x^*_i,x^*_j) - \epsilon M  \]
for any $x^*_1,\ldots,x^*_n \in \Sigma$, where the outer $\mathbb{E}$ denotes expectation over the randomness of the algorithm. 
\end{theorem}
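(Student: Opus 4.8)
The plan is to follow the correlation-rounding argument of \cite{yoshida2014approximation}, carrying the vertex-weight measure $\mu$ through every step. First I would write down the level-$t$ Sherali--Adams LP relaxation of $\max_x \E_{i,j\sim\mu} f_{ij}(x_i,x_j)$ (with the convention $f_{ii}\equiv 0$, i.e. the objective ranges over $i\neq j$): the variables are a consistent family $\{p_T\}_{|T|\le t}$ of local distributions on $\Sigma^T$, and the objective is the linear functional $\E_{i,j\sim\mu}\,\E_{p_{\{i,j\}}}[f_{ij}]$. Every integral assignment is feasible, so the LP optimum is at least $\mathrm{OPT}:=\max_{x^*}\E_{i,j\sim\mu}f_{ij}(x^*_i,x^*_j)$; moreover the LP has $n^{O(t)}$ variables and constraints and is solved in time $n^{O(t)}$. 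Crucially $\mu$ enters only the objective, not the constraints, so the relaxation and its runtime are unchanged from the uniform case.

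The rounding step is conditioning followed by independent rounding. Sample $m$ uniformly from $\{1,\dots,t-2\}$, sample a multiset $S=(s_1,\dots,s_m)$ with each $s_k\sim\mu$ i.i.d., sample $a_S\in\Sigma^S$ from $p_S$, and for each $i$ draw $x_i$ independently from the conditional marginal $p_{\{i\}\mid x_S=a_S}$ (well-defined since $|S\cup\{i\}|\le t$). The analysis rests on the information-theoretic identity averaged against $\mu$: writing $\Phi_m:=\E_{S\sim\mu^m}\,\E_{i\sim\mu}[H(x_i\mid x_S)]$ under the Sherali--Adams pseudodistribution (a genuine distribution on any set of size $\le t$), the chain rule gives $\Phi_m-\Phi_{m+1}=\E_{S\sim\mu^m}\,\E_{i,j\sim\mu}[I(x_i;x_j\mid x_S)]$, and telescoping over the uniformly random $m$ yields
\[
\E_m\,\E_{S\sim\mu^m}\,\E_{i,j\sim\mu}\bigl[I(x_i;x_j\mid x_S)\bigr]\;=\;\frac{\Phi_1-\Phi_{t-1}}{t-2}\;\le\;\frac{\log|\Sigma|}{t-2}.
\]
Choosing $t=O(\log|\Sigma|/\epsilon^2)$ makes the right-hand side at most $\epsilon^2/4$, which produces exactly the claimed runtime $n^{O(\log|\Sigma|/\epsilon^2)}$. (Coincidences among the $s_k$, or $i$ or $j$ landing in $S$, only zero out the relevant increments and cause no harm; the diagonal $i=j$ terms are nonnegative and hence do not spoil this upper bound.)

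To finish, apply Pinsker's inequality together with Jensen/Cauchy--Schwarz: for off-diagonal pairs, $\E_{i,j\sim\mu}\|p_{\{i,j\}\mid x_S}-p_{\{i\}\mid x_S}\otimes p_{\{j\}\mid x_S}\|_{\mathrm{TV}}\le\sqrt{\tfrac12\,\E_{i,j\sim\mu}I(x_i;x_j\mid x_S)}$, which after taking expectations over $m,S$ is at most $\epsilon/2$. Since $\|f_{ij}\|_\infty\le M$, replacing the true conditional joint $p_{\{i,j\}\mid x_S}$ by the product of conditional marginals perturbs $\E_{i,j\sim\mu}\E f_{ij}$ by at most $2M$ times that total-variation distance. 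Hence the expected value of the rounded assignment is at least $\E_{a_S}[\text{conditional LP objective}]-M\epsilon$, and by the tower property (marginal consistency of the Sherali--Adams solution) $\E_{a_S}[\text{conditional LP objective}]$ equals the LP objective, which is $\ge\mathrm{OPT}$. Rescaling $\epsilon$ by an absolute constant gives the theorem; the stated expectation is over the algorithm's choices of $m$, $S$, $a_S$ and the independent rounding.

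I expect the only real obstacle to be bookkeeping: checking that the correlation-rounding inequality is genuinely unaffected when both the conditioning indices $S$ and the test pair $(i,j)$ are drawn from the non-uniform $\mu$ rather than the uniform distribution on $[n]$ --- in particular that $\Phi_m\le\log|\Sigma|$ holds uniformly in $m$ and that sampling $S$ \emph{with replacement} from $\mu$ loses nothing --- and confirming that the Sherali--Adams relaxation's feasibility, value lower bound, and solve time are all insensitive to $\mu$. All of these hold, but it is worth verifying carefully.
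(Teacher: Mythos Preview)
Your proposal is correct and follows essentially the same correlation-rounding argument as the paper: solve the level-$t$ Sherali--Adams LP (with $\mu$ appearing only in the objective), use the entropy-decrement/chain-rule telescoping bound $\E_{S}\E_{i,j\sim\mu}I(X_i;X_j\mid X_S)\le\log|\Sigma|/t$ to find a good conditioning set, and convert to a loss bound via Pinsker plus Jensen. The only cosmetic difference is that you \emph{sample} $m$ and $S\sim\mu^m$ (and $a_S\sim p_S$) at random, whereas the paper's algorithm \emph{enumerates} all subsets $S$ of the right size and all assignments $x_S\in\Sigma^S$ and returns the best; since the paper's own proof of the correlation-rounding lemma already builds $S$ by i.i.d.\ sampling from $\mu$, the two variants are interchangeable and yield the same guarantee and runtime.
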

The proof of Theorem~\ref{thm:dense-csp} from \cite{mathieu2008yet} does not immediately generalize to the vertex-weighted setting. Fortunately, other algorithms for Dense CSPs can easily be made to work in this situation. To prove Theorem~\ref{thm:dense-csp-weighted} we generalize the analysis in \cite{yoshida2014approximation}. First, we start with the following ``correlation rounding'' Lemma:
\begin{lemma}[\cite{andrea2008estimating,raghavendra2012approximating,barak2011rounding}]\label{lem:corr-rounding}
Suppose $X_1,\ldots,X_n$ are arbitrary random variables valued in $[q]$. Then there exists a set $S$ with $|S| \le \lceil 1/\epsilon \rceil$ such that
\[ \E_{i,j \sim \mu} I(X_i;X_j | X_S) \le \epsilon \]
\end{lemma}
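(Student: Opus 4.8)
The plan is to prove the existence of the conditioning set $S$ by a greedy construction, using the chain rule for mutual information together with the nonnegativity of conditional mutual information. The key quantity to track is the entropy $H(X_S)$ (or rather the expected entropy under the measure $\mu$), which is monotone and bounded, so it can only increase a bounded number of times by a nontrivial amount.

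First I would set $q = |\Sigma|$ (the paper writes $[q]$) and consider building $S$ one element at a time: start with $S_0 = \emptyset$, and having chosen $S_t$, ask whether $\E_{i,j \sim \mu} I(X_i; X_j \mid X_{S_t}) \le \epsilon$. If yes, we stop and output $S = S_t$. If no, then by an averaging argument there must exist some index $j$ (sampled from $\mu$) for which $\E_{i \sim \mu} I(X_i; X_j \mid X_{S_t})$ is large — more precisely, since $\E_{i,j\sim\mu} I(X_i;X_j\mid X_{S_t}) > \epsilon$, there is a fixed $j^*$ with $\E_{i \sim \mu} I(X_i; X_{j^*} \mid X_{S_t}) > \epsilon$. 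Then I set $S_{t+1} = S_t \cup \{j^*\}$. The main point is a potential-function argument: consider $\Phi_t := \E_{i \sim \mu} H(X_i \mid X_{S_t})$. Since conditioning only decreases entropy, $\Phi_t$ is nonincreasing in $t$, and $0 \le \Phi_t \le \log q$. Moreover, by the chain rule, for the newly added $j^*$,
\[
\E_{i \sim \mu}\big[ H(X_i \mid X_{S_t}) - H(X_i \mid X_{S_{t+1}}) \big] = \E_{i \sim \mu} I(X_i; X_{j^*} \mid X_{S_t}) > \epsilon,
\]
so each step of the greedy procedure decreases $\Phi$ by strictly more than $\epsilon$. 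Hence the procedure must terminate after at most $\lceil \log q / \epsilon \rceil$ steps.

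There is a gap between this bound $\lceil \log q/\epsilon\rceil$ and the claimed $\lceil 1/\epsilon\rceil$; to close it I would instead track $\Phi_t$ measured in units where the per-step decrease is compared against $\epsilon$ times the \emph{total} available entropy. Concretely, one can run the argument so that the statement being proven is the slightly weaker (and, for the application in Theorem~\ref{thm:dense-csp-weighted}, equally usable) bound $|S| \le \lceil \log|\Sigma|/\epsilon \rceil$; alternatively, following the normalization in \cite{raghavendra2012approximating,barak2011rounding}, one rescales so that the relevant quantity is the \emph{mutual} information per pair divided by $\log q$, and the greedy step decreases a $[0,1]$-valued potential by more than $\epsilon$, giving $\lceil 1/\epsilon\rceil$. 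I would present the clean entropy version and remark that the constant matches the cited sources after this normalization. I expect the only real subtlety — the ``hard part'' — to be getting the averaging/chain-rule bookkeeping exactly right so that the decrease per step is genuinely lower-bounded by $\epsilon$ (as opposed to $\epsilon$ times something), and in particular making sure the expectation over $i \sim \mu$ and the expectation over $j \sim \mu$ interact correctly; everything else (nonnegativity of $I(\cdot;\cdot\mid\cdot)$, monotonicity of conditional entropy under further conditioning, the chain rule $I(X_i;X_{S_{t+1}}) = I(X_i;X_{S_t}) + I(X_i;X_{j^*}\mid X_{S_t})$) is standard and requires no computation.
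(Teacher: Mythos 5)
Your proposal is correct and takes essentially the same approach as the paper: both exploit the chain-rule identity $H(X_i\mid X_S)-H(X_i\mid X_S,X_j)=I(X_i;X_j\mid X_S)$ and the fact that the potential $\E_{i\sim\mu}H(X_i\mid X_{S_t})$ is nonincreasing and confined to $[0,\log q]$, so it cannot drop by more than $\epsilon$ for more than $\log q/\epsilon$ rounds. The only cosmetic difference is that you pick the next coordinate greedily (choosing a $j^*$ that maximizes $\E_{i\sim\mu}I(X_i;X_{j^*}\mid X_{S_t})$) while the paper samples $k_t\sim\mu$ and argues in expectation over the random set $S_t$; these are interchangeable. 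You are also right to flag the $\log q$ factor: the paper's proof as written yields, for $T=\lceil 1/\epsilon\rceil$, only $\E_{i,j\sim\mu}I(X_i;X_j\mid X_{S_t})\le\epsilon\log q$, equivalently $|S|\le\lceil\log q/\epsilon\rceil$ if one wants the bound $\epsilon$, and this is the form that is in fact consumed in Theorem~\ref{thm:dense-csp-weighted} (whose running time is $n^{O(\log|\Sigma|/\epsilon^2)}$); your suggestion to either state the lemma with $\lceil\log|\Sigma|/\epsilon\rceil$ or normalize entropy by $\log q$ is the right fix.
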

\begin{proof}
This proof is exactly the same as in \cite{raghavendra2012approximating} when we replace the uniform measure by $\mu$; for completeness, we include the proof.
Recall from the chain rule for entropy \cite{cover1999elements} that
\[ H(X_i | X_S) - H(X_i | X_S,X_j) = I(X_i;X_j | X_S)\]
hence
\[ \E_{i,j \sim \mu} H(X_i | X_S,X_j) = \E_{i \sim \mu} H(X_i | X_S) - \E_{i,j \sim \mu} I(X_i;X_j | X_S). \]
Next we form a sequence of random sets by sampling from $\mu$: $S_0$ is the empty set and we form $S_t$ by sampling $k_t \sim \mu$ independently and setting $S_t = S_{t - 1} \cup \{k_t \}$. We see that
\begin{equation}\label{eqn:entropy-decrement}
\E H(X_i | S_{t + 1}) = \E H(X_i | S_t) - \E I(X_i;X_j | S_{t}) 
\end{equation}
where as above $i,j \sim \mu$ independent of $S_t,S_{t + 1}$. 
We know that $\E_{i \sim \mu} H(X_i) \le \log(q)$ since $X_i$ is a random variable valued in an alphabet of size $q$ \cite{cover1999elements}, and that $\E H(X_i | S_t) \ge 0$ for all $t$. Hence by applying \eqref{eqn:entropy-decrement} iteratively, for any $T \ge 1$, we must have
\[ \E I(X_i;X_j | S_t) \le \frac{\log q}{T} \] 
for some $1 \le t \le T$; otherwise the average conditional entropy would become negative. Setting $T = \lceil 1/\epsilon \rceil$ proves the result.
\end{proof}
From this Lemma, we can prove Theorem~\ref{thm:dense-csp-weighted} the same way as in \cite{yoshida2014approximation}. The key point is that the Lemma only references marginal distributions of size $O(1/\epsilon)$, hence it can be used as a rounding algorithm for the Sherali-Adams relaxation of the objective. This relaxation is given explicitly below, where the set of Sherali-Adams pseudodistributions of degree $t$ is simply the collection of jointly consistent marginal distributions of size at most $t$, over which linear functions of the pseudodistribution can be optimized by linear programming in time $n^{O(t)}$ (see \cite{yoshida2014approximation} and references within).
\begin{proof}[Proof of Theorem~\ref{thm:dense-csp-weighted}]
The proof is a straightforward adaption of the argument in \cite{yoshida2014approximation}; see there for more details. The algorithm is as follows:
\begin{enumerate}
    \item Minimize $\tilde{\E}[\mathbb{E}_{i,j \sim \mu}[ f_{ij}(X_i,X_j)]]$ over all Sherali-Adams pseudoexpectations of degree $O(1/\epsilon^2)$ using linear programming. Let $\tilde{\nu}$ denote the minimizing pseudodistribution and $\tilde{\E}$ its pseudoexpectation.
    \item For all $S$ of size $O(1/\epsilon^2)$ and $x_S \in [q]^S$:
    \begin{enumerate}
        \item Sample $x_1,\ldots,x_n$ from the product measure $\bigotimes \mu(X_i | X_S = x_S)$.
        \item Compute the objective value $\E_{i,j \sim \mu} f_{ij}(x_i,x_j)$.
    \end{enumerate}
    \item Return the minimum objective value found in the loop.
\end{enumerate}
To analyze this rounding algorithm, one observes from Pinsker's inequality and Lemma~\ref{lem:corr-rounding} that for some $S$ chosen in the loop that
\[ \E_{x_S \sim \tilde{\nu}(x_S)} \E_{i,j \sim \mu} \TV^2((\tilde{\nu}(X_i | X_S = x_S) \otimes (\tilde{\nu}(X_j | X_S = x_S), \tilde{\nu}(X_i,X_j | X_S = x_S)) \le \epsilon^2. \]
Then by taking the optimal coupling of each marginal over 2 variables with its product measure version and using $\TV(P,Q) = \frac{1}{2} \sup_{|f| \le 1} (\E_P f - \E_Q f)$ we have
\begin{align*} 
&\E_{x_S \sim \tilde{\nu}(x_S)} \E_{i,j \sim \mu} \E_{X \sim \bigotimes \tilde{\nu}(X_i | X_S = x_S)} f_{ij}(X_i,X_j) \\
&\le
\E_{i,j \sim \mu} \tilde{\E}_{X} f_{ij}(X_i,X_j) + 2M \cdot \E_{x_S \sim \tilde{\nu}(x_S)} \E_{i,j \sim \mu} \TV((\tilde{\nu}(X_i | X_S = x_S)\\
&\qquad \qquad \qquad \qquad\qquad \qquad \qquad\qquad \qquad \otimes (\tilde{\nu}(X_j | X_S = x_S), \tilde{\nu}(X_i,X_j | X_S = x_S)) \\
&\le \E_{i,j \sim \mu} \tilde{\E}_{X} f_{ij}(X_i,X_j)  + 2M \cdot \big(\E_{x_S \sim \tilde{\nu}(x_S)} \E_{i,j \sim \mu} \TV^2((\tilde{\nu}(X_i | X_S = x_S) \\
&\qquad \qquad \qquad \qquad\qquad \qquad \qquad\qquad \qquad \otimes (\tilde{\nu}(X_j | X_S = x_S), \tilde{\nu}(X_i,X_j | X_S = x_S))\big)^{1/2} \\
&\le \E_{i,j \sim \mu} \tilde{\E}_{X} f_{ij}(X_i,X_j) + 2M\epsilon
\end{align*}
where in the second step we used Jensen's inequality.
Since the first term on the rhs is a lower bound on the true objective value (since it is the objective value of a relaxation), this proves the result by adjusting the value of $\epsilon$ by a  constant factor.
\end{proof}

\section{Additional Large Diameter Examples}
See Figure~\ref{fig:dodandcubic}.

\begin{figure}
    \centering
    \subfigure[Dodecahedron graph]{\includegraphics[scale=0.45]{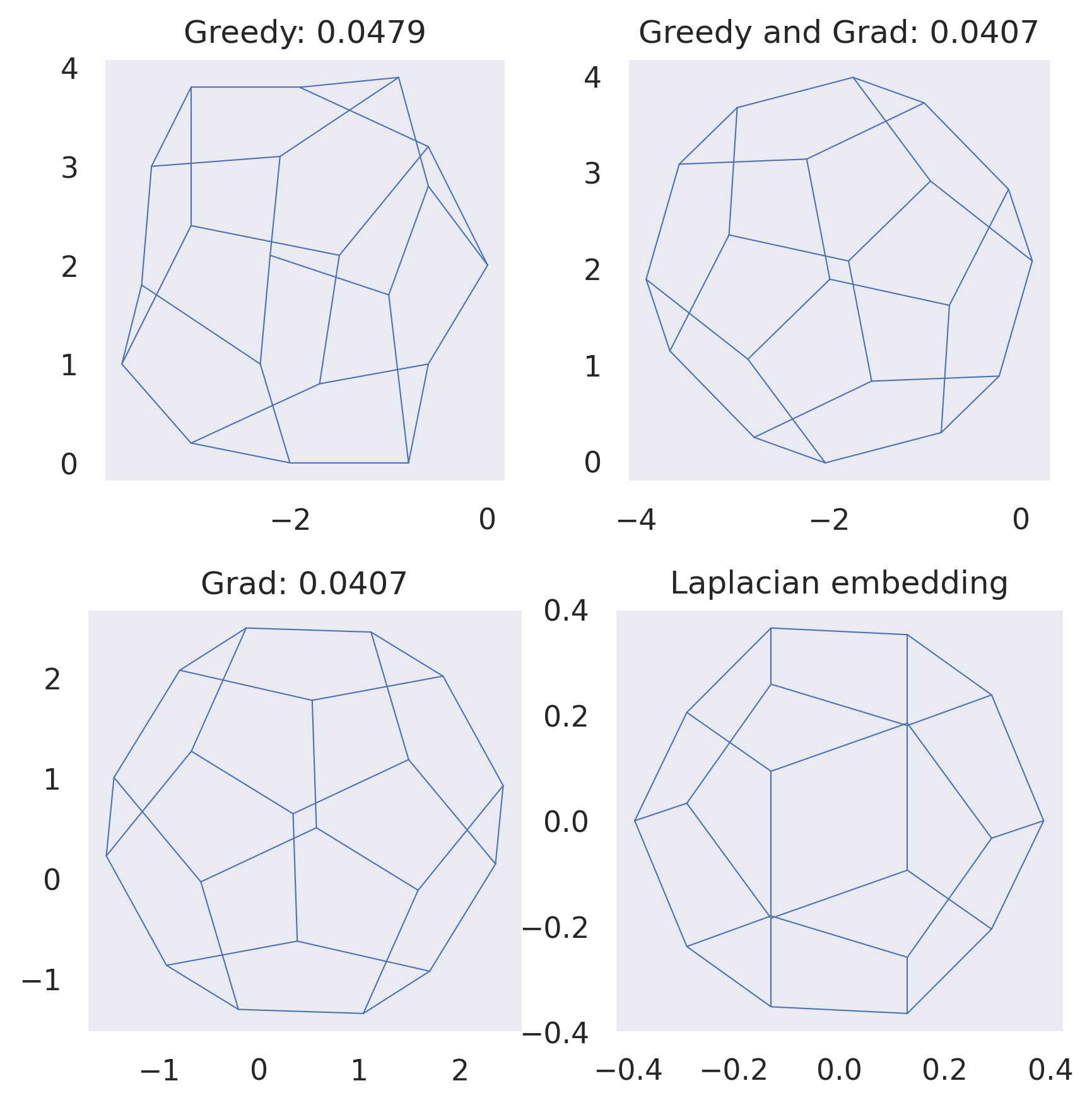}}  \qquad
    \subfigure[Cubic lattice graph]{\includegraphics[scale=0.45]{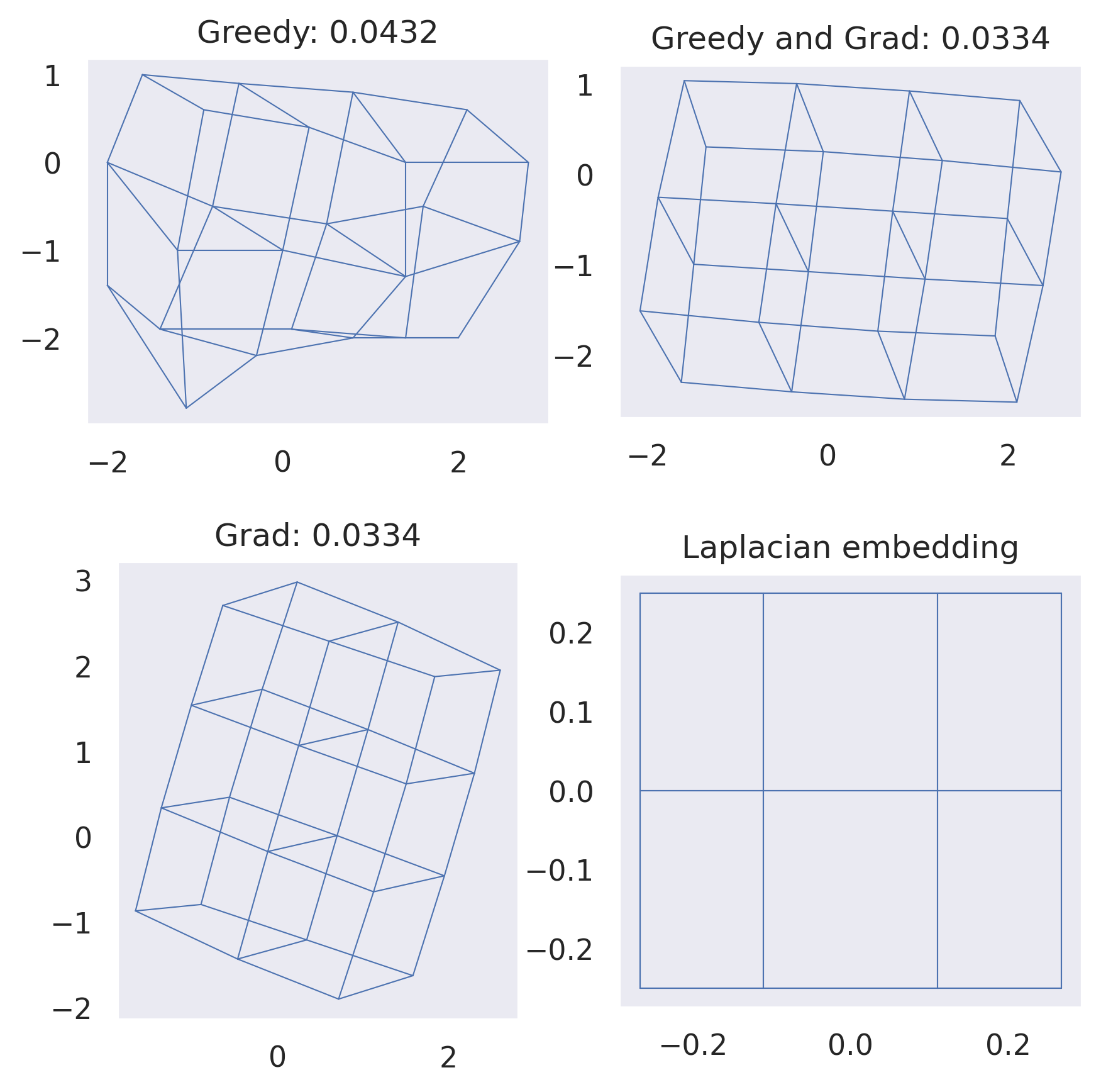}} 
    \caption{Additional Large Diameter Examples}
     \label{fig:dodandcubic}
\end{figure}

\end{document}